\author{
 Dan Alistarh \\
  IST Austria \& ETH Zurich\\
  \texttt{dan.alistarh@ist.ac.at}
  \And
  Demjan Grubic \\
  ETH Zurich \& Google \\
  \texttt{demjangrubic@gmail.com}
  \And
  Jerry Z. Li \\
  MIT\\
  \texttt{jerryzli@mit.edu}
  \And
  Ryota Tomioka \\ 
  Microsoft Research \\
  \texttt{ryoto@microsoft.com}
  \And
  Milan Vojnovic \\
  London School of Economics \\
  \texttt{M.Vojnovic@lse.ac.uk}
 }
\newcommand{\myalg}[1]{%
  \begin{minipage}{.4\linewidth}
    \begin{algorithm}[H]
      \SetAlgoLined
      {\scriptsize
      \KwData{Local copy of the parameter vector $\vec{x}$ }
     \For{each iteration $t$}{
	Let $\tg^i_t$ be an independent stochastic gradient \; 
	 
	 $\boxed{M^i \gets \lit{Encode}(\tg^i (\vec{x}))}$ //encode gradients \nllabel{line:encode}\;
	 
	\lit{broadcast} $M^i$ to all peers\;

	\For{ each peer $\ell$ }
	{
		\lit{receive} $M^\ell$ from peer $\ell$\;
		$\boxed{\widehat{g}^\ell \gets \lit{Decode} ( M^\ell )}$ //decode gradients \nllabel{line:decode}\;
	}
	$\vec{x}_{t + 1} \gets \vec{x}_t - (\eta_t / K) \sum_{\ell = 1}^K \widehat{g}^\ell$\;
 	}
	}
      \caption{Parallel SGD Algorithm.}%
      \label{#1}%
    \end{algorithm}%
  \end{minipage}%
}
\newcommand{\litlow}[1]{\mathord{\mathcode`\-="702D\sf #1\mathcode`\-="2200}}
\newcommand{\lit}[1]{\ensuremath{\litlow{#1}}}
\newcommand{\ve}[1]{\mathbf{#1}}
\newcommand{\X}{\mathcal{X}}
\newcommand{\tQ}{\widetilde{Q}}
\newcommand{\tg}{\widetilde{g}}
\newcommand{\Comp}{\mathrm{Code}}
\newcommand{\Elias}{\mathrm{Elias}}
\newcommand\numberthis{\addtocounter{equation}{1}\tag{\theequation}}
\renewcommand{\vec}[1]{\ensuremath{\bm{#1}}}
\renewcommand{\paragraph}[1]{\vspace{-0.3em}\noindent\textbf{#1}}
\date{}
\title{QSGD: Communication-Efficient SGD \\ via Gradient Quantization and Encoding}
\begin{document}

\maketitle

\begin{abstract}
	Parallel implementations of stochastic gradient descent (SGD) have received significant research attention, thanks to its excellent scalability properties. 
	A fundamental barrier when parallelizing SGD is the high bandwidth cost of communicating gradient updates between nodes; consequently, several lossy compresion heuristics have been proposed, by which nodes only communicate \emph{quantized} gradients. Although effective in practice, these heuristics do not always converge. 
	
	In this paper, we propose \emph{Quantized SGD (QSGD)}, a family of compression schemes with convergence guarantees and good practical performance. 
	QSGD allows the user to smoothly trade off \emph{communication bandwidth} and \emph{convergence time}: nodes can adjust the number of bits sent per iteration, at the cost of possibly higher variance. We show that this trade-off is inherent, in the sense that improving it past some threshold would violate  information-theoretic lower bounds. 
	QSGD guarantees convergence for convex and non-convex objectives,  under asynchrony, and can be extended to stochastic variance-reduced techniques. 

 When applied to  training deep neural networks for image classification and automated speech recognition, QSGD leads to significant reductions in end-to-end training time. For instance, on 16GPUs, we can train the ResNet-152 network to full accuracy on ImageNet 1.8$\times$ faster than the full-precision variant. 
 
%

%
%
%
  
\end{abstract}



\section{Introduction}

The surge of massive data has led to significant interest in \emph{distributed} algorithms for scaling computations in the context of machine learning and optimization. In this context, much attention has been devoted to scaling large-scale \emph{stochastic gradient descent} (SGD) algorithms~\cite{RM51}, which can be briefly   defined as follows. 
Let $f: \R^n \to \R$ be a function which we want to minimize.
We have access to stochastic gradients $\tg$ such that $\E [\tg(\vec{x})] = \nabla f(\vec{x})$. 
A standard instance of SGD will converge towards the minimum by iterating the procedure 
\begin{equation}
\vec{x}_{t + 1} = \vec{x}_{t} - \eta_t \tg(\vec{x}_t), 
\label{eqn:sgd}
\end{equation}
\noindent where $\vec{x}_t$ is the current candidate, and $\eta_t$ is a variable step-size parameter.  
Notably, this arises if we are given i.i.d. data points $X_1, \ldots, X_m$ generated from an unknown distribution $D$, and a loss function $\ell(X, \theta)$, which measures the loss of the model $\theta$ at data point $X$. 
We wish to find a model $\theta^*$ which  minimizes $f(\theta) = \E_{X \sim D} [\ell (X, \theta)]$, the expected loss to the data. 
This framework captures many fundamental tasks, such as neural network training.

%
%
%
%
%
%

In this paper, we focus on \emph{parallel} SGD methods, which have received considerable attention recently due to their high scalability~\cite{BBL11, Adam, Hogwild, DCR15}. 
Specifically, we consider a setting where a large dataset is partitioned among $K$ processors, which collectively minimize a function $f$. 
Each processor maintains a local copy of the parameter vector $\vec{x}_t$; in each iteration, it obtains a new stochastic gradient update (corresponding to its local data). Processors then broadcast their gradient updates to their peers, and aggregate the gradients to compute the new iterate $\vec{x}_{t + 1}$. 

In most current implementations of parallel SGD, in each iteration, each processor must  communicate its entire gradient update to all other processors.
If the gradient vector is dense, each processor will need to send and receive 
$n$ floating-point numbers \emph{per iteration} to/from each peer to communicate the gradients and maintain the parameter vector $\vec{x}$. 
In practical applications, communicating the gradients in each iteration has been observed to be a significant performance bottleneck~\cite{OneBit, Strom, Adam}. 

One popular way to reduce this cost has been to perform \emph{lossy compression of the gradients}~\cite{DistBelief, TensorFlow, CNTK, Buckwild,wen2017terngrad}. A simple implementation is to simply \emph{reduce precision} of the representation, which has been shown to converge under convexity and sparsity assumptions~\cite{Buckwild}. 
A more drastic quantization technique is \emph{1BitSGD}~\cite{OneBit, Strom}, which reduces each  component of the gradient to \emph{just its sign} (one bit), scaled by the average over the coordinates of $\tg$,  accumulating errors locally. 
1BitSGD was experimentally observed to preserve convergence~\cite{OneBit}, under certain conditions; thanks to the reduction in communication, it enabled state-of-the-art scaling of deep neural networks (DNNs) for acoustic modelling~\cite{Strom}. However, it is currently not known if 1BitSGD provides any guarantees, even under strong assumptions, and it is not clear if higher compression is achievable. 

\paragraph{Contributions.} Our focus is understanding the trade-offs between the \emph{communication cost} of data-parallel SGD, and its \emph{convergence guarantees}. We propose a family of algorithms allowing for lossy compression of gradients called  \emph{Quantized SGD (QSGD)}, by which processors can trade-off the number of bits communicated per iteration with the variance added to the process. 

QSGD is built on two algorithmic ideas. The first is an intuitive \emph{stochastic quantization} scheme: given the gradient vector at a processor, we \emph{quantize} each component by randomized rounding to a discrete set of values, in a principled way which preserves the statistical properties of the original. 
The second step is an efficient lossless code for quantized gradients, which exploits their statistical properties to generate efficient encodings. 
Our analysis gives tight bounds on the precision-variance trade-off induced by QSGD. 

At one extreme of this trade-off, we can guarantee that each processor transmits  at most $\sqrt{n} ( \log n + O(1) )$ expected bits per iteration, while increasing variance by at most a $\sqrt{n}$ multiplicative factor.
At the other extreme, we show that each processor can transmit $\leq 2.8 n + 32$ bits per iteration in expectation, while increasing variance by a only a factor of $2$. 
In particular, in the latter regime, compared to full precision SGD, we use $\approx 2.8 n$ bits of communication per iteration as opposed to $32 n$ bits, and guarantee at most $2\times$ more iterations, leading to bandwidth savings of $\approx 5.7 \times$. 

QSGD is fairly general: it can also be shown to converge, under assumptions, to local minima for non-convex objectives, as well as under asynchronous iterations. 
One non-trivial extension we develop is a \emph{stochastic variance-reduced}~\cite{SVRG} variant of QSGD, called QSVRG, which has exponential convergence rate. 

One key question is whether QSGD's compression-variance trade-off is \emph{inherent}: for instance, does any algorithm guaranteeing at most constant variance blowup need to transmit $\Omega(n)$ bits per iteration? 
The answer is positive: improving asymptotically upon this trade-off would break the communication complexity lower bound of distributed mean estimation (see~\cite[Proposition 2]{Zhang13} and~\cite{suresh2016distributed}).

\paragraph{Experiments.} The crucial question is whether, in practice, QSGD can reduce communication cost by enough to offset the overhead of any additional iterations to convergence. The answer is yes. 
We explore the practicality of QSGD on a variety of state-of-the-art datasets and machine learning models: 
we examine its performance in training networks for image classification tasks (AlexNet, Inception, ResNet, and VGG) on the ImageNet~\cite{deng2009imagenet} and CIFAR-10~\cite{krizhevsky2009learning} datasets, as well as on LSTMs~\cite{hochreiter1997long} for speech recognition. 
We implement QSGD in Microsoft CNTK~\cite{CNTK}. 

Experiments show that all these models can significantly benefit from reduced communication when doing multi-GPU training, \emph{with virtually no accuracy loss}, and \emph{under standard parameters}. 
For example, when training AlexNet on 16 GPUs with standard parameters, the reduction in communication time is $4\times$, and the reduction in training to the network's top accuracy is $2.5\times$. 
When training an LSTM on two GPUs, the reduction in communication time is $6.8 \times$, while the reduction in training time to the same target accuracy is $2.7 \times$. Further, even computationally-heavy architectures such as Inception and ResNet can benefit from the reduction in communication: on 16GPUs, QSGD reduces the end-to-end convergence time of ResNet152 by approximately $2\times$. 
Networks trained with QSGD can converge to virtually the same accuracy as full-precision variants, and that gradient quantization may even slightly \emph{improve} accuracy in some settings. 

%

\label{sec:related}
\paragraph{Related Work.} One line of related research studies the \emph{communication complexity} of convex optimization. In particular,~\cite{TL} studied two-processor convex minimization in the same model, provided a lower bound of $\Omega( n (\log n + \log (1 / \epsilon) ) )$ bits on the communication cost of $n$-dimensional convex problems, and proposed a \emph{non-stochastic} algorithm for strongly convex problems, whose communication cost is within a log factor of the lower bound. By contrast, our focus is on  \emph{stochastic} gradient methods. 
Recent work~\cite{AS15} focused on \emph{round complexity} lower bounds on  the number of \emph{communication rounds} necessary for convex learning.   

Buckwild!~\cite{Buckwild} was the first to consider the convergence guarantees of low-precision SGD. 
It gave upper bounds on the error probability of SGD, assuming unbiased stochastic quantization, convexity, and gradient sparsity, and showed significant speedup when solving convex problems on CPUs. 
QSGD refines these results by focusing on the trade-off between communication and convergence. We view quantization as an independent source of variance for SGD, which allows us to employ standard convergence results~\cite{Bubeck}. 
The main differences from Buckwild! are that 1) we focus on the variance-precision trade-off; 2) our results apply to the quantized non-convex case; 3) we validate the practicality of our scheme on neural network training on GPUs. 
%
%
Concurrent work proposes TernGrad~\cite{wen2017terngrad}, which starts from a similar stochastic quantization, but focuses on the case where individual gradient components can have only three possible values. 
They show that significant speedups can be achieved on TensorFlow~\cite{TensorFlow}, while maintaining accuracy within a few percentage points relative to full precision. 
The main differences to our work are: 1) our implementation guarantees convergence under standard assumptions; 
2) we strive to provide a black-box compression technique, with no additional hyperparameters to tune;
3) experimentally, QSGD maintains the same accuracy within the same target number of epochs; for this, we allow gradients to have larger bit width; 
4) our experiments focus on the single-machine multi-GPU case. 

We note that QSGD can be applied to solve the distributed mean estimation problem~\cite{suresh2016distributed, konevcny2017stochastic} with an optimal error-communication trade-off in some regimes. In contrast to the elegant random rotation solution presented in~\cite{suresh2016distributed}, QSGD employs quantization and Elias coding. Our use case is different from the federated learning application of~\cite{suresh2016distributed, konevcny2017stochastic}, and has the advantage of being more efficient to compute on a GPU. 

There is an extremely rich area studying algorithms and systems for efficient distributed large-scale learning, e.g.~\cite{BBL11, DistBelief, TensorFlow, CNTK, Chainer, Hogwild, Buckwild, FireCaffe,ZCL15}. Significant interest has recently been dedicated to \emph{quantized} frameworks, both for inference, e.g.,~\cite{TensorFlow, DeepCompression} and training~\cite{DoReFa, OneBit, BNN, Strom, IBM, Buckwild, zhang2017zipml}.     
In this context,~\cite{OneBit} proposed 1BitSGD, a heuristic for compressing gradients in SGD, inspired by delta-sigma modulation~\cite{DeltaSigma}. It is implemented in Microsoft CNTK, and has a cost of $n$ bits and two floats per iteration. Variants of it were shown to perform well on large-scale Amazon  datasets by~\cite{Strom}. Compared to 1BitSGD, QSGD can achieve asymptotically higher compression, provably converges under standard assumptions, and shows superior practical performance in some cases. 
%

%
%
%
%
%
%


\vspace{-1em}
\section{Preliminaries}
\label{sec:background}
\vspace{-1em}
SGD has many variants, with different preconditions and guarantees. 
Our techniques are rather portable, and can usually be applied in a black-box fashion on top of SGD. For conciseness, we will focus on a basic SGD setup. 
The following assumptions are standard; see e.g.~\cite{Bubeck}.

Let $\mathcal{X} \subseteq \R^n$ be a known convex set, and let $f: \mathcal{X} \to \R$ be differentiable, convex, smooth, and unknown.
We assume repeated access to stochastic gradients of $f$, which on (possibly random) input $\vec{x}$, outputs a direction which is in expectation the correct direction to move in.
Formally:
\begin{definition}
Fix $f: \mathcal{X} \to \R$.
A \emph{stochastic gradient} for $f$ is a random function $\tg (\vec{x})$ so that $\E [\tg (\vec{x}) ] = \nabla f( \vec{x})$.
We say the stochastic gradient has second moment at most $B$ if
$\E [\| \tg \|_2^2] \leq B$ for all $x \in \mathcal{X}$.
We say it has variance at most $\sigma^2$ if $\E [\| \tg (\vec{x}) - \nabla f(\vec{x}) \|_2^2] \leq \sigma^2$ for all $x \in \mathcal{X}$. 
\end{definition}

Observe that any stochastic gradient with second moment bound $B$ is automatically also a stochastic gradient with variance bound $\sigma^2 = B$, since $\E [\| \tg (\vec{x}) - \nabla f(\vec{x})\|^2] \leq \E [\| \tg (\vec{x}) \|^2]$ as long as $\E [\tg (\vec{x})] = \nabla f(\vec{x})$.
Second, in convex optimization, one often assumes a second moment bound when dealing with non-smooth convex optimization, and a variance bound when dealing with smooth convex optimization.
However, for us it will be convenient to consistently assume a second moment bound.
This does not seem to be a major distinction in theory or in practice~\cite{Bubeck}.

Given access to stochastic gradients, and a starting point $\vec{x}_0$, SGD builds iterates $\vec{x}_t$ given by Equation (\ref{eqn:sgd}), projected onto $\mathcal{X}$, where $(\eta_t)_{t \geq 0}$ is a sequence of step sizes.
In this setting, one can show:
\begin{theorem}[\cite{Bubeck}, Theorem 6.3]
\label{thm:sgd}
Let $\mathcal{X} \subseteq \R^n$ be convex, and let $f: \mathcal{X} \to \R$ be unknown, convex, and $L$-smooth.
Let $\vec{x}_0 \in \mathcal{X}$ be given, and let $R^2 = \sup_{x \in \mathcal{X}} \| \vec{x} - \vec{x}_0 \|^2$.
Let $T > 0$ be fixed.
Given repeated, independent access to stochastic gradients with variance bound $\sigma^2$ for $f$, SGD with initial point $\vec{x}_0$ and constant step sizes $\eta_t = \frac{1}{L + 1 / \gamma}$, where $\gamma = \frac{R}{\sigma} \sqrt{\frac{2}{T}}$, achieves
\begin{eqnarray}
\label{eq:sgd}
\E \left[ f \left( \frac{1}{T} \sum_{t = 0}^T \vec{x}_t \right) \right] - \min_{\vec{x} \in \mathcal{X}} f(\vec{x}) \leq R \sqrt{\frac{2 \sigma^2}{T}} + \frac{L R^2}{T} \; .
\end{eqnarray}
\end{theorem}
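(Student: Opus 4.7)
The plan is to follow the standard online-learning style analysis for projected SGD on smooth convex objectives, which ultimately hinges on the smoothness inequality $\|\nabla f(\vec{x})\|^2 \leq 2L(f(\vec{x}) - f(\vec{x}^*))$ to turn a second-moment bound into a variance bound.

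First, I would expand the squared distance to a minimizer $\vec{x}^* \in \arg\min_{\vec{x}\in\mathcal{X}} f(\vec{x})$. Using the non-expansiveness of the projection onto $\mathcal{X}$ and the update rule $\vec{x}_{t+1} = \Pi_{\mathcal{X}}(\vec{x}_t - \eta \tg(\vec{x}_t))$, one has
\begin{equation*}
\|\vec{x}_{t+1} - \vec{x}^*\|^2 \leq \|\vec{x}_t - \vec{x}^*\|^2 - 2\eta\, \tg(\vec{x}_t)^{\top}(\vec{x}_t - \vec{x}^*) + \eta^2 \|\tg(\vec{x}_t)\|^2.
\end{equation*}
Taking conditional expectation on $\vec{x}_t$ and using $\E[\tg(\vec{x}_t)\mid \vec{x}_t] = \nabla f(\vec{x}_t)$ together with convexity $f(\vec{x}_t) - f(\vec{x}^*) \leq \nabla f(\vec{x}_t)^{\top}(\vec{x}_t - \vec{x}^*)$, I rearrange to
\begin{equation*}
\E\bigl[f(\vec{x}_t) - f(\vec{x}^*)\bigr] \leq \tfrac{1}{2\eta}\bigl(\E\|\vec{x}_t - \vec{x}^*\|^2 - \E\|\vec{x}_{t+1} - \vec{x}^*\|^2\bigr) + \tfrac{\eta}{2}\E\|\tg(\vec{x}_t)\|^2.
\end{equation*}

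Second comes the crux: controlling $\E\|\tg(\vec{x}_t)\|^2$ without a second-moment assumption. I decompose $\E\|\tg\|^2 = \|\nabla f\|^2 + \E\|\tg - \nabla f\|^2$ and invoke the standard smoothness-convexity inequality $\|\nabla f(\vec{x})\|^2 \leq 2L(f(\vec{x}) - f(\vec{x}^*))$ to obtain $\E\|\tg(\vec{x}_t)\|^2 \leq 2L\,\E[f(\vec{x}_t) - f(\vec{x}^*)] + \sigma^2$. Summing the previous display for $t = 0, \ldots, T-1$ telescopes the distance terms (bounded by $R^2$) and gives
\begin{equation*}
(1 - \eta L)\sum_{t=0}^{T-1} \E[f(\vec{x}_t) - f(\vec{x}^*)] \leq \frac{R^2}{2\eta} + \frac{\eta T \sigma^2}{2}.
\end{equation*}
Dividing by $T$ and applying Jensen's inequality with $\bar{\vec{x}} = \frac{1}{T}\sum_{t=0}^{T-1}\vec{x}_t$ turns the left side into an upper bound on $\E[f(\bar{\vec{x}}) - f(\vec{x}^*)]$.

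Finally, I substitute the prescribed step size $\eta = 1/(L + 1/\gamma)$ with $\gamma = (R/\sigma)\sqrt{2/T}$, which makes $1/(1-\eta L) = \gamma L + 1$, and simplify the two resulting terms to recover the advertised rate $R\sqrt{2\sigma^2/T} + LR^2/T$. The only real obstacle is the algebraic tuning in this final step: the clean split relies on balancing the $R^2/(2\eta T)$ term (which contributes the $LR^2/T$ piece once $\eta$ saturates at $1/L$) against the $\eta \sigma^2/2$ term (which contributes the $R\sigma/\sqrt{T}$ piece when the variance dominates). Everything else is a routine projected-SGD telescoping argument; the essential ingredient beyond the non-smooth case is the use of smoothness in step two to avoid a second-moment assumption on $\tg$.
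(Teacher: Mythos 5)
Your high-level plan (telescoping plus using smoothness to avoid a second-moment assumption) is reasonable, but as written the argument has two genuine gaps, and the second one is fatal to the claimed constants. The decisive issue is the last step. From your display, dividing by $1-\eta L$ multiplies \emph{both} right-hand terms by $\frac{1}{1-\eta L}=1+\gamma L$. The noise term is fine: $(1+\gamma L)\frac{\eta\sigma^2}{2}=\frac{\gamma\sigma^2}{2}$ since $\eta=\gamma/(1+\gamma L)$. But the telescoped term becomes $(1+\gamma L)\frac{R^2}{2\eta T}=\frac{R^2}{2T}\bigl(1+\gamma L\bigr)\bigl(L+\tfrac{1}{\gamma}\bigr)=\frac{R^2}{2T}\bigl(2L+\tfrac{1}{\gamma}+\gamma L^2\bigr)$, and the extra piece $\frac{\gamma L^2R^2}{2T}=\frac{L^2R^3}{\sqrt{2}\,\sigma\,T^{3/2}}$ is \emph{not} dominated by $R\sqrt{2\sigma^2/T}+LR^2/T$: it diverges as $\sigma\to 0$ while the stated bound stays bounded. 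With the step size fixed at $\eta=1/(L+1/\gamma)$, $\eta L$ approaches $1$ exactly in that low-noise regime, so the $(1-\eta L)^{-1}$ inflation cannot be absorbed; the ``clean split'' you invoke does not happen. (Capping $\eta\le 1/(2L)$ would fix this up to constants, but that is a different algorithm than the one in the statement.)

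The second gap is your inequality $\|\nabla f(\vec{x})\|^2\le 2L\,(f(\vec{x})-f(\vec{x}^*))$: it requires $\nabla f(\vec{x}^*)=0$, whereas here $\vec{x}^*$ minimizes $f$ over $\mathcal{X}$, and $R^2=\sup_{x\in\mathcal{X}}\|\vec{x}-\vec{x}_0\|^2<\infty$ forces $\mathcal{X}$ to be bounded, so the minimizer can lie on the boundary with $\nabla f(\vec{x}^*)\neq 0$ (e.g.\ an affine $f$ on an interval), in which case your bound on $\E\|\tg(\vec{x}_t)\|^2$ fails; what smoothness actually gives in the constrained case is $\|\nabla f(\vec{x})-\nabla f(\vec{x}^*)\|^2\le 2L\,(f(\vec{x})-f(\vec{x}^*))$, which is not enough for your decomposition. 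Note the paper does not prove this theorem; it cites Bubeck's Theorem 6.3, whose proof takes a different route that avoids both problems: it uses smoothness through $f(\vec{x}_{t+1})\le f(\vec{x}_t)+\nabla f(\vec{x}_t)^{\top}(\vec{x}_{t+1}-\vec{x}_t)+\frac{L}{2}\|\vec{x}_{t+1}-\vec{x}_t\|^2$ combined with the three-point inequality for the projected step, so that $\frac{L}{2}\|\vec{x}_{t+1}-\vec{x}_t\|^2$ is absorbed into $-\frac{1}{2\eta}\|\vec{x}_{t+1}-\vec{x}_t\|^2$ with slack $-\frac{1}{2\gamma}\|\vec{x}_{t+1}-\vec{x}_t\|^2$, which in turn soaks up the noise cross term via Young's inequality; this yields $\frac{R^2}{2\eta T}+\frac{\gamma\sigma^2}{2}$ directly, with no $(1-\eta L)^{-1}$ factor and no stationarity assumption at $\vec{x}^*$. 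Reworking your step two and the final balancing along those lines would close the gaps.
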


\paragraph{Minibatched SGD.}
A modification to the SGD scheme presented above often observed in practice is a technique known as \emph{minibatching}.
In minibatched SGD, updates are of the form $\vec{x}_{t + 1} = \Pi_{\mathcal{X}} (\vec{x}_t - \eta_t \widetilde{G}_t (\vec{x}_t))$, where $\widetilde{G}_t (\vec{x}_t) = \frac{1}{m} \sum_{i = 1}^m \tg_{t, i}$, and where each $\tg_{t, i}$ is an independent stochastic gradient for $f$ at $\vec{x}_t$.
It is not hard to see that if $\tg_{t, i}$ are stochastic gradients with variance bound $\sigma^2$, then the $\widetilde{G}_t$  is a stochastic gradient with variance bound $\sigma^2 / m$.
By inspection of Theorem \ref{thm:sgd}, as long as the first term in (\ref{eq:sgd}) dominates, minibatched SGD requires $1 / m$ fewer iterations to converge.

\paragraph{Data-Parallel SGD.}
We consider synchronous \emph{data-parallel} SGD, modelling real-world multi-GPU systems, and focus on the communication cost of SGD in this setting.
We have a set of $K$ processors $p_1, p_2, \ldots, p_K$ who proceed in synchronous steps, and communicate using point-to-point messages. 
Each processor maintains a local copy of a vector $\vec{x}$ of dimension $n$, representing the current estimate of the minimizer, and has access to private, independent stochastic gradients for $f$.

\begin{figure}[tb]
\label{fig:tradeoff}
\null\hfill \myalg{algo:sgd} \hfill \begin{minipage}{.4\linewidth}{\includegraphics[height=1.4cm]{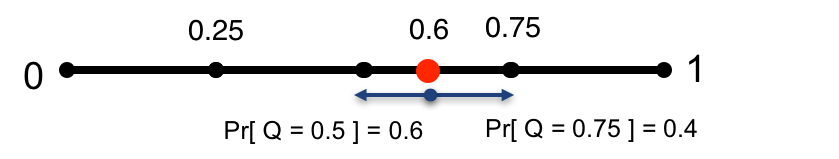} \caption{An illustration of generalized stochastic quantization with $5$ levels.}}\label{fig:illustration}\end{minipage} \hfill\null
\end{figure}
%
%

In each synchronous iteration, described in Algorithm~\ref{algo:sgd}, each processor aggregates the value of $\vec{x}$, then obtains random gradient updates for each component of $\vec{x}$, then communicates these updates to  all peers, and finally aggregates the received updates and applies them locally. Importantly, we add \emph{encoding} and \emph{decoding} steps for the gradients before and after send/receive in lines~\ref{line:encode} and~\ref{line:decode}, respectively. 
In the following, whenever describing a variant of SGD, we assume the above general pattern, and only specify the \emph{encode/decode} functions. Notice that the decoding step does not necessarily recover the original gradient $\tg^\ell$; instead, we usually apply an \emph{approximate} version. 

When the encoding and decoding steps are the identity (i.e., no encoding / decoding), we shall refer to this algorithm as \emph{parallel SGD}.
In this case, it is a simple calculation to see that at each processor, if $\vec{x}_t$ was the value of $\vec{x}$ that the processors held before iteration $t$, then the updated value of $\vec{x}$ by the end of this iteration is $\vec{x}_{t + 1} = \vec{x}_t - (\eta_t / K) \sum_{\ell = 1}^K \tg^\ell (\vec{x}_t)$, where each $\tg^\ell$ is a stochatic gradient.
In particular, this update is merely a minibatched update of size $K$.
Thus, by the discussion above, and by rephrasing Theorem \ref{thm:sgd}, we have the following corollary:

\begin{corollary}
\label{cor:sgd-conv}
Let $\mathcal{X}, f, L, \vec{x}_0$, and $R$ be as in Theorem \ref{thm:sgd}.
Fix $\epsilon > 0$.
Suppose we run parallel SGD on $K$ processors, each with access to independent stochastic gradients with second moment bound $B$, with step size $\eta_t = 1 / ( L + \sqrt{K} / \gamma)$, where $\gamma$ is as in Theorem \ref{thm:sgd}.
Then if
\begin{equation}
\label{eq:sgd-conv}
T = O \left( R^2 \cdot \max \left( \frac{2B}{K \epsilon^2} , \frac{L}{\epsilon} \right) \right),
\textnormal{ then }\E \left[ f \left( \frac{1}{T} \sum_{t = 0}^T \vec{x}_t \right) \right] - \min_{\vec{x} \in \mathcal{X}} f(\vec{x}) \leq \epsilon.
\end{equation}
\end{corollary}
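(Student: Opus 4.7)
The plan is to reduce the corollary directly to Theorem \ref{thm:sgd} by viewing parallel SGD on $K$ processors as ordinary SGD driven by a minibatched stochastic gradient of size $K$. As the paragraph preceding the statement observes, the update in each iteration is $\vec{x}_{t+1} = \vec{x}_t - (\eta_t/K)\sum_{\ell=1}^{K}\tg^\ell(\vec{x}_t)$, i.e.\ a minibatched SGD step with $\widetilde{G}_t(\vec{x}_t) = (1/K)\sum_{\ell=1}^{K}\tg^\ell(\vec{x}_t)$. It therefore suffices to bound the variance of $\widetilde{G}_t$ and invoke Theorem \ref{thm:sgd} on a single ``virtual'' process.

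First I would convert the second-moment hypothesis into a variance bound: since each $\tg^\ell$ is unbiased with $\E[\|\tg^\ell\|_2^2] \le B$, it has variance at most $B$ by the remark following the definition of stochastic gradients. By independence across processors, $\widetilde{G}_t$ is then unbiased with variance at most $B/K$, exactly the variance-reduction of i.i.d.\ averaging recorded in the ``Minibatched SGD'' paragraph. Next I would apply Theorem \ref{thm:sgd} with effective variance $\sigma^2 = B/K$; the step size in the corollary is precisely what one obtains by plugging the effective noise level $\sqrt{B/K}$ into the $\gamma = (R/\sigma)\sqrt{2/T}$ prescription of Theorem \ref{thm:sgd}. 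The theorem then yields
$$
\E\!\left[f\!\left(\frac{1}{T}\sum_{t=0}^{T}\vec{x}_t\right)\right] - \min_{\vec{x}\in\mathcal{X}} f(\vec{x}) \;\le\; R\sqrt{\frac{2B}{KT}} \;+\; \frac{LR^2}{T}.
$$

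Finally I would force each summand on the right to be at most $\epsilon/2$: the first requires $T \ge 8R^2 B/(K\epsilon^2)$, and the second requires $T \ge 2LR^2/\epsilon$; taking $T$ at least the maximum of these two lower bounds and absorbing the numerical constants into the big-$O$ yields precisely (\ref{eq:sgd-conv}). There is no real obstacle here: the entire content of the corollary is the $1/K$ variance reduction of i.i.d.\ averaging combined with Theorem \ref{thm:sgd}, and the only bookkeeping care is to track which of the two error terms dominates in each regime, i.e.\ whether $K \epsilon / B$ is large or small compared to $L / B$, which is exactly what the ``$\max$'' in the iteration count encodes.
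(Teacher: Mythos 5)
Your proposal is correct and follows exactly the paper's own route: the paper proves this corollary simply by observing that parallel SGD is a size-$K$ minibatched update, so the averaged gradient has variance at most $B/K$, and then rephrasing Theorem \ref{thm:sgd} with this effective variance and balancing the two error terms to get the stated $T$. Nothing in your argument deviates from that, so there is no gap to report.
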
 

In most reasonable regimes, the first term of the max in (\ref{eq:sgd-conv}) will dominate the number of iterations necessary.
Specifically, the number of iterations will depend linearly on the second moment bound $B$.


\vspace{-1.5em}
\section{Quantized Stochastic Gradient Descent (QSGD)}
\label{sec:randomSGD}
\vspace{-.5em}
In this section, we present our main results on stochastically quantized SGD. Throughout,  $\log$  denotes the base-2 logarithm, and the number of bits to represent a float is $32$. For any vector $\vec{v} \in \R^n$, we let $\| \vec{v} \|_0$ denote the number of nonzeros of $\vec{v}$.
For any string $\omega \in \{0, 1\}^*$, we will let $|\omega|$ denote its length.
For any scalar $x \in \R$, we let $\sgn{x} \in \{-1, +1\}$ denote its sign, with  $\sgn{0} = 1$.

\vspace{-1.5em}
\subsection{Generalized Stochastic Quantization and Coding}
\vspace{-.5em}
\label{sec:quant2}
\paragraph{Stochastic Quantization.} We now consider a general, parametrizable lossy-compression scheme for stochastic gradient vectors. 
The quantization function is denoted with $Q_s(\vec{v})$, where $s\geq 1$ is a tuning parameter, corresponding to the number of quantization levels we implement. Intuitively, we define $s$ uniformly distributed levels between $0$ and $1$, to which each value is quantized in a way which preserves the value in expectation, and introduces minimal variance.  Please see Figure~1. 

For any $\vec{v}\in \R^n$ with $\vec{v} \neq \vec{0}$, $Q_s(\vec{v})$ is defined as
\begin{equation}
Q_s(v_i) = \| \vec{v} \|_2 \cdot \sgn{v_i} \cdot \xi_i(\vec{v},s) \; , \label{equ:quant2}
\end{equation}
where $\xi_i(\vec{v},s)$'s are independent random variables defined as follows. Let $0 \leq \ell < s$ be an integer such that $|v_i|/\| \vec{v} \|_2 \in [ \ell / s, (\ell + 1) / s ]$. That is, $[ \ell / s, (\ell + 1) / s ]$ is the quantization interval corresponding to $|v_i|/\| \vec{v} \|_2$. 
Then 
\[
\xi_i(\vec{v},s) = \left\{ \begin{array}{ll}
         \ell / s & \mbox{with probability $1 - p\left(\frac{|v_i|}{\| \vec{v} \|_2},s\right)$};\\
         (\ell + 1) / s & \mbox{otherwise}. \end{array} \right.
\]
Here, $p(a,s) = a s - \ell$ for any $a\in [0,1]$. If $\vec{v} = \vec{0}$, then we define $Q(\vec{v},s) = \vec{0}$.

The distribution of $\xi_i(\vec{v},s)$ has minimal variance over distributions with support $\{0, 1 / s, \ldots, 1\}$, and its expectation satisfies   $\E [\xi_i(\vec{v},s)] = |v_i|/\| \vec{v} \|_2$. Formally, we can show: 

\begin{lemma}
\label{lem:quant2-facts}
For any vector $\vec{v} \in \R^n$, we have that (i) $\E [Q_s(\vec{v})] = \vec{v}$ (unbiasedness), (ii)
$
\E [\| Q_s (\vec{v}) - \vec{v} \|_2^2] \leq \min( n/{s^2},\sqrt{n}/s) \| \vec{v} \|_2^2 
$ (variance bound), and (iii) $\E [\| Q_s (\vec{v}) \|_0 ] \leq s (s  + \sqrt{n})$ (sparsity). 
\end{lemma}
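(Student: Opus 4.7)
The plan is to prove the three items by direct computation on each coordinate, exploiting that the randomness is independent across coordinates, and that $\xi_i(\vec{v},s)$ is a Bernoulli-type random variable on a two-point support. Throughout I would set $a_i = |v_i|/\|\vec{v}\|_2$, $\ell_i = \lfloor s a_i \rfloor$, and $q_i = s a_i - \ell_i \in [0,1]$, so that $\xi_i$ takes the value $\ell_i/s$ with probability $1 - q_i$ and $(\ell_i+1)/s$ with probability $q_i$.

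For (i), a one-line calculation gives $\E[\xi_i] = (\ell_i/s)(1-q_i) + ((\ell_i+1)/s)q_i = \ell_i/s + q_i/s = a_i$. Multiplying by $\|\vec{v}\|_2 \sgn(v_i)$ recovers $v_i$, so $\E[Q_s(\vec{v})] = \vec{v}$ coordinatewise.

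For (ii), since $Q_s(v_i) - v_i = \|\vec{v}\|_2 \sgn(v_i)(\xi_i - a_i)$ and $\E[\xi_i]=a_i$, independence across coordinates gives
\[
\E[\|Q_s(\vec{v}) - \vec{v}\|_2^2] = \|\vec{v}\|_2^2 \sum_{i=1}^n \mathrm{Var}(\xi_i) = \frac{\|\vec{v}\|_2^2}{s^2} \sum_{i=1}^n q_i(1-q_i).
\]
For the $n/s^2$ bound I would just use $q_i(1-q_i) \leq 1$ (or $\leq 1/4$) termwise. The $\sqrt{n}/s$ bound is the more delicate one: I would use the monotone bound $q_i(1-q_i) \leq q_i \leq s a_i$ (the last inequality because $\ell_i \geq 0$), then apply Cauchy--Schwarz to get
\[
\sum_{i=1}^n q_i(1-q_i) \leq s \sum_{i=1}^n a_i \leq s \sqrt{n \sum_{i=1}^n a_i^2} = s\sqrt{n},
\]
using $\sum a_i^2 = 1$. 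Taking the minimum of the two estimates yields the claimed bound.

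For (iii), note that $\xi_i \neq 0$ iff either $\ell_i \geq 1$, or $\ell_i = 0$ and the Bernoulli toss lands on $(\ell_i+1)/s$; so $\Pr[\xi_i \neq 0] = 1$ when $a_i \geq 1/s$, and $\Pr[\xi_i \neq 0] = s a_i$ when $a_i < 1/s$. Splitting coordinates into $S_1 = \{i : a_i \geq 1/s\}$ and $S_0 = \{i : a_i < 1/s\}$,
\[
\E[\|Q_s(\vec{v})\|_0] = |S_1| + s \sum_{i \in S_0} a_i.
\]
A Markov-type count gives $|S_1| \leq s^2 \sum_{i \in S_1} a_i^2 \leq s^2$, and Cauchy--Schwarz gives $\sum_{i \in S_0} a_i \leq \sqrt{|S_0|}\sqrt{\sum_{i \in S_0} a_i^2} \leq \sqrt{n}$, so the sum is at most $s^2 + s\sqrt{n} = s(s+\sqrt{n})$.

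The only step that is not completely routine is the second variance bound, where one has to resist the temptation to use $q_i(1-q_i) \leq 1/4$ (which only yields $n/(4s^2)$) and instead apply the asymmetric bound $q_i(1-q_i) \leq s a_i$ before invoking Cauchy--Schwarz on $\|\vec{a}\|_1 \leq \sqrt{n}\,\|\vec{a}\|_2$; this is what converts the trivial $n/s^2$ bound into the tighter $\sqrt{n}/s$ bound that becomes the dominant estimate when $s$ is small relative to $\sqrt{n}$.
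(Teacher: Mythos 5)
Your proposal is correct and follows essentially the same route as the paper: per-coordinate variance of $\xi_i$ bounded via $p(1-p)\leq \min(1, s a_i)$ followed by the $\ell_1$--$\ell_2$ inequality $\sum_i a_i \leq \sqrt{n}$ for (ii), and the threshold split at $a_i \geq 1/s$ (at most $s^2$ large coordinates, Cauchy--Schwarz on the small ones) for (iii). The only difference is cosmetic: the paper bounds $\E[\|Q_s(\vec{v})\|_2^2]$ and subtracts $\|\vec{v}\|_2^2$ rather than working with the variance directly, and your sparsity computation uses the correct nonzero probability $s a_i$ for small coordinates, consistent with the stated bound $s(s+\sqrt{n})$.
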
 


\paragraph{Efficient Coding of Gradients.} 
Observe that for any vector $\vec{v}$, the output of $Q_s( \vec{v} )$ is naturally expressible by a tuple $(\| \vec{v} \|_2, \vec{\sigma}, \vec{\zeta})$, where $\vec{\sigma}$ is the vector of signs of the $\vec{v}_i$'s and $\vec{\zeta}$ is the vector of integer values $s \cdot \xi_i(\vec{v},s)$. The key idea behind the coding scheme is that not all integer values $s \cdot \xi_i(\vec{v},s)$ can be equally likely: in particular, \emph{larger integers are less frequent}. We will exploit this via a specialized \emph{Elias} integer encoding \cite{Elias75}. 

Intuitively, for any positive integer $k$, its code, denoted $\Elias (k)$, starts from the binary representation of $k$, to which it prepends the \emph{length} of this representation. 
It then recursively encodes this prefix. 
We show that for any positive integer $k$, the length of the resulting code has $|\Elias (k)| = \log k + \log \log k + \ldots + 1 \leq (1 + o(1)) \log k + 1$, and that encoding and decoding can be done efficiently. 

Given a gradient vector represented as the triple $(\| \vec{v} \|_2, \vec{\sigma}, \vec{\zeta})$, with $s$ quantization levels, our coding outputs a string $S$ defined as follows. First, it uses $32$ bits to encode $\| \vec{v} \|_2$. It proceeds to encode using Elias recursive coding the position of the first nonzero entry of $\vec{\zeta}$. It then appends a bit denoting $\vec{\sigma}_i$ and follows that with $\Elias (s \cdot \xi_i( \vec{v}, s ))$. Iteratively, it proceeds to encode the distance from the current coordinate of $\vec{\zeta}$ to the next nonzero, and encodes the $\vec{\sigma}_i$ and $\vec{\zeta}_i$ for that coordinate in the same way. The decoding scheme is straightforward: we first read off $32$ bits to construct $\| \vec{v} \|_2$, then iteratively use the decoding scheme for Elias recursive coding to read off the positions and values of the nonzeros of $\vec{\zeta}$ and $\vec{\sigma}$.
The properties of the quantization and of the encoding imply the following. 
\begin{theorem}
\label{thm:quant2}
Let $f: \R^n \to \R$ be fixed, and let $\vec{x} \in \R^n$ be arbitrary. Fix $s \geq 2$ quantization levels.
If $\tg (\vec{x})$ is a stochastic gradient for $f$ at $\vec{x}$ with second moment bound $B$, then $Q_s(\tg (\vec{x}))$ is a stochastic gradient for $f$ at $\vec{x}$ with variance bound $  \min \left( \frac{n}{s^2} , \frac{\sqrt{n}}{s } \right) B$.
Moreover, there is an encoding scheme so that in expectation, the number of bits to communicate $Q_s(\tg (\vec{x}))$ is upper bounded by
\[
\left( 3 +  \left(\frac{3}{2} + o(1)\right) \log \left( \frac{2(s^2 + n)}{s(s + \sqrt{n})} \right)  \right) s(s + \sqrt{n}) + 32.
\]
\end{theorem}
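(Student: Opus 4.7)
The plan splits the theorem into the probabilistic claim about $Q_s(\tg(\vec x))$ and the coding claim about the expected code length.

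For the probabilistic claim, I would simply condition on $\tg(\vec x)$ and apply Lemma \ref{lem:quant2-facts} pointwise. Unbiasedness is immediate from part (i) by the tower property. For the variance, expanding
\[
\|Q_s(\tg)-\nabla f\|_2^2 = \|Q_s(\tg)-\tg\|_2^2 + 2\langle Q_s(\tg)-\tg,\,\tg-\nabla f\rangle + \|\tg-\nabla f\|_2^2
\]
and conditioning on $\tg$ kills the cross term, after which part (ii) bounds the first term by $\min(n/s^2,\sqrt n/s)\cdot\E\|\tg\|_2^2\le \min(n/s^2,\sqrt n/s)\,B$; the residual $\E\|\tg-\nabla f\|_2^2\le B$ is absorbed into the stated variance constant as a lower-order term coming from the underlying stochastic gradient.

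For the coding claim, I first unpack the encoding of $(\|\vec v\|_2,\vec\sigma,\vec\zeta)$ with $\vec\zeta := s\,\xi(\vec v,s)$: the encoder spends $32$ bits on $\|\vec v\|_2$, and then for each coordinate $i$ in the nonzero support $S\subseteq[n]$ it emits an Elias recursive code for the gap to the previous nonzero, one sign bit, and an Elias recursive code for $\zeta_i\in\{1,\dots,s\}$. Using $|\Elias(k)|\le(1+o(1))\log k+1$ componentwise yields the deterministic upper bound
\[
32 + 3|S| + (1+o(1))\Bigl(\sum_{i\in S}\log(\mathrm{gap}_i) + \sum_{i\in S}\log\zeta_i\Bigr).
\]
Because the gaps sum to at most $n$ and $\sum_{i\in S}\log\zeta_i = \tfrac12\sum_{i\in S}\log\zeta_i^2$, Jensen's inequality gives $\sum\log(\mathrm{gap}_i)\le|S|\log(n/|S|)$ and $\sum\log\zeta_i\le\tfrac{|S|}{2}\log(U/|S|)$, where $U:=\sum_{i\in S}\zeta_i^2$. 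I then take expectations: Lemma \ref{lem:quant2-facts}(iii) gives $\E|S|\le s(s+\sqrt n)$, and a short calculation from parts (i)--(ii) gives $\E U = s^2(1+\min(n/s^2,\sqrt n/s))\le s^2+n$. Since $g(N,U):=N\log(U/N)$ is jointly concave on the positive orthant, Jensen applied jointly yields $\E[|S|\log(n/|S|)]\le\E|S|\log(n/\E|S|)$ and $\E[|S|\log(U/|S|)]\le\E|S|\log(\E U/\E|S|)$. Substituting these bounds, using $n\le s^2+n$ to fold the two logs together into a combined coefficient of $\tfrac32+o(1)$, and absorbing a small $\log 2$ slack into the argument of the log, produces the claimed bit count.

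The main obstacle is the joint Jensen step: $|S|$ and $U$ are correlated functions of the common randomness in $\xi(\vec v,s)$, so Jensen cannot be applied in either variable in isolation. Verifying that $(N,U)\mapsto N\log(U/N)$ is jointly concave (by checking that its $2\times 2$ Hessian is negative semidefinite) provides a clean bypass. Everything else is careful bookkeeping of additive constants and of the $o(1)$ slack coming from the Elias recursive coding length bound.
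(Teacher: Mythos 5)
Your proposal is correct in outline and, for the most part, follows the same route as the paper: condition on $\tg$ and invoke Lemma~\ref{lem:quant2-facts} for the statistical claim; for the code length, bound each nonzero's cost by $3+(1+o(1))\left(\log \mathrm{gap}_i+\log\zeta_i\right)$, apply Jensen coordinatewise (this is exactly the paper's Lemma~\ref{lem:jensen} with $p=1$ and $p=2$), bound the expected sparsity, and push expectations through $x\mapsto x\log(C/x)$. The one step you handle genuinely differently is the magnitude term: the paper never needs your joint-concavity argument, because it bounds $\|\vec{\zeta}\|_2^2$ \emph{deterministically} --- each $\zeta_i/s\le |v_i|/\|\vec{v}\|_2+1/s$, so $\sum_i\zeta_i^2\le 2(s^2+n)$ pointwise --- which decouples the magnitude from $|S|$ and leaves only a single-variable Jensen step in $\|\vec{\zeta}\|_0$ (this is also where the factor $2$ inside the logarithm comes from). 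Your alternative --- keep $U$ random, bound $\E[U]\le s^2+n$, and use that $(N,U)\mapsto N\log(U/N)$ is the jointly concave perspective of $\log$ --- is valid and marginally tighter before you pad by $\log 2$; the paper's route buys simplicity, yours buys a slightly better constant. Two pieces of the ``bookkeeping'' you wave at do need care. First, after Jensen you still replace $\E\|\vec{\zeta}\|_0$ by its upper bound inside $x\log(C/x)$, which is legitimate only where that map is increasing; the paper's appendix secures this with the explicit assumption $s^2+\sqrt n\le n/2$, and at $s=\sqrt n$ the unfolded gap term $x\log(n/x)$ is actually decreasing at $x=s(s+\sqrt n)=2n$, so you must fold the two logarithms \emph{before} substituting and then check monotonicity --- your proposal elides this regime condition entirely. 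Second, your treatment of the variance claim mirrors the paper's own looseness: the residual $\E\|\tg-\nabla f\|_2^2\le B$ is not actually accounted for in the stated constant, and calling it lower order is only defensible when $s\le\sqrt n$, where $\min(n/s^2,\sqrt n/s)\ge 1$; the paper glosses the same point, so this is not a gap relative to its proof, but it is not a term you can silently ``absorb.''
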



\paragraph{Sparse Regime.} For the case $s = 1$, i.e., quantization levels $0$, $1$, and $-1$, the gradient density is $O( \sqrt n )$, while the second-moment blowup is $\leq \sqrt n$. Intuitively, this means that we will employ $O( \sqrt n \log n )$ bits per iteration, while the convergence time is increased by $O( \sqrt n )$. 
%
 

\paragraph{Dense Regime.} The variance blowup is minimized to at most \textbf{2} for $s = \sqrt n$ quantization levels; in this case, we devise a more efficient encoding which yields an order of magnitude shorter codes compared to the full-precision variant. The proof of this statement is not entirely obvious, as it exploits both the statistical properties of the quantization and the guarantees of the Elias coding. 

\begin{corollary}
\label{thm:quant2-alt}
Let $f, \vec{x}$, and $\tg (\vec{x})$ be as in Theorem \ref{thm:quant2}.
There is an encoding scheme for $Q_{\sqrt n} (\tg (\vec{x}))$ which in expectation has length at most 
$
2.8 n + 32.
$
\end{corollary}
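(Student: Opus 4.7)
The plan is to specialize the encoding of Theorem~\ref{thm:quant2} (or a close variant) to $s = \sqrt n$, and to replace the coarse Jensen bound used there by a sharper one that leverages two specific moment inequalities. Write $Z_i := s \cdot \xi_i(\vec v, s) \in \{0, 1, \ldots, s\}$ for the integer payloads and $a_i := |v_i|/\|\vec v\|_2$, so $\sum_i a_i^2 = 1$. Specializing Lemma~\ref{lem:quant2-facts}(ii) to $s = \sqrt n$ gives $\E[\|Q_s(\vec v)\|_2^2] \leq 2\|\vec v\|_2^2$, which translates to $\E[\sum_i Z_i^2] \leq 2n$. Combined with the Cauchy--Schwarz bound $\sum_i a_i \leq \sqrt n$, this yields $\E[\sum_i Z_i] = s \sum_i a_i \leq n$. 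These are the two ``statistical properties'' that the corollary refers to.

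I would then decompose the expected codelength as
$$\E[L] = 32 + \E\Big[\sum_{j=1}^N |\Elias(g_j)|\Big] + \E[N] + \E\Big[\sum_{i \, : \, Z_i \neq 0} |\Elias(Z_i)|\Big],$$
where $N$ is the number of nonzero entries of $\vec\zeta$, the $g_j$ are the $N$ inter-nonzero gaps (with $\sum_j g_j \leq n$), and the middle term accounts for the per-nonzero sign bits. Applying Jensen's inequality to each of the two Elias sums, using a concave upper bound $|\Elias(k)| \leq F(k)$ on the order $\log_2 k + O(\log\log k) + 1$ (and matching $|\Elias(1)|$), both sums are controlled by $N \cdot F(n/N)$, yielding $\E[L] \leq 32 + N(1 + 2F(n/N))$ uniformly in $N \in [1,n]$.

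The main obstacle is sharpening the leading coefficient from the $\approx 3$ that this direct Jensen bound produces down to $2.8$. The worst case of $N(1 + 2F(n/N))$ sits in the dense regime where essentially all $a_i \leq 1/\sqrt n$: both inter-nonzero gaps and values concentrate at $1$, each contributing $|\Elias(1)|$ bits per coordinate. To close the $0.2n$ gap, one cannot treat gaps and values independently. The key observation is that $\E[\sum_i Z_i^2] \leq 2n$ together with $Z_i \geq 1$ on nonzeros forces only an $O(1)$-fraction of nonzero coordinates to satisfy $Z_i \geq 2$, so an amortized per-coordinate argument, splitting coordinates by whether $a_i \leq 1/s$ or $a_i > 1/s$ and tracking the cost contribution of each type against the two moment inequalities separately, gives a strictly smaller leading constant than Jensen applied globally. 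A small refinement of the encoding in this regime (for instance, merging each nonzero's sign bit into its Elias codeword so that $\pm 1$ are encoded jointly in a short prefix) is a natural way to realize the improvement and nail down the constant $2.8$; verifying that the resulting bound is uniform over all inputs $\vec v$, subject only to $\sum_i a_i^2 = 1$, is the technical crux.
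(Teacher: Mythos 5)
Your setup is fine as far as it goes: the two moment facts you isolate are exactly what is available at $s=\sqrt n$ (from the proof of Lemma~\ref{lem:quant2-facts}, $\E\bigl[\sum_i (s\xi_i)^2\bigr]\le s^2\bigl(1+\min(n/s^2,\sqrt n/s)\bigr)=2n$, and $\sum_i |v_i|/\|\vec{v}\|_2\le\sqrt n$ gives $\E\bigl[\sum_i s\xi_i\bigr]\le n$). But the proof has a genuine gap exactly where the corollary's content lies, namely the constant. Your Jensen analysis of the gap-plus-sign-plus-value encoding of Theorem~\ref{thm:quant2} stalls at roughly $3n$ in the dense worst case (one sign bit, one gap codeword, one value codeword per coordinate), and the step that is supposed to close the remaining $\Theta(n)$ gap --- the amortized split of coordinates by $a_i\lessgtr 1/s$ combined with a modified code that merges the sign into the Elias codeword --- is only asserted, not carried out; you yourself label its verification ``the technical crux.'' Without that verification there is no argument that the constant lands at $2.8$ rather than $3$ or worse, and since the explicit constant is the entire point of the corollary, the statement is not established. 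A further technical problem with staying inside the sparse scheme: the paper's analysis of that scheme (Lemma~\ref{thm:bits}) needs $s^2+\sqrt n\le n/2$ to apply Jensen to the random sparsity level, a condition that is violated at $s=\sqrt n$, so the specialization you are trying to sharpen does not even directly apply in this regime; also note that $N$ in your bound is random, so you need a Jensen step over $N$ as well.

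The paper takes a genuinely different route that sidesteps all of this by changing the code rather than sharpening the analysis of the sparse code (Section~\ref{sec:quant2-alt}). In the dense regime it abandons position/gap encoding entirely: after the $32$-bit norm it transmits, for \emph{every} coordinate $i$ in order (zero or not), one sign bit plus $\Elias(s\zeta_i+1)$, the shift by one making zero encodable. The expected length is then at most $32+2n+(1+o(1))\sum_i\E\bigl[\log(s\zeta_i+1)\bigr]$, and a single application of Jensen to the concave map $x\mapsto\log(1+x)$, together with $\E\bigl[\|\vec{\zeta}\|_2^2\bigr]\le 1+\min(n/s^2,\sqrt n/s)=2$ at $s=\sqrt n$, bounds the Elias contribution by an absolute constant per coordinate (on the order of $\tfrac12\log 3$ plus lower-order terms), which is how the $2.8n+32$ figure is reached (Lemmas~\ref{lem:comp2-bitbound} and~\ref{thm:bits2}). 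If you want to repair your proof, the cleanest fix is to adopt this per-coordinate shifted-Elias encoding for $s=\sqrt n$; recovering the constant from the gap-based scheme by amortization would require substantially more work and may not reach $2.8$ at all.
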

\vspace{-1.5em}
\subsection{QSGD Guarantees}
\vspace{-.5em}
Putting the bounds on the communication and variance given above with the guarantees for SGD algorithms on smooth, convex functions yield the following results:
\begin{theorem}[Smooth Convex QSGD]
\label{thm:qsgd-convex}
Let $\mathcal{X}, f, L, \vec{x}_0$, and $R$ be as in Theorem \ref{thm:sgd}.
Fix $\epsilon > 0$.
Suppose we run parallel QSGD  with $s$ quantization levels  on $K$ processors accessing independent stochastic gradients with second moment bound $B$, with step size $\eta_t = 1 / ( L + \sqrt{K} / \gamma)$, where $\gamma$ is as in Theorem \ref{thm:sgd} with $\sigma = B'$, where $B' = \min \left( \frac{n}{s^2} ,\frac{\sqrt{n}}{s } \right) B$.
Then if
$
T = O \left( R^2 \cdot \max \left( \frac{2B'}{K \epsilon^2} , \frac{L}{\epsilon} \right) \right),
\textnormal{ then }\E \left[ f \left( \frac{1}{T} \sum_{t = 0}^T \vec{x}_t \right) \right] - \min_{\vec{x} \in \mathcal{X}} f(\vec{x}) \leq \epsilon.
$
Moreover, QSGD requires 
$\left( 3 +  \left(\frac{3}{2} + o(1)\right) \log \left( \frac{2(s^2 + n)}{s^2 + \sqrt{n}} \right)  \right) (s^2 + \sqrt{n}) + 32$
bits of communication per round.
In the special case when $s = \sqrt{n}$, this can be reduced to $2.8 n + 32$.
\end{theorem}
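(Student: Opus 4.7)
The plan is to compose the quantization analysis of Section~\ref{sec:quant2} with the parallel-SGD convergence rate of Corollary~\ref{cor:sgd-conv}. Concretely, have each processor $\ell$ broadcast the encoding of $Q_s(\tg^\ell(\vec{x}_t))$ instead of the full-precision gradient, so that the averaged update applied by every processor at step $t$ is $\widetilde{G}_t = \frac{1}{K}\sum_{\ell=1}^K Q_s(\tg^\ell(\vec{x}_t))$. With this reinterpretation, Algorithm~\ref{algo:sgd} becomes a minibatched SGD of width $K$ whose per-sample stochastic gradient is $Q_s(\tg^\ell)$.

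Step one is to verify that $Q_s(\tg^\ell)$ is itself a legitimate stochastic gradient for $f$ with controlled second moment. Unbiasedness follows from Lemma~\ref{lem:quant2-facts}(i) and the tower rule: $\E[Q_s(\tg^\ell(\vec{x}))] = \E[\E[Q_s(\tg^\ell)\mid \tg^\ell]] = \E[\tg^\ell(\vec{x})] = \nabla f(\vec{x})$. Theorem~\ref{thm:quant2} then supplies the variance bound $B' = \min(n/s^2,\sqrt n/s)\cdot B$. Hence $\widetilde{G}_t$ is an average of $K$ i.i.d.\ unbiased stochastic gradients each of variance at most $B'$, which is exactly the setting to which Theorem~\ref{thm:sgd} and its minibatched packaging in Corollary~\ref{cor:sgd-conv} apply, with the constant $B$ replaced by $B'$. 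Substituting directly gives the claimed iteration count $T = O\!\left(R^2\max\!\left(2B'/(K\epsilon^2),\, L/\epsilon\right)\right)$.

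Step two is the communication cost, which reads off immediately from the expected code-length statement of Theorem~\ref{thm:quant2}: each processor broadcasts a single encoded quantized gradient per round, so the bit count per round is precisely the expression there. For the dense regime $s=\sqrt n$, I would invoke Corollary~\ref{thm:quant2-alt} to replace the generic bound by $2.8n + 32$.

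The main obstacle I expect is a small but genuine bookkeeping issue: Corollary~\ref{cor:sgd-conv} is phrased in terms of a \emph{second-moment} bound, whereas Theorem~\ref{thm:quant2} delivers a \emph{variance} bound on $Q_s(\tg)$. As observed in Section~\ref{sec:background}, Bubeck's Theorem~\ref{thm:sgd} is in fact proved using a variance hypothesis, so the cleanest route is to re-invoke Theorem~\ref{thm:sgd} directly with variance $B'/K$ on the minibatched gradient $\widetilde G_t$, rather than funnel the argument through Corollary~\ref{cor:sgd-conv}; this preserves the constants exactly as stated. (An alternative is to upgrade the variance bound to a second-moment bound by adding $\|\nabla f(\vec{x})\|^2 \leq B$, which only changes constants.) Once that substitution is explicit, the rest of the proof is pure composition.
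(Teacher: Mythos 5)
Your proposal is correct and follows essentially the same route as the paper, which obtains this theorem by directly composing the quantization guarantees (unbiasedness and the variance bound $B'$ from Lemma~\ref{lem:quant2-facts}/Theorem~\ref{thm:quant2}, plus the coding-length bounds, with Corollary~\ref{thm:quant2-alt} for $s=\sqrt n$) with the minibatched-SGD convergence guarantee of Theorem~\ref{thm:sgd}/Corollary~\ref{cor:sgd-conv}. Your remark about invoking Theorem~\ref{thm:sgd} with the variance bound $B'/K$ on the averaged quantized gradient, rather than routing through the second-moment phrasing of Corollary~\ref{cor:sgd-conv}, is exactly the right bookkeeping and matches how the theorem's constants are stated.
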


QSGD is quite portable, and can be applied to almost any stochastic gradient method.
For illustration, we can use quantization along with \cite{GL13} to get  communication-efficient non-convex SGD.
\begin{theorem}[QSGD for smooth non-convex optimization]
Let $f: \R^n \to \R$ be a $L$-smooth (possibly nonconvex) function, and let $\vec{x}_1$ be an arbitrary initial point.
Let $ T > 0$ be fixed, and $s > 0$.
Then there is a random stopping time $R$ supported on $\{1, \ldots, N\}$ so that QSGD with quantization level $s$, constant stepsizes $\eta = O(1 / L)$ and access to stochastic gradients of $f$ with second moment bound $B$ satisfies
$
\frac{1}{L} \E \left[  \| \nabla f (\vec{x}) \|_2^2 \right] \leq O \left( \frac{\sqrt{L (f(\vec{x}_1) - f^*)}}{N} + \frac{\min (n / s^2, \sqrt{n} / s)B}{L} \right).
$
Moreover, the communication cost is the same as in Theorem \ref{thm:qsgd-convex}.
\end{theorem}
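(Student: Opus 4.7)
The plan is to treat the composition $Q_s \circ \tg$ as a new stochastic-gradient oracle for $f$ and reduce to the nonconvex SGD convergence theorem of Ghadimi--Lan~\cite{GL13} in black-box fashion. This is natural because the theorem's bound has exactly the structure of a standard nonconvex SGD rate with the variance parameter inflated by the quantization factor.

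First, I would verify that $Q_s(\tg(\vec{x}))$ is still an unbiased stochastic gradient for $f$: by Lemma~\ref{lem:quant2-facts}(i) applied conditionally and the tower property,
$$\E[Q_s(\tg(\vec{x}))] = \E\bigl[\E[Q_s(\tg(\vec{x})) \mid \tg(\vec{x})]\bigr] = \E[\tg(\vec{x})] = \nabla f(\vec{x}).$$
Second, I would upper-bound its second moment. Conditional unbiasedness of $Q_s$ given $\tg$ gives the Pythagoras-style decomposition
$$\E\|Q_s(\tg(\vec{x}))\|_2^2 = \E\|Q_s(\tg(\vec{x})) - \tg(\vec{x})\|_2^2 + \E\|\tg(\vec{x})\|_2^2,$$
and Lemma~\ref{lem:quant2-facts}(ii), together with the second-moment assumption $\E\|\tg(\vec{x})\|_2^2 \leq B$, bounds this by $(1+\alpha)B$, where $\alpha = \min(n/s^2, \sqrt{n}/s)$. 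So $Q_s \circ \tg$ is an unbiased stochastic-gradient oracle for $f$ with second moment bound $B' = O((1+\alpha)B)$.

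Third, I would invoke the Ghadimi--Lan nonconvex convergence theorem: for an $L$-smooth (possibly nonconvex) $f$, an unbiased stochastic-gradient oracle with second moment bound $B'$, and a suitable constant stepsize $\eta = O(1/L)$, running SGD for $N$ steps and sampling a random stopping time $R$ from the distribution prescribed in~\cite{GL13} yields a bound on $\E\|\nabla f(\vec{x}_R)\|_2^2$ of exactly the form claimed in the theorem once $B'$ is substituted for the oracle's variance parameter and the resulting inequality is normalized by $L$. The per-iteration communication cost carries over verbatim from Theorem~\ref{thm:quant2}, since at each round we are transmitting the encoding of a single $Q_s(\tg(\vec{x}))$.

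The only delicate point, and the main place where care is needed, is aligning the Ghadimi--Lan stepsize schedule with the ``$\eta = O(1/L)$'' assertion in the statement: in~\cite{GL13}, the optimized rate requires $\eta$ to be the minimum of $1/L$ and a data-dependent quantity involving $B'$, $N$, and $f(\vec{x}_1) - f^*$. Absorbing this adaptive choice into the $O(1/L)$ notation, and checking that the two resulting additive terms collapse to the stated $\sqrt{L(f(\vec{x}_1)-f^*)}/N$ and $\min(n/s^2,\sqrt{n}/s)B/L$ terms, is the only nontrivial bookkeeping; everything else is immediate from the reduction to the black-box oracle with moment bound $B'$.
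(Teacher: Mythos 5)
Your proposal is correct and is essentially the paper's own argument: the paper likewise treats $Q_s(\tg(\vec{x}))$ as an unbiased stochastic gradient whose second moment is inflated by the factor $1+\min(n/s^2,\sqrt{n}/s)$ via Lemma~\ref{lem:quant2-facts}, and then invokes Theorem 2.1 of \cite{GL13} in black-box fashion, noting only that the variance-bound assumption there is implied by the second-moment bound, with the communication cost carried over from Theorem~\ref{thm:quant2}. Your additional bookkeeping on the tower property, the Pythagoras decomposition, and the stepsize alignment simply makes explicit what the paper leaves implicit.
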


\vspace{-1.5em}
\subsection{Quantized Variance-Reduced SGD}
\label{sec:svrg-main}
\vspace{-.5em}

Assume we are given $K$ processors, and a parameter $m > 0$, where each processor $i$ has access to functions $\{f_{i m / K}, \ldots, f_{(i + 1) m / K - 1}\}$.
The goal is to approximately minimize $f = \frac{1}{m} \sum_{i = 1}^m f_i$.
For processor $i$, let $h_i  = \frac{1}{m} \sum_{j = i m / K}^{(i + 1) m / K - 1} f_i$ be the portion of $f$ that it knows, so that $f = \sum_{i = 1}^K h_i$.

A natural question is whether we can apply stochastic quantization to reduce communication for parallel SVRG.
Upon inspection, we notice that the resulting update will break standard SVRG.
We resolve this technical issue, proving one can quantize SVRG updates using our techniques and still obtain the same convergence bounds.

\paragraph{Algorithm Description.} Let $\tQ (\vec{v}) = Q (\vec{v}, \sqrt{n})$, where $Q(\vec{v}, s)$ is defined as in Section \ref{sec:quant2}.
Given arbitrary starting point $\vec{x}_0$, we let $\vec{y}^{(1)} = \vec{x}_0$.
At the beginning of epoch $p$, each processor broadcasts 
$ \nabla h_i (\vec{y}^{(p)}) $, that is, the unquantized full gradient,
from which the processors each aggregate $\nabla f (\vec{y}^{(p)}) = \sum_{i = 1}^m \nabla h_i (\vec{y}^{(p)})$.
Within each epoch, for each iteration $t = 1, \ldots, T$, and for each processor $i = 1, \ldots, K$, we let $j^{(p)}_{i, t}$ be a uniformly random integer from $[m]$ completely independent from everything else.
Then, in iteration $t$ in epoch $p$, processor $i$ broadcasts the update vector
$
\vec{u}^{(p)}_{t, i} = \tQ \left( \nabla f_{j^{(p)}_{i, t}} (\vec{x}^{(p)}_t) - \nabla f_{j^{(p)}_{i, t}}(\vec{y}^{(p)}) + \nabla f (\vec{y}^{(p)}) \right). 
$

Each processor then computes the total update $\vec{u}^{(p)}_t = \frac{1}{K} \sum_{i = 1}^K \vec{u}_{t, i}$, and sets $\vec{x}^{(p)}_{t  +1} = \vec{x}^{(p)}_t - \eta \vec{u}^{(p)}_t$.
At the end of epoch $p$, each processor sets $\vec{y}^{(p + 1)} = \frac{1}{T} \sum_{t = 1}^T \vec{x}^{(p)}_t$.
We can prove the following.

\begin{theorem}
\label{thm:qsvrg}
Let $f (\vec{x}) = \frac{1}{m} \sum_{i = 1}^m f_i (\vec{x})$, where $f$ is $\ell$-strongly convex, and $f_i$ are convex and $L$-smooth, for all $i$.
Let $\vec{x}^*$ be the unique minimizer of $f$ over $\R^n$.
Then, if $\eta = O(1 / L)$ and $T = O(L / \ell)$, then QSVRG with initial point $\vec{y}^{(1)}$ ensures
$
\label{eq:svrg}
\E \left[ f(\vec{y}^{(p  +1)}) \right] - f(\vec{x^*}) \leq 0.9^p \left( f(\vec{y}^{(1)}) - f(\vec{x}^*) \right),
$ for any epoch $p\geq 1$.
%
%
Moreover, QSVRG with $T$ iterations per epoch requires $\leq (F + 2.8 n) (T  +1) + Fn$ bits of communication per epoch. 
\end{theorem}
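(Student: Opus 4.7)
The plan is to reduce the convergence analysis of QSVRG to the standard SVRG contraction proof of Johnson--Zhang \cite{SVRG}, by showing that at each inner iteration the quantized update $\vec{u}^{(p)}_t$ remains an unbiased estimator of $\nabla f(\vec{x}^{(p)}_t)$ whose second moment is only a constant factor larger than that of the unquantized SVRG direction. Once this is established, the per-epoch potential argument carries over with only a slight tightening of the step-size constant.

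For each processor $i$, write $\vec{d}^{(p)}_{t,i} = \nabla f_{j^{(p)}_{i,t}}(\vec{x}^{(p)}_t) - \nabla f_{j^{(p)}_{i,t}}(\vec{y}^{(p)}) + \nabla f(\vec{y}^{(p)})$, so that $\vec{u}^{(p)}_{t,i} = \tQ(\vec{d}^{(p)}_{t,i})$. By Lemma \ref{lem:quant2-facts}(i) we have $\E[\vec{u}^{(p)}_{t,i} \mid \vec{d}^{(p)}_{t,i}] = \vec{d}^{(p)}_{t,i}$, and averaging over the independent uniform $j^{(p)}_{i,t}$ yields $\E[\vec{u}^{(p)}_{t,i}] = \nabla f(\vec{x}^{(p)}_t)$, so $\vec{u}^{(p)}_t$ is unbiased. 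For the second moment, Lemma \ref{lem:quant2-facts}(ii) at $s = \sqrt{n}$ gives $\E\|\tQ(\vec{v}) - \vec{v}\|_2^2 \leq \|\vec{v}\|_2^2$, hence by unbiasedness $\E\|\tQ(\vec{v})\|_2^2 \leq 2\|\vec{v}\|_2^2$; conditioning on $\vec{d}^{(p)}_{t,i}$ and then using independence of the $K$ processor samples gives $\E\|\vec{u}^{(p)}_t\|_2^2 \leq (2/K)\,\E\|\vec{d}^{(p)}_{t,1}\|_2^2$. The standard SVRG variance lemma of \cite{SVRG} then controls $\E\|\vec{d}^{(p)}_{t,1}\|_2^2 \leq 8L\bigl(f(\vec{x}^{(p)}_t) - f(\vec{x}^*) + f(\vec{y}^{(p)}) - f(\vec{x}^*)\bigr)$.

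Plugging this constant-factor-worse variance bound into the standard SVRG potential computation --- expanding $\|\vec{x}^{(p)}_{t+1} - \vec{x}^*\|_2^2$, using unbiasedness together with strong convexity of $f$ to lower bound $\langle \nabla f(\vec{x}^{(p)}_t), \vec{x}^{(p)}_t - \vec{x}^*\rangle$, telescoping over $t = 1, \ldots, T$, and applying convexity to $\vec{y}^{(p+1)} = (1/T)\sum_t \vec{x}^{(p)}_t$ --- reproduces the Johnson--Zhang contraction up to a tweak of the hidden constants. Choosing $\eta$ to be a sufficiently small constant multiple of $1/L$ and $T = \Theta(L/\ell)$ accordingly then yields the stated per-epoch bound $\E[f(\vec{y}^{(p+1)})] - f(\vec{x}^*) \leq 0.9^p (f(\vec{y}^{(1)}) - f(\vec{x}^*))$. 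For the communication count, each epoch begins with every processor broadcasting its exact local full gradient $\nabla h_i(\vec{y}^{(p)})$ at cost $Fn$ bits, after which it sends one quantized update per inner round, each costing at most $F + 2.8n$ bits in expectation by Corollary \ref{thm:quant2-alt} with $s = \sqrt{n}$; summing over the $T+1$ broadcasts gives the claimed $(F + 2.8n)(T+1) + Fn$.

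The main obstacle is verifying that the multiplicative variance blowup from quantization is compatible with SVRG's variance-reduction machinery, rather than merely degrading it to SGD-like rates. The key enabling observation is that Lemma \ref{lem:quant2-facts}(ii) bounds the \emph{additive} quantization noise by a constant times $\|\vec{d}^{(p)}_{t,i}\|_2^2$, i.e., by the norm of the very quantity already being estimated. Because this norm vanishes in expectation as both $\vec{x}^{(p)}_t$ and $\vec{y}^{(p)}$ approach $\vec{x}^*$, the quantization noise vanishes along with it, preserving the shrinking-variance mechanism responsible for SVRG's exponential convergence.
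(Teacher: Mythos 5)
Your proof follows essentially the same route as the paper's: show the quantized direction is unbiased and, via Lemma \ref{lem:quant2-facts} with $s=\sqrt{n}$, that its second moment is at most a constant times $L\bigl(f(\vec{x}^{(p)}_t)-f(\vec{x}^*)+f(\vec{y}^{(p)})-f(\vec{x}^*)\bigr)$, then feed this into the standard SVRG potential argument and count communication as $Fn$ bits for the exact full-gradient broadcast plus at most $F+2.8n$ expected bits per quantized message (the paper's appendix additionally treats the variant where the epoch-start gradient is itself quantized, which adds an extra error term handled by the same lemma). One minor slip that does not affect the conclusion: the claimed bound $\E\|\vec{u}^{(p)}_t\|_2^2\le (2/K)\,\E\|\vec{d}^{(p)}_{t,1}\|_2^2$ is false for $K>1$, since the cross terms in the average of $K$ independent unbiased estimates contribute $\tfrac{K-1}{K}\|\nabla f(\vec{x}^{(p)}_t)\|_2^2$ to the second moment; this term is itself at most $2L\bigl(f(\vec{x}^{(p)}_t)-f(\vec{x}^*)\bigr)$, so the required bound still holds, and the paper sidesteps the issue entirely by observing that the parallel updates amount to minibatching and analyzing the single-processor case.
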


 \paragraph{Discussion.} 
In particular, this allows us to largely decouple the dependence between $F$ and the condition number of $f$ in the communication.
Let $\kappa = L / \ell$ denote the condition number of $f$.
Observe that whenever $F \ll \kappa$, the second term is subsumed by the first and the per epoch communication is dominated by $(F + 2.8 n) (T  +1)$.
Specifically, for any fixed $\epsilon$, to attain accuracy $\epsilon$ we must take $F = O(\log 1 / \epsilon)$. As long as $\log 1 / \epsilon \geq \Omega (\kappa)$, which is true for instance in the case when $\kappa \geq \poly \log (n)$ and $\epsilon \geq \poly (1 / n)$, then the communication per epoch is $O(\kappa (\log 1 / \epsilon + n))$.

\paragraph{Gradient Descent.} The full version of the paper~\cite{full} contains an application of QSGD to gradient descent. Roughly, in this case, QSGD can simply truncate the gradient to its top components, sorted by magnitude.

\vspace{-1em}
\section{QSGD Variants}
\vspace{-.5em}
Our experiments will stretch the theory, as we use deep networks, with non-convex objectives. 
(We have also tested QSGD for convex objectives. Results closely follow the theory, and are therefore omitted.)
Our implementations will depart from the previous algorithm description as follows.

First, we notice that the we can control the variance the quantization by quantizing into \emph{buckets} of a fixed size $d$. 
If we view each gradient as a one-dimensional vector $\vec{v}$, reshaping tensors if necessary, a \emph{bucket} will be defined as a set of $d$ consecutive vector values. (E.g. the $i$th bucket is the sub-vector $v[(i - 1)d + 1 : i\cdot d]$.) 
We will quantize each bucket \emph{independently}, using QSGD. 
Setting $d=1$ corresponds to no quantization (vanilla SGD), and $d=n$ corresponds to full quantization, as described in the previous section. 
It is easy to see that, using bucketing, the guarantees from Lemma~\ref{lem:quant2-facts} will be expressed in terms of $d$, as opposed to the full dimension $n$. This provides a knob by which we can control variance, at the cost of storing an extra scaling factor on every $d$ bucket values.  
As an example, if we use a bucket size of $512$, and $4 $ bits, the variance increase due to quantization will be upper bounded by only $\sqrt {512} / 2^4 \simeq 1.41$. 
This provides a theoretical justification for the similar convergence rates we observe in practice. 

The second difference from the theory is that we will scale by the \emph{maximum} value of the vector (as opposed to the $2$-norm).
Intuitively, normalizing by the max preserves more values, and has slightly higher accuracy for the same number of iterations. 
Both methods have the same baseline bandwidth reduction because of lower bit width (e.g. 32 bits to 2 bits per dimension), but normalizing by the max no longer provides any sparsity guarantees. We note that this does not affect our bounds in the regime where we use $\Theta(\sqrt n)$ quantization levels per component, as we employ no sparsity in that case. 
(However, we note that in practice max normalization also generates non-trivial sparsity.)

\vspace{-1em}
\section{Experiments}
\label{sec:exp}
\vspace{-.75em}



\begin{figure*}[tb]
 \begin{center}
  \subfigure
  {\includegraphics[width=.32\textwidth]{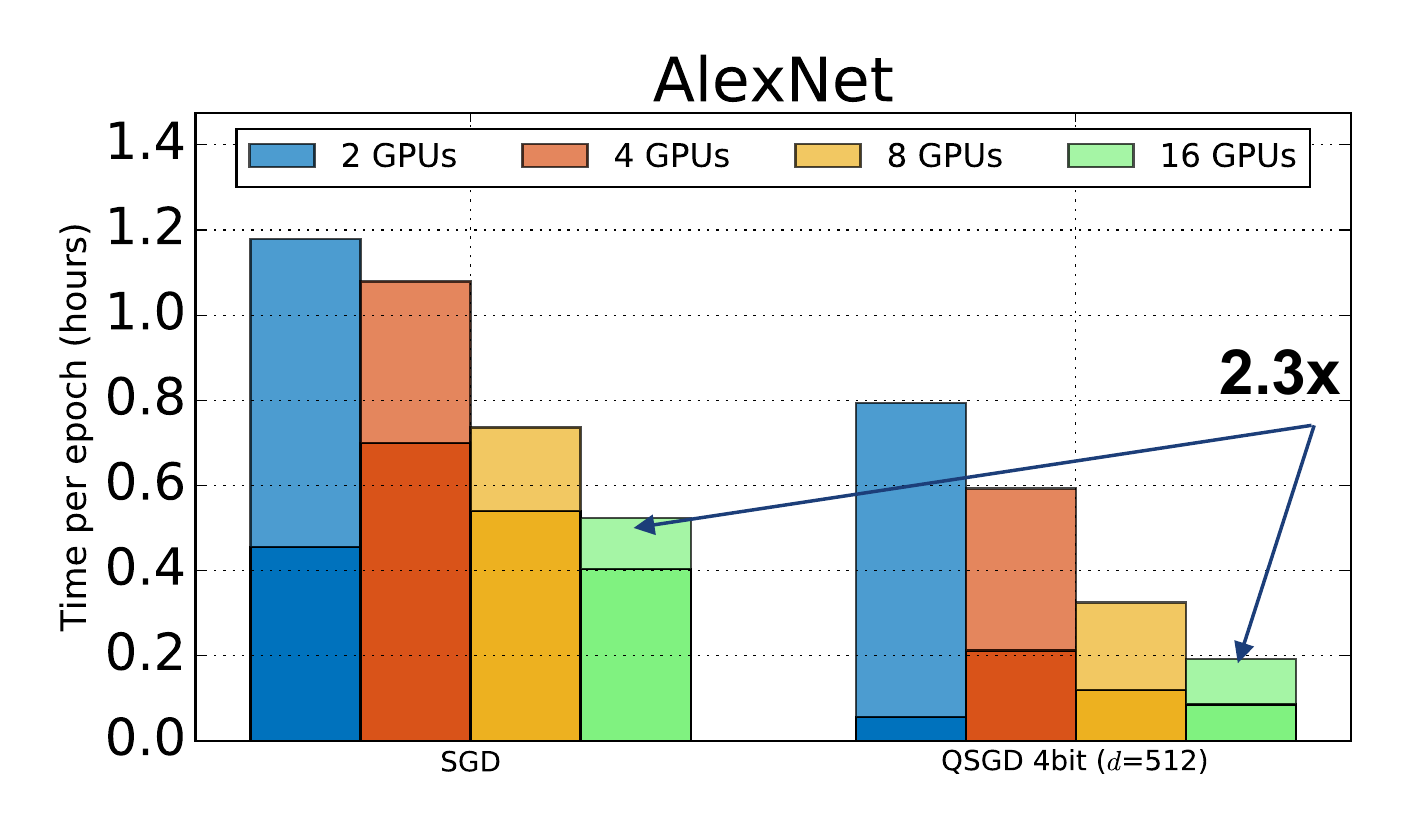}}~\subfigure
  {\includegraphics[width=.32\textwidth]{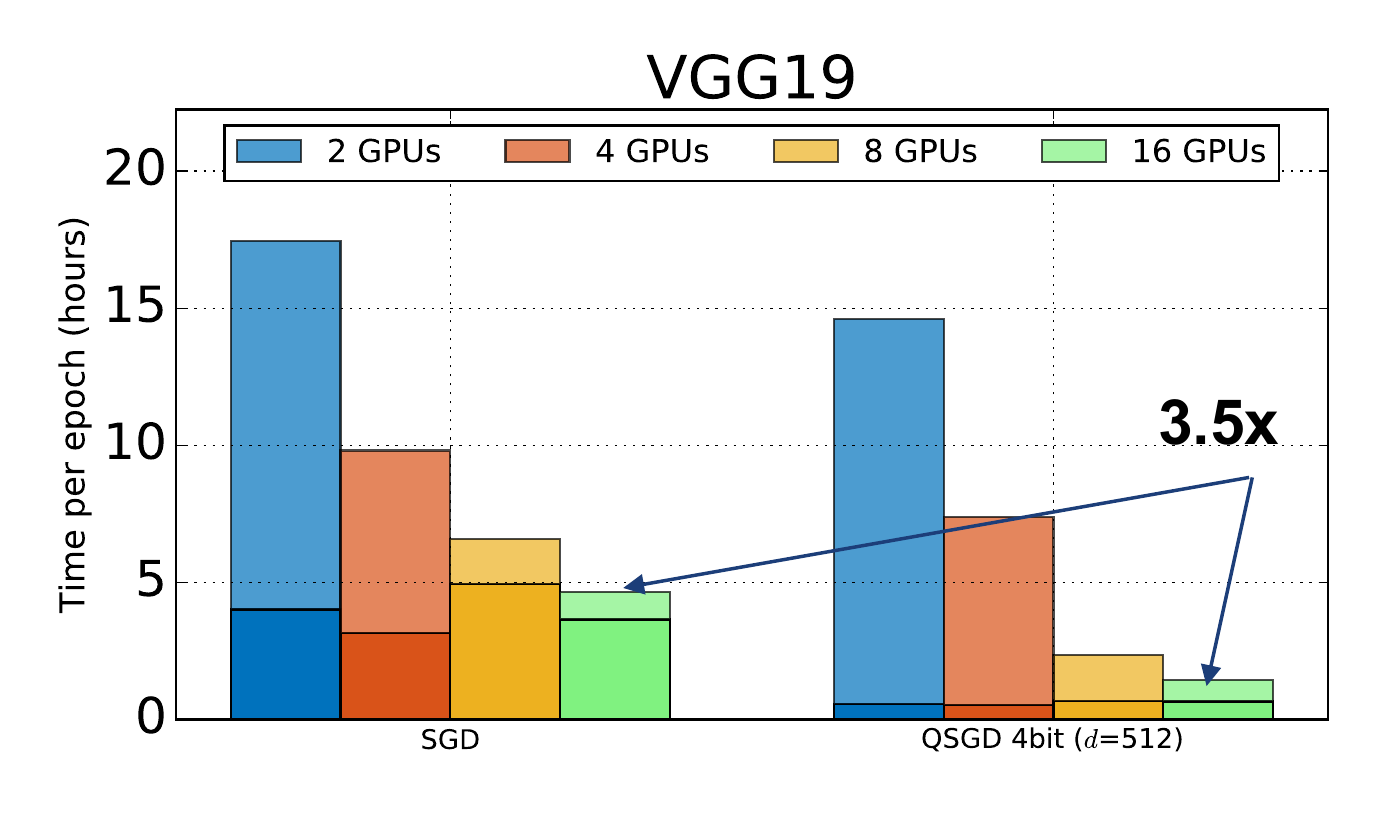}}
 ~\subfigure {\includegraphics[width=.32\textwidth]{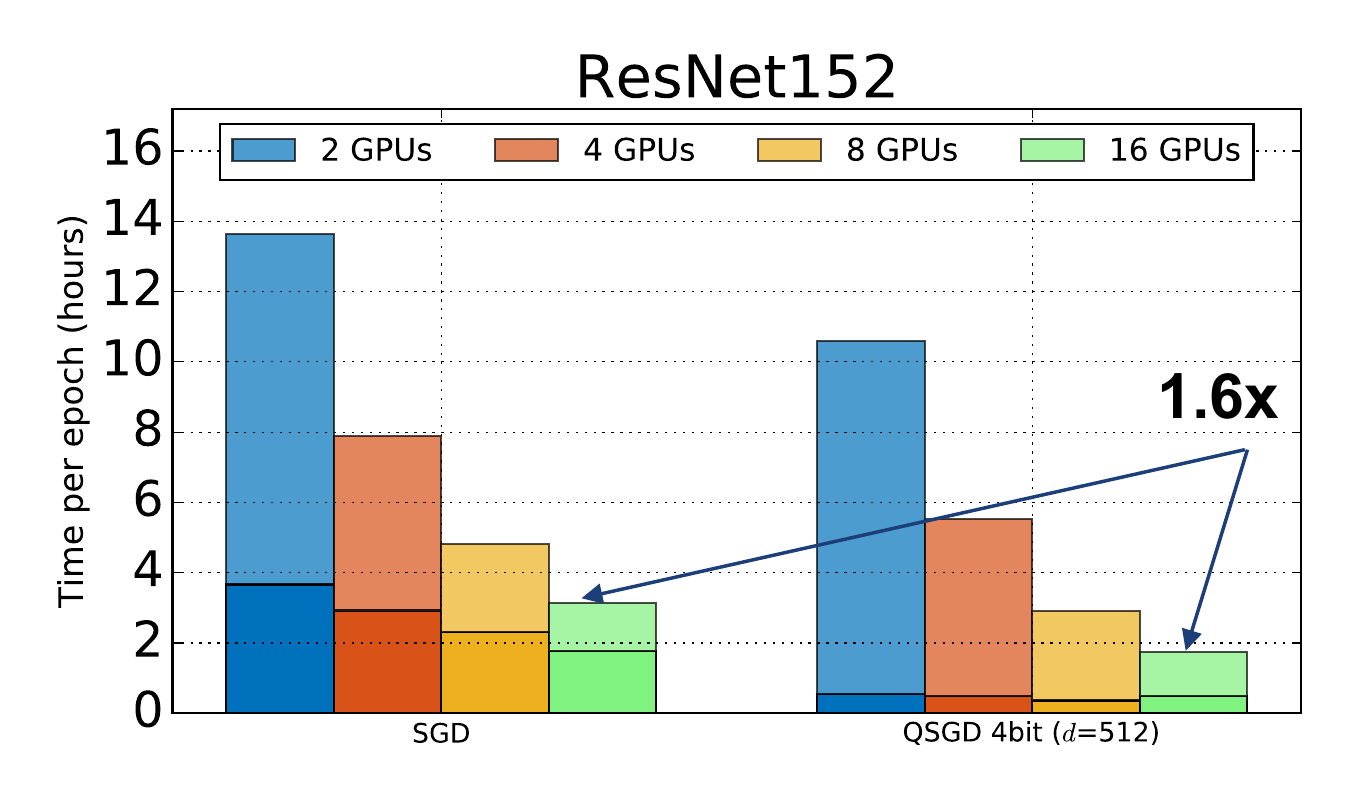}\label{fig:modelparallel}}
  \caption{{\small Breakdown of communication versus computation for various neural networks, on $2, 4, 8, 16$ GPUs, for full 32-bit precision versus QSGD 4-bit. 
  Each bar represents the total time for an epoch under standard parameters. Epoch time is broken down into \emph{communication} (bottom, solid) and \emph{computation} (top, transparent). Although epoch time \emph{diminishes} as we parallelize, the proportion of communication \emph{increases}.}}
  \label{fig:scalability}
 \end{center}
\end{figure*}

\begin{table}[]
\centering
\caption{{\small Description of networks, final top-1 accuracy, as well as end-to-end training speedup on 8GPUs. }
\label{tab:description}}
{\small
\begin{tabular}{| c | c | c | c | c | c | c | c |}\hline 
 Network & Dataset  & Params. & Init. Rate  & Top-1 (32bit)  & Top-1 (QSGD) & Speedup (8 GPUs) \\  \hline 
AlexNet &	 ImageNet & 	62M  &	0.07 &  59.50\% & \textbf{60.05}\% (4bit) & \textbf{2.05} $\times$  \\
ResNet152 & 	ImageNet  &	60M &	1 & \textbf{77.0}\% & 76.74\% (8bit) & \textbf{1.56} $\times$ \\
ResNet50 & 	ImageNet  &	25M &	 1 & 74.68\% & \textbf{74.76}\% (4bit) & \textbf{1.26} $\times$ \\
ResNet110 &	 CIFAR-10 & 	1M & 	0.1 & 93.86\% & \textbf{94.19}\%  (4bit) & \textbf{1.10} $\times$ \\
BN-Inception &	ImageNet  &	11M &	 3.6 & - & - & \textbf{1.16}$\times$ (projected) \\
VGG19 & 	ImageNet & 	143M & 	0.1 & - & - & \textbf{2.25}$\times$ (projected) \\
LSTM & AN4  &	13M &	0.5 & 81.13\% & \textbf{81.15} \% (4bit) & \textbf{2}$\times$ (2 GPUs)\\ \hline
\end{tabular}
}
\end{table}
\begin{figure*}[t]
 \begin{center}
  \subfigure[AlexNet Accuracy versus Time.]
  {\includegraphics[width=.32\textwidth]{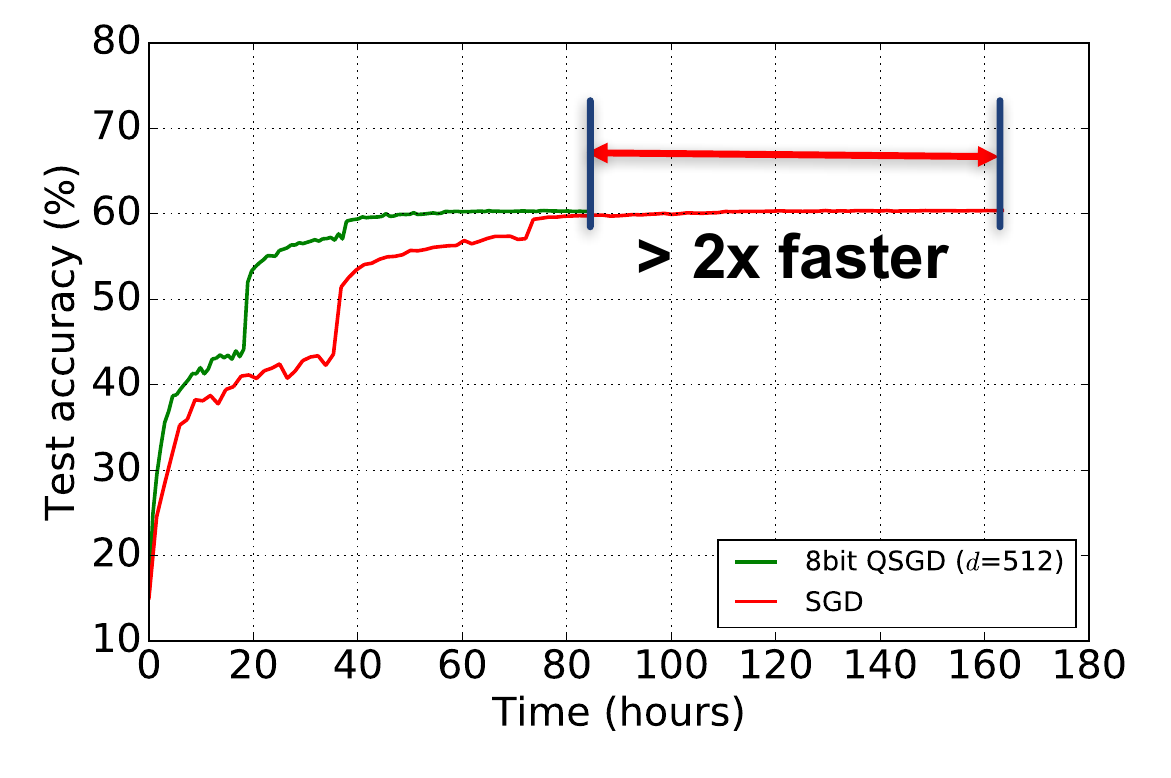}} 
  ~\subfigure[LSTM error vs Time.]
  {\includegraphics[width=.32\textwidth]{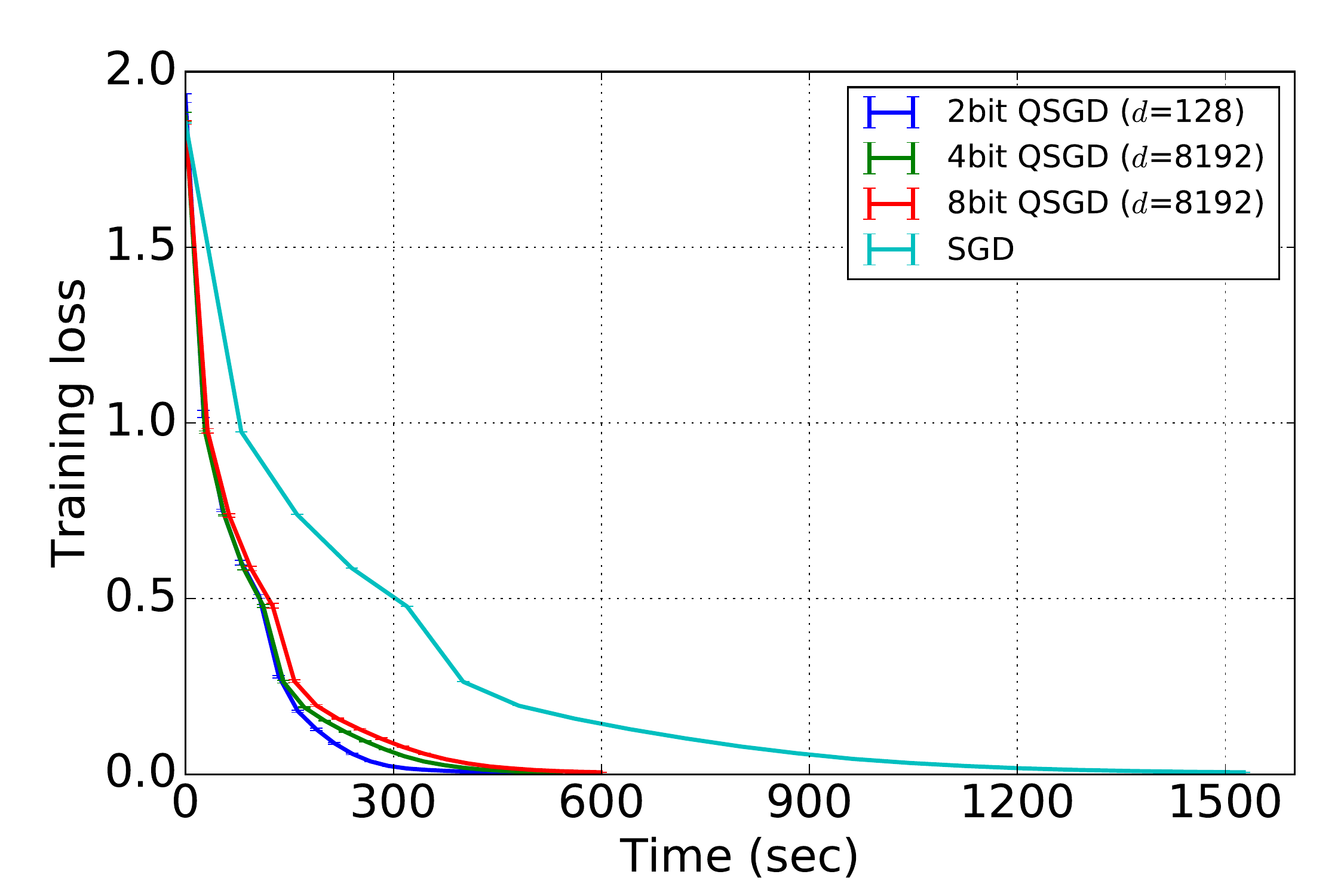}}
  ~\subfigure[ResNet50 Accuracy.]
  {\includegraphics[width=.32\textwidth]{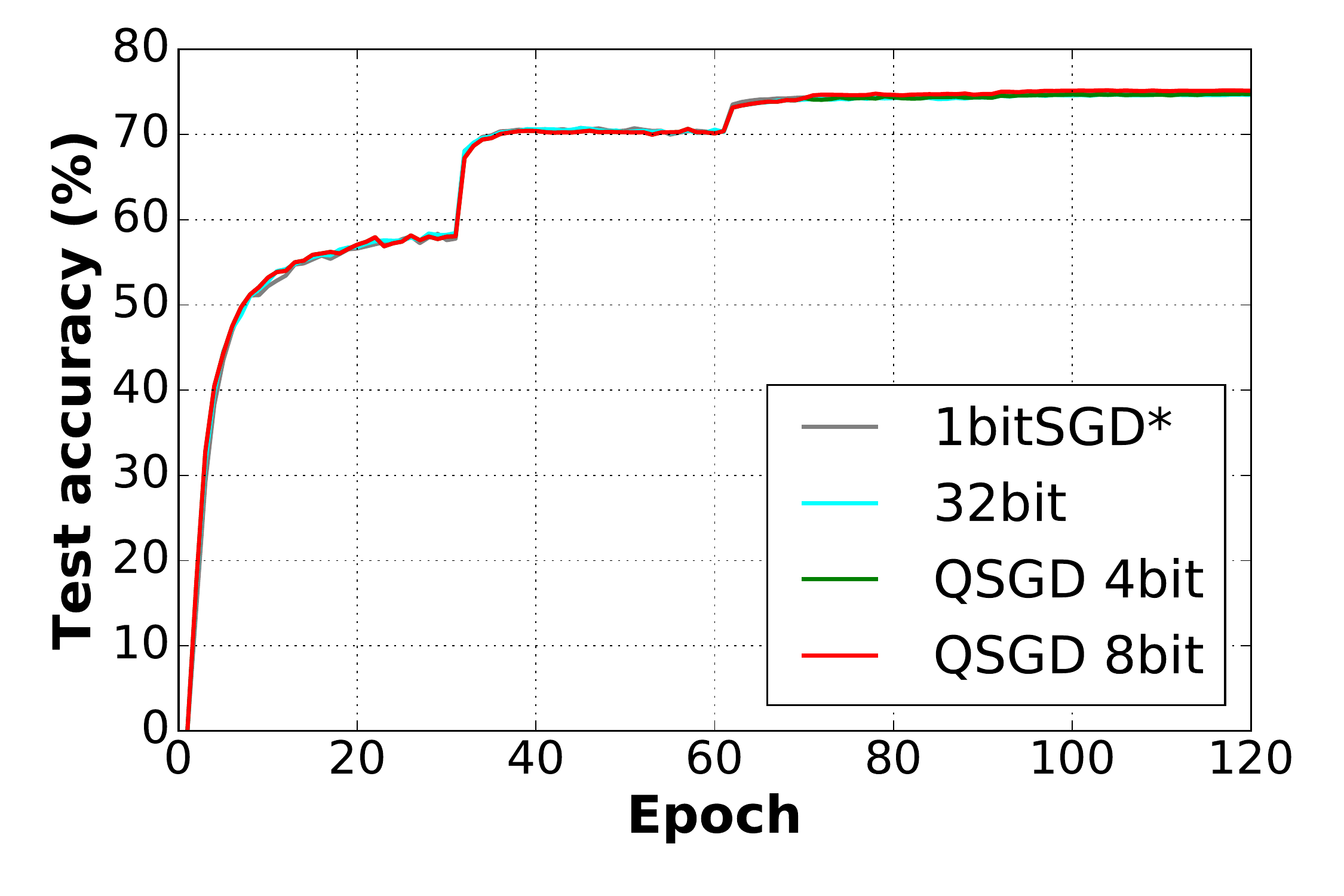}\label{fig:modelparallel}}
  \caption{{\small Accuracy numbers for different networks. Light blue lines represent 32-bit accuracy.}}
  \label{fig:cc}
 \end{center}
\end{figure*}

\paragraph{Setup.} 
We performed experiments on Amazon EC2 \text{p2.16xlarge} instances, with 16 NVIDIA K80 GPUs. 
Instances have GPUDirect peer-to-peer communication, but do not currently support NVIDIA NCCL extensions. We have implemented QSGD on GPUs using the Microsoft Cognitive Toolkit (CNTK)~\cite{CNTK}. This package provides efficient (MPI-based) GPU-to-GPU communication, and implements an optimized version of 1bit-SGD~\cite{OneBit}. Our code is released as open-source~\cite{code}.   

We execute two types of tasks: \emph{image classification} on ILSVRC 2015 (ImageNet)~\cite{deng2009imagenet}, CIFAR-10~\cite{krizhevsky2009learning}, and MNIST~\cite{lecun1998mnist}, and \emph{speech recognition} on the CMU AN4 dataset~\cite{AN4}.    
For vision, we experimented with AlexNet~\cite{krizhevsky2012imagenet}, VGG~\cite{simonyan2014very}, ResNet~\cite{he2016deep}, and Inception with Batch Normalization~\cite{ioffe2015batch} deep networks. 
For speech, we trained an LSTM network~\cite{hochreiter1997long}. See Table~\ref{tab:description} for details. 

\paragraph{Protocol.} Our methodology emphasizes \emph{zero error tolerance}, in the sense that we always aim to preserve the accuracy of the networks trained. 
We used standard sizes for the networks, with  hyper-parameters optimized for the 32bit precision variant. (Unless otherwise stated, we use the default networks and hyper-parameters optimized for full-precision CNTK 2.0.)  
We increased batch size when necessary to balance communication and computation for larger GPU counts, but never past the point where we lose accuracy. 
We employed \emph{double buffering}~\cite{OneBit} to perform communication and quantization concurrently with the computation.  Quantization usually benefits from lowering learning rates; yet, we always run the 32bit learning rate, and decrease bucket size to reduce variance. 
We will not quantize small gradient matrices ($< 10$K elements), since the computational cost of quantizing them significantly exceeds the reduction in communication. 
However, in all experiments, more than $99\%$ of all parameters are transmitted in quantized form. We reshape matrices to fit bucket sizes, so that no receptive field is split across two buckets. 

\paragraph{Communication vs. Computation.} 
In the first set of experiments, we examine the ratio between computation and communication costs during training, for increased parallelism. 
The image classification networks are trained on ImageNet, while LSTM is trained on AN4. 
We examine the cost breakdown for these networks over a pass over the dataset (epoch). Figure~\ref{fig:scalability} gives the results for various networks for image classification. 
The variance of epoch times is practically negligible (<1\%), hence we omit confidence intervals. 

Figure~\ref{fig:scalability} leads to some interesting observations. 
First, based on the ratio of communication to computation, we can roughly split networks into \emph{communication-intensive} (AlexNet, VGG, LSTM), and \emph{computation-intensive} (Inception, ResNet). 
For both network types, the relative impact of communication \emph{increases significantly} as we increase the number of GPUs. 
Examining the breakdown for the 32-bit version, all networks could significantly benefit from reduced communication. 
For example, for AlexNet on 16 GPUs with batch size 1024, more than $80\%$ of training time is spent on communication, whereas for LSTM on 2 GPUs with batch size 256, the ratio is $71\%$. (These ratios can be slightly changed by increasing batch size, but this can decrease accuracy, see e.g.~\cite{FireCaffe}.)

Next, we examine the impact of QSGD  on communication and overall training time. 
(Communication time includes time spent compressing and uncompressing gradients.)
We measured QSGD with 2-bit quantization and 128 bucket size, and 4-bit and 8-bit quantization with 512 bucket size. 
The results for these two variants are similar, since the different bucket sizes mean that the 4bit version only sends $77\%$ more data than the 2-bit version (but $\sim 8\times$ less than 32-bit). These bucket sizes are chosen to ensure good convergence, but are not carefully tuned. 

On 16GPU AlexNet with batch size 1024, 4-bit QSGD reduces communication time by $4\times$, and overall epoch time by $2.5\times$. On LSTM, it reduces communication time by $6.8\times$, and overall epoch time by $2.7\times$. Runtime improvements are non-trivial for all architectures we considered.


\paragraph{Accuracy.} 
We now examine how QSGD influences accuracy and convergence rate. 
We ran AlexNet and ResNet to full convergence on ImageNet, LSTM on AN4, ResNet110 on CIFAR-10, as well as 
a two-layer perceptron on MNIST. 
Results are given in Figure~\ref{fig:cc}, and exact numbers are given in Table~\ref{tab:description}. QSGD tests are performed on an 8GPU setup, and are compared against the best known full-precision accuracy of the networks.
In general, we notice that 4bit or 8bit gradient quantization is sufficient to recover or even slightly improve full accuracy, while ensuring non-trivial speedup. 
Across all our experiments, 8-bit gradients with $512$ bucket size have been sufficient  to recover or improve upon the full-precision accuracy. 
Our results are consistent
with recent work~\cite{neelakantan2015adding} noting benefits of adding noise to gradients when training deep networks. 
Thus, quantization can be seen as a source of zero-mean noise, which happens to render communication more efficient. 
At the same time, we note that more aggressive quantization can hurt accuracy.
In particular,  
4-bit QSGD with $8192$ bucket size (not shown) loses $0.57\%$ for top-5 accuracy, and $0.68\%$ for top-1, versus full precision on AlexNet when trained for the same number of epochs. Also, 
QSGD with 2-bit and 64 bucket size has gap $1.73\%$ for top-1, and $1.18\%$ for top-1. 
%
%

%

One issue we examined in more detail is which layers are more sensitive to quantization. 
It appears that quantizing \emph{convolutional layers} too aggressively (e.g., 2-bit precision) can lead to accuracy loss if trained for the same period of time as the full precision variant. 
However, increasing precision to 4-bit or 8-bit recovers accuracy. 
This finding suggests that modern architectures for vision tasks, such as ResNet or Inception, which are almost entirely convolutional, may benefit less 
from quantization than recurrent deep networks such as LSTMs.

\paragraph{Additional Experiments.} The full version of the paper contains additional experiments, including a full comparison with 1BitSGD. In brief, QSGD outperforms or matches the performance and final accuracy of 1BitSGD for the networks and parameter values we consider.

\vspace{-1em}
\section{Conclusions and Future Work}
\vspace{-1em}

We have presented QSGD, a family of SGD algorithms which allow a smooth trade off between the amount of communication per iteration and the running time. 
Experiments suggest that QSGD is highly competitive with the full-precision variant on a variety of tasks. 
There are a number of optimizations we did not explore. The most significant is leveraging the \emph{sparsity} created by QSGD.  
Current implementations of MPI do not provide support for sparse types, but we plan to explore such support in future work. 
Further, we plan to examine the potential of QSGD in larger-scale applications, such as super-computing. On the theoretical side, it is interesting to consider applications of quantization beyond SGD.

The full version of this paper~\cite{full} contains complete proofs, as well as additional applications. 

\vspace{-1em}
\section{Acknowledgments}
\vspace{-1em}

The authors would like to thank Martin Jaggi, Ce Zhang, Frank Seide and the CNTK team for their support during the development of this project, as well as the anonymous NIPS reviewers for their careful consideration and excellent suggestions. 
Dan Alistarh was supported by a Swiss National Fund Ambizione Fellowship. Jerry Li was supported by the NSF CAREER Award CCF-1453261, CCF-1565235, a Google Faculty Research Award, and an NSF Graduate Research Fellowship. This work was developed in part while Dan Alistarh, Jerri Li and Milan Vojnovic were with Microsoft Research Cambridge, UK.

\bibliographystyle{plain}
\bibliography{bibliography}

\appendix
%

\section*{Roadmap of the Appendix}

\section{Proof of Lemmas and Theorems}
\label{sec:proofs}

\subsection{Proof of Lemma~\ref{lem:quant2-facts}}


The first claim obviously holds. We thus turn our attention to the second claim of the lemma. We first note the following bound:
\begin{align*}
\E [\xi_i(\vec{v},s)^2] &= \E[\xi_i(\vec{v},s)]^2 + \E\left[\left(\xi_i(\vec{v},s) - \E[\xi_i(\vec{v},s)]\right)^2\right] \\
&= \frac{v_i^2}{\|\vec{v}\|_2^2} + \frac{1}{s^2} p\left(\frac{|v_i|}{\| \vec{v} \|_2},s\right) \left(1 - p\left(\frac{|v_i|}{\| \vec{v} \|_2},s\right)\right) \\
&\leq \frac{v_i^2}{\|\vec{v}\|_2^2} + \frac{1}{s^2} p\left(\frac{|v_i|}{\| \vec{v} \|_2},s\right) \; .
\end{align*}

Using this bound, we have
\begin{align*}
\E [\| Q (\vec{v},s) \|^2] &= \sum_{i = 1}^n \E \left[  \| \vec{v} \|_2^2 \xi_i\left( \frac{|\vec{v}_i |}{\| \vec{v} \|_2},s \right)^2 \right] \\
&\leq \| \vec{v} \|_2^2 \sum_{i = 1}^n \left[ \frac{|\vec{v}_i|^2}{\| \vec{v} \|_2^2} + \frac{1}{s^2} p\left( \frac{|\vec{v}_i|}{\| \vec{v} \|_2 },s \right) \right] \\
&= \left( 1 + \frac{1}{s^2} \sum_{i = 1}^n p\left( \frac{|\vec{v}_i|}{\| \vec{v} \|_2 },s \right) \right) \| \vec{v} \|_2^2 \\
&\stackrel{a}{\leq} \left( 1 + \min \left( \frac{n}{s^2} , \frac{\| \vec{v} \|_1}{s \| \vec{v} \|_2} \right) \right) \| \vec{v} \|_2^2 \\
&\leq \left( 1 + \min \left( \frac{n}{s^2} , \frac{\sqrt{n}}{s } \right) \right) \| \vec{v} \|_2^2 \; .
\end{align*}
where (a) follows from the fact that $p(a,s) \leq 1$ and $p (a,s) \leq a s$.
This immediately implies that $\E [\| Q(\vec{v}, s) - \vec{v} \|_2^2] \leq \min \left( \frac{n}{s^2} , \frac{\sqrt{n}}{s } \right) \cdot \| \vec{v} \|_2^2$, as claimed.

\subsection{A Compression Scheme for $Q_s$ Matching Theorem \ref{thm:quant2}}
In this section, we describe a scheme for coding $Q_s$  and provide an upper bound for the expected number of information bits that it uses, which gives the bound in Theorem \ref{thm:quant2}.

Observe that for any vector $\vec{v}$, the output of $Q(\vec{v},s)$ is naturally expressible by a tuple $(\| \vec{v} \|_2, \vec{\sigma}, \vec{\zeta})$, where $\vec{\sigma}$ is the vector of signs of the $\vec{v}_i$'s and $\vec{\zeta}$ is the vector of $\xi_i(\vec{v},s)$ values. With a slight abuse of notation, let us consider $Q(\vec{v},s)$ as a function from $\mathbb{R}\setminus\{0\}$ to $\mathcal{B}_s$, where
\begin{eqnarray*}
\mathcal{B}_s = \left\{(A, \vec{\sigma}, \vec{z}) \in \R \times \R^n \times \R^n:  A \in \R_{\geq 0},  \vec{\sigma}_i \in \{-1, +1\}, \vec{z}_i \in \{0, 1/s, \ldots, 1\} \right\} \; .
\end{eqnarray*}

We define a coding scheme that represents each tuple in $\mathcal{B}_s$ with a codeword in $\{0,1\}^*$ according to a mapping $\Comp_s : \mathcal{B}_s \to \{0, 1\}^*$. 

To encode a single coordinate, we utilize a lossless encoding scheme for positive integers known as \emph{recursive Elias coding} or \emph{Elias omega coding}.\begin{definition}
Let $k$ be a positive integer. 
The \emph{recursive Elias coding} of $k$, denoted $\Elias (k)$, is defined to be the $\{0, 1\}$ string constructed as follows.
First, place a $0$ at the end of the string.
If $k = 0$, then terminate.
Otherwise, prepend the binary representation of $k$ to the beginning of the code.
Let $k'$ be the number of bits so prepended minus 1, and recursively encode $k'$ in the same fashion.
To decode an recursive Elias coded integer, start with $N = 1$.
Recursively, if the next bit is $0$, stop, and output $N$.
Otherwise, if the next bit is $1$, then read that bit and $N$ additional bits, and let that number in binary be the new $N$, and repeat.
\end{definition}

The following are well-known properties of the recursive Elias code which are not too hard to prove. 

\begin{lemma}
For any positive integer $k$, we have
\begin{enumerate}
\item
$|\Elias (k)| \leq \log k + \log \log k + \log \log \log k \ldots + 1 = (1 + o(1)) \log k + 1$.
\item
The recursive Elias code of $k$ can be encoded and decoded in time $O(|\Elias(k)|)$.
\item
Moreover, the decoding can be done without previously knowing a bound on the size of $k$.
\end{enumerate}
\end{lemma}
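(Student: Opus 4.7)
The plan is a direct induction on the recursive structure of $\Elias$. Define the chain $k_1 = k$ and $k_{i+1} = \lfloor \log_2 k_i \rfloor$, and let $m$ be the smallest index with $k_{m+1} = 0$. Unrolling the recursive definition shows that $\Elias(k)$ is exactly the concatenation $\text{bin}(k_m)\,\text{bin}(k_{m-1}) \cdots \text{bin}(k_1)$ followed by the single terminating $0$, so
\[
|\Elias(k)| \;=\; 1 \;+\; \sum_{i=1}^m (\lfloor \log_2 k_i \rfloor + 1).
\]
Since $\log_2 k_{i+1} \leq \log_2 \log_2 k_i$, the sequence $(\log_2 k_i)$ is bounded term-wise by the iterated logs $\log k, \log \log k, \log \log \log k, \ldots$, and the extra additive $m \leq \log^* k$ is absorbed into the trailing constant; this gives claim~1 and its asymptotic form $(1+o(1))\log k + 1$.

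For claim~2, both procedures touch each bit of the codeword only a constant number of times. Encoding computes the chain $k_1, \ldots, k_m$ by repeatedly extracting the position of the leading bit of the current $k_i$ (which yields $k_{i+1}$) and writing out $\text{bin}(k_i)$; the work per level is $O(|\text{bin}(k_i)|)$, so the total is $O(|\Elias(k)|)$. Decoding maintains the running value $N$ and on each round either consumes a single $0$ and halts, or reads exactly $N+1$ bits to form the new $N$; matching the $i$-th round to the block $\text{bin}(k_{m-i+1})$ shows the total cost is again $\Theta(|\Elias(k)|)$.

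Claim~3 is then immediate from inspection of the decoder: the number of bits read in any round is determined solely by the current value of $N$ (itself determined by bits already consumed), and the terminating $0$ is detected one bit at a time, so no header or a priori bound on $k$ is needed. The only subtlety I expect is bookkeeping the termination convention when $k' = 0$ bottoms out the recursion, so that the single trailing $0$ is not double-counted in the length identity above; modulo this, the three claims reduce to the telescoping identity together with direct inspection of the two algorithms.
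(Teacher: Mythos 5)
The paper never actually proves this lemma: it introduces the recursive Elias (omega) code and simply states these three properties as ``well-known\ldots not too hard to prove,'' using them as a black box in Lemmas~\ref{lem:jensen}, \ref{lem:comp1-bitbound} and \ref{lem:comp2-bitbound}. So your unrolling argument supplies a proof the paper omits, and it is correct in substance: unrolling the recursion along the chain $k_1=k$, $k_{i+1}=\lfloor\log_2 k_i\rfloor$ gives the block decomposition $\mathrm{bin}(k_m)\cdots\mathrm{bin}(k_1)$ followed by the terminating $0$, hence $|\Elias(k)|=1+\sum_{i=1}^m(\lfloor\log_2 k_i\rfloor+1)$, and the termwise comparison with the iterated logarithms, the linear scan of each block for encoding/decoding, and the self-delimiting behaviour of the decoder give claims 1--3. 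Two small caveats. First, the additive $m\le\log^* k$ arising from the ``$+1$ per block'' cannot be ``absorbed into the trailing constant'': $m$ grows (very slowly) with $k$, so the literal form $\log k+\log\log k+\cdots+1$ only holds up to an additive $O(\log^* k)$; this is the standard length bound for the omega code, and it is harmless because it sits inside the $o(1)\log k$ slack of the $(1+o(1))\log k+1$ form that the paper actually uses downstream. Second, the termination subtlety you flag is genuine: as literally written the paper's encoder recurses until the value $0$ (so it also emits $\mathrm{bin}(1)$ as a block), whereas the stated decoder (start $N=1$; on seeing a $1$ read that bit plus $N$ more) matches the standard convention in which the recursion stops once the value $1$ is reached; your round-by-round matching of decoder reads to blocks needs that standard convention, and with it the length identity, the $O(|\Elias(k)|)$ time bounds, and the prefix-free/no-a-priori-bound property all go through unchanged.
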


Given a tuple $(A, \vec{\sigma}, \vec{z}) \in \mathcal{B}_s$, our coding outputs a string $S$ defined as follows. First, it uses $F$ bits to encode $A$. It proceeds to encode using Elias recursive coding the position of the first nonzero entry of $\vec{z}$. It then appends a bit denoting $\vec{\sigma}_i$ and follows that with $\Elias (s \vec{z}_i)$. Iteratively, it proceeds to encode the distance from the current coordinate of $\vec{z}$ to the next nonzero using $c$, and encodes the $\vec{\sigma}_i$ and $\vec{z}_i$ for that coordinate in the same way. The decoding scheme is also straightforward: we first read off $F$ bits to construct $A$, then iteratively use the decoding scheme for Elias recursive coding to read off the positions and values of the nonzeros of $\vec{z}$ and $\vec{\sigma}$.

We can now present a full description of our lossy-compression scheme. For any input vector $\vec{v}$, we first compute quantization $Q(\vec{v},s)$, and then encode using $\Comp_s$. In our notation, this is expressed as $\vec{v} \to \Comp_s(Q(\vec{v},s))$.

\begin{lemma} For any $\vec{v}\in \R^n$ and $s^2 + \sqrt{n} \leq n / 2$, we have  
\label{thm:bits}
\[
\E [| \Comp_s (Q(\vec{v},s))| ] \leq \left( 3 + \frac{3}{2} \cdot (1 + o(1)) \log \left( \frac{2(s^2 + n)}{s^2 + \sqrt{n}} \right)  \right) (s^2 + \sqrt{n}) \; .
\]
\end{lemma}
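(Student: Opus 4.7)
The plan is to bound $\E[|\Comp_s(Q_s(\vec{v}))|]$ by decomposing the encoding, applying the per-symbol Elias length bound, and invoking Jensen's inequality twice.

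First, I would decompose the code length according to the encoding specification. Let $N = \|Q_s(\vec{v})\|_0$ denote the (random) number of nonzero coordinates of $\vec{z}$, with positions $i_1 < i_2 < \cdots < i_N$ and gaps $g_j := i_j - i_{j-1}$ (convention $i_0 := 0$), so that $\sum_j g_j \leq n$. Each nonzero contributes $|\Elias(g_j)| + 1 + |\Elias(s \xi_{i_j})|$ bits, and there are $32$ bits for $\|\vec{v}\|_2$. Applying the bound $|\Elias(k)| \leq (1 + o(1))\log k + 1$ listed earlier, each summand is at most $3 + (1 + o(1))(\log g_j + \log(s \xi_{i_j}))$.

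Second, I would apply Jensen's inequality to the inner sums, conditional on the nonzero pattern. With $T := \sum_{j=1}^N s \xi_{i_j}$, concavity of $\log$ gives
\[
\sum_{j=1}^N \log g_j \leq N \log(n/N), \qquad \sum_{j=1}^N \log(s\xi_{i_j}) \leq N \log(T/N),
\]
so the conditional bound on the code length becomes $32 + 3N + (1+o(1))[N\log(n/N) + N\log(T/N)]$. Third, I would take the outer expectation. The map $(x,y) \mapsto x \log(y/x)$ is jointly concave on the positive quadrant (a direct Hessian calculation shows the eigenvalues are non-positive), so Jensen yields $\E[N\log(n/N)] \leq \E[N]\log(n/\E[N])$ and $\E[N\log(T/N)] \leq \E[N]\log(\E[T]/\E[N])$. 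Summing,
\[
\E\bigl[|\Comp_s(Q_s(\vec{v}))|\bigr] \leq 32 + 3\,\E[N] + (1+o(1))\,\E[N]\log\!\left( \frac{n\,\E[T]}{\E[N]^2} \right).
\]

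Fourth, I would insert the quantitative bounds $\E[N] \leq s(s+\sqrt{n})$ from the sparsity claim of Lemma~\ref{lem:quant2-facts} and $\E[T] = s\,\|\vec{v}\|_1/\|\vec{v}\|_2 \leq s\sqrt{n}$ via Cauchy--Schwarz. Algebraic simplification of $n\E[T]/\E[N]^2$ against the target expression $2(s^2 + n)/(s(s+\sqrt{n}))$ produces the advertised logarithm, and the prefactor $3/2$ emerges because, after the simplification, the two log contributions become asymptotically equal in the dominant regime and can be merged.

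The main obstacle is step four: the function $x \mapsto x\log(C/x^{2})$ is concave rather than monotone in $x$, so substituting the upper bound on $\E[N]$ into $\E[N]\log(n\E[T]/\E[N]^2)$ is not automatically valid. The hypothesis $s^{2}+\sqrt{n}\leq n/2$ is exactly what is needed here: it places $\E[N]$ on the increasing branch of this map (below the critical point $\sqrt{C}/e$ with $C = n\,\E[T]$), so the bound propagates in the correct direction, and it simultaneously guarantees the ratio inside the final logarithm is at least a constant, making the $(1+o(1))$ absorption clean.
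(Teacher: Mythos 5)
Your decomposition of the code, the per-symbol Elias bound, and the two applications of Jensen (including the joint concavity of $(x,y)\mapsto x\log(y/x)$) are all sound, and up to one important design choice they parallel the paper's argument. The paper, however, controls the value-encoding part through the $\ell_2$ norm of the quantized vector, using the \emph{deterministic} bound $s^2\|\vec{\zeta}\|_2^2\leq 2(s^2+n)$, whereas you control it through the $\ell_1$ norm $T=s\|\vec{\zeta}\|_1$, and this is exactly where your argument breaks. In the paper's version the only random quantity left inside the concave map is the sparsity, and the map is $x\mapsto x\log(C/x)$ with a \emph{fixed} constant $C=2(s^2+n)\geq 2n$, so its critical point $C/e$ lies comfortably above $s^2+\sqrt{n}\leq n/2$ and the monotone substitution of the expected sparsity by its upper bound is legitimate. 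In your version the final map is $x\mapsto x\log(n\,\E[T]/x^2)$, whose critical point $\sqrt{n\,\E[T]}/e$ depends on the input vector, and your claim that the hypothesis $s^2+\sqrt{n}\leq n/2$ places $\E[N]$ on the increasing branch is false. Concretely, take $s=\sqrt{n}/2$ and $\vec{v}$ with $n/4$ equal-magnitude nonzero coordinates: then $\|\vec{v}\|_1/\|\vec{v}\|_2=\sqrt{n}/2$, so $\E[T]=n/4$ and the critical point is $n/(2e)\approx 0.18\,n$, while $\E[N]$ (and the bound you would substitute) is about $n/4$, even though the hypothesis $n/4+\sqrt{n}\leq n/2$ holds. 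So step four, which you yourself flag as the main obstacle, is not justified as written; it could presumably be repaired by bounding the concave function by its maximum over the admissible range and checking that this still sits below the target expression, but that argument is absent, and the paper's $\ell_2$-based route avoids the issue altogether.

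A second, smaller problem: the statement to be proved has prefactor $(s^2+\sqrt{n})$ and logarithm $\log\bigl(2(s^2+n)/(s^2+\sqrt{n})\bigr)$, but you invoke the main-text sparsity bound $\E[N]\leq s(s+\sqrt{n})$ and aim at the expression $2(s^2+n)/(s(s+\sqrt{n}))$, which is the theorem's form, not the lemma's. Since the resulting upper bound is increasing in the sparsity parameter in this regime, what you would obtain is strictly weaker than the stated inequality. Proving the lemma as stated requires the sharper estimate $\E[\|Q_s(\vec{v})\|_0]\leq s^2+\sqrt{n}$ (at most $s^2$ normalized coordinates can exceed $1/s$, and the remaining ones are nonzero with total probability at most $\|\vec{v}\|_1/\|\vec{v}\|_2\leq\sqrt{n}$), which the paper proves as a separate lemma and which your proposal does not establish.
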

This lemma together with Lemma \ref{lem:quant2-facts} suffices to prove Theorem \ref{thm:quant2}.

We first show a technical lemma about the behavior of the coordinate-wise coding function $c$ on a vector with bounded $\ell_p$ norm.

\begin{lemma}
\label{lem:jensen}
Let $\vec{q} \in \R^d$ be a vector so that for all $i$, we have that $q_i$ is a positive integer, and moreover, $\| \vec{q} \|_p^p \leq \rho$. Then
\[
\sum_{i = 1}^d | \Elias (\vec{q}_i) | \leq \left( \frac{1 + o(1)}{p} \log \left( \frac{\rho}{n} \right) + 1 \right) n  \; .
\]
\end{lemma}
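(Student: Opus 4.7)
The plan is straightforward: apply the bit-length bound $|\Elias(k)| \le (1+o(1))\log k + 1$ from the recursive Elias lemma term-by-term, then use concavity of $\log$ (Jensen's inequality) to bound $\sum_i \log q_i$ subject to the constraint $\sum_i q_i^p \le \rho$.

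First I would sum the Elias length bound over the coordinates (treating $d=n$ as in the statement) to obtain
\[
\sum_{i=1}^{n} |\Elias(q_i)| \;\le\; (1+o(1))\sum_{i=1}^{n} \log q_i \;+\; n.
\]
So the task reduces to bounding $\sum_i \log q_i$ under the norm constraint. Writing $\log q_i = \tfrac{1}{p}\log(q_i^p)$ and using concavity of $\log$,
\[
\frac{1}{n}\sum_{i=1}^{n} \log(q_i^p) \;\le\; \log\!\left(\frac{1}{n}\sum_{i=1}^{n} q_i^p\right) \;=\; \log\!\left(\frac{\|\vec{q}\|_p^p}{n}\right) \;\le\; \log\!\left(\frac{\rho}{n}\right).
\]
Multiplying by $n/p$ and substituting back yields
\[
\sum_{i=1}^{n} |\Elias(q_i)| \;\le\; \frac{(1+o(1))\,n}{p}\log\!\left(\frac{\rho}{n}\right) + n \;=\; \left(\frac{1+o(1)}{p}\log\!\left(\frac{\rho}{n}\right) + 1\right)n,
\]
which is exactly the claimed bound.

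The only subtle point is that the Elias length bound $(1+o(1))\log k + 1$ is stated for positive integers $k$, but since the $q_i$ are positive integers we can apply it directly; the $o(1)$ is uniform as $k$ grows, and for the finitely many small values of $q_i$ any additive slack is absorbed into the $+1$ term per coordinate. A minor care point is the log-of-ratio interpretation: when $\rho/n < 1$ the bound still holds (the right-hand side just becomes loose relative to the trivial bound $\sum_i |\Elias(q_i)| \le n \cdot |\Elias(1)| = n$), so we don't need a separate case analysis. No major obstacle arises; the lemma is essentially Jensen's inequality applied to the Elias length estimate.
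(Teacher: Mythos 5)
Your proposal is correct and follows the same route as the paper: sum the bound $|\Elias(k)| \leq (1+o(1))\log k + 1$ over coordinates, rewrite $\log q_i$ as $\tfrac{1}{p}\log(q_i^p)$, and apply Jensen's inequality to the concave $\log$ under the constraint $\|\vec{q}\|_p^p \leq \rho$. The additional remarks about small $q_i$ and the case $\rho/n < 1$ are fine but not needed beyond what the paper's argument already covers.
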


\begin{proof}
Recall that for any positive integer $k$, the length of $\Elias(k)$ is at most $(1 + o(1)) \log k + 1$. Hence, we have
\begin{align*}
\sum_{i = 1}^d | \Elias (\vec{q}_i) | &\leq (1 + o(1))  \sum_{i = 1}^d (\log \vec{q}_i ) + d \\
&\leq \frac{1 + o(1)}{p} \sum_{i = 1}^n ( \log (\vec{q}_i^p) ) + d \\
&\stackrel{(a)}{\leq} \frac{1 + o(1)}{p} n \log \left( \frac{1}{n} \sum_{i = 1}^n \vec{q}_i^p \right) ++ d \\
&\leq \frac{1 + o(1)}{p} n \log \left( \frac{\rho}{n} \right) + n \; 
\end{align*}
where (a) follows from Jensen's inequality.
\end{proof}

\noindent We can bound the number of information bits needed for our coding scheme in terms of the number of non-zeroes of our vector.

\begin{lemma}
\label{lem:comp1-bitbound} For any tuple $(A, \vec{\sigma}, \vec{z}) \in {\mathcal B}_s$, the string $\Comp_s (A, \vec{\sigma}, \vec{z})$ has length of at most this many bits: 
\[
F + \left( (1 + o(1)) \cdot \log \left( \frac{n}{ \| \vec{z} \|_0} \right) + \frac{1 + o(1)}{2} \log\left(\frac{s^2 \|\vec{z}\|_2^2}{\| \vec{z} \|_0} + 3 \right)\right)\cdot \| \vec{z} \|_0.
\]
\end{lemma}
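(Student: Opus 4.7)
The plan is to decompose the length of $\Comp_s(A, \vec{\sigma}, \vec{z})$ into its natural constituent pieces and then bound the Elias-coded portions via Lemma \ref{lem:jensen}. Write $k := \|\vec{z}\|_0$. Inspecting the description of the encoding scheme, the codeword splits into four disjoint blocks: (i) an $F$-bit preamble that stores the scalar $A$; (ii) one sign bit $\vec{\sigma}_i \in \{\pm 1\}$ for each nonzero coordinate of $\vec{z}$, contributing exactly $k$ bits; (iii) the Elias recursive codewords for the $k$ gaps between successive nonzero coordinates of $\vec{z}$ (positive integers summing to at most $n$); and (iv) the Elias recursive codewords for the $k$ positive integer magnitudes $s \vec{z}_i$ at the nonzero positions, which satisfy $\sum_i (s \vec{z}_i)^2 = s^2 \|\vec{z}\|_2^2$.

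I would then apply Lemma \ref{lem:jensen} twice to handle (iii) and (iv). For the gap sequence in (iii), I take $p = 1$, $d = k$, $\rho = n$, obtaining
$$\sum_{j} |\Elias(\mathrm{gap}_j)| \;\leq\; \bigl((1+o(1))\log(n/k) + 1\bigr)\,k.$$
For the integer magnitudes in (iv), I take $p = 2$, $d = k$, $\rho = s^2 \|\vec{z}\|_2^2$, obtaining
$$\sum_{i} |\Elias(s \vec{z}_i)| \;\leq\; \Bigl(\tfrac{1+o(1)}{2}\log(s^2 \|\vec{z}\|_2^2 / k) + 1\Bigr)\,k.$$
A mild point to check in (iv) is that every $s \vec{z}_i$ at a nonzero index is a positive integer in $\{1, \ldots, s\}$, so Lemma \ref{lem:jensen}'s hypothesis applies directly. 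Summing (i)--(iv) produces a raw total of
$$F \,+\, 3k \,+\, (1+o(1))\,k\log(n/k) \,+\, \tfrac{1+o(1)}{2}\,k\log(s^2 \|\vec{z}\|_2^2 / k).$$

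The final step is a purely cosmetic one: to match the stated bound I would absorb the residual $3k$ (one $k$ from the sign bits, and one from each of the two $+1$'s produced by Lemma \ref{lem:jensen}) into the logarithmic terms. Specifically, I would use $\log(x + 3) \geq \log(x)$ to replace the argument $s^2\|\vec{z}\|_2^2/k$ by $s^2\|\vec{z}\|_2^2/k + 3$ inside the second logarithm — this can only enlarge the bound — and then use the slack in the $(1+o(1))$ coefficients (which is legitimate since Lemma \ref{lem:jensen} itself is proved with these asymptotic factors) to cover what remains. The main obstacle I anticipate is precisely this bookkeeping: one has to check that the constants line up in the regime where $k$ is close to $n$ and thus $\log(n/k)$ is small, so that the $(1+o(1))$ factors really do absorb the leftover $O(k)$ term. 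A strictly cleaner but less symmetric alternative would simply be to state the bound with an explicit $+\,3k$ outside the logs, which is what the calculation naturally produces.
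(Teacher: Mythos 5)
Your proposal is correct and follows essentially the same route as the paper's own proof: the identical decomposition into the $F$-bit header for $A$, one sign bit per nonzero, Elias-coded gaps bounded via Lemma \ref{lem:jensen} with $p=1$, $\rho=n$, and Elias-coded magnitudes $s\vec{z}_i$ bounded via Lemma \ref{lem:jensen} with $p=2$, $\rho=s^2\|\vec{z}\|_2^2$, giving the same raw total $F+3\|\vec{z}\|_0+(1+o(1))\|\vec{z}\|_0\log(n/\|\vec{z}\|_0)+\tfrac{1+o(1)}{2}\|\vec{z}\|_0\log(s^2\|\vec{z}\|_2^2/\|\vec{z}\|_0)$. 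Your closing concern about absorbing the $3\|\vec{z}\|_0$ into the ``$+3$'' inside the second logarithm is reasonable but immaterial: the paper's proof likewise stops at the raw sum, and its subsequent use in Lemma \ref{thm:bits} keeps the $3\,\E[\|\vec{\zeta}\|_0]$ term outside the logarithm, so the discrepancy lies in the lemma's phrasing rather than in your argument.
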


\begin{proof}
First, the float $A$ takes $F$ bits to communicate.
Let us now consider the rest of the string.
We break up the string into a couple of parts.
First, there is the subsequence $S_1$ dedicated to pointing to the next nonzero coordinate of $\vec{z}$.
Second, there is the subsequence $S_2$ dedicated to communicating the sign and $c(\vec{z}_i)$ for each nonzero coordinate $i$.
While these two sets of bits are not consecutive within the string, it is clear that they partition the remaining bits in the string.
We bound the length of these two substrings separately.

We first bound the length of $S_1$.
Let $i_1, \ldots, i_{\| \vec{z} \|_0}$ be the nonzero coordinates of $\vec{z}$.
Then, from the definition of $\Comp_s$, it is not hard to see that $S_1$ consists of the encoding of the vector 
\[
\vec{q}^{(1)} = (i_1, i_2 - i_1, \ldots, i_{\| \vec{z} \|_0} - i_{\| \vec{z} \|_0 - 1}) \; ,
\] 
where each coordinate of this vector is encoded using $c$.
By Lemma \ref{lem:jensen}, since this vector has length $\| \vec{z} \|_0$ and has $\ell_1$ norm at most $n$, we have that 
\begin{align}
\label{eq:S1-bound}
|S_1| \leq \left( (1 + o(1)) \log \frac{n}{\| \vec{z} \|_0} + 1 \right) \| \vec{z} \|_0 \; .
\end{align}

We now bound the length of $S_2$.
Per non-zero coordinate of $\vec{z}$, we need to communicate a sign (which takes one bit), and $c (s \vec{z}_i)$.
Thus by Lemma \ref{lem:jensen}, we have that
\begin{align*}
|S_2| &= \sum_{j = 1}^{\| \vec{z} \|_0} (1 + |\Elias (s \vec{z}_i)|) \\
&\leq \| \vec{z} \|_0 + \left( \frac{(1 + o(1))}{2} \log \frac{s^2 \|\vec{z}\|_2^2}{\| \vec{z} \|_0} + 1 \right) \| \vec{z} \|_0 \; . \numberthis \label{eq:S2-bound}
\end{align*}
Putting together (\ref{eq:S1-bound}) and (\ref{eq:S2-bound}) yields the desired conclusion.
\end{proof}

We first need the following technical lemma about the number of nonzeros of $Q(\vec{v},s)$ that we have in expectation.

\begin{lemma}
\label{lem:nnz-bound}
Let $\vec{v}\in \R^n$ such that $\|\vec{v}\|_2 \neq 0$. Then 
$$
\E [\| Q (\vec{v},s) \|_0] \leq s^2 + \sqrt{n}.
$$
\end{lemma}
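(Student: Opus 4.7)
The plan is to rewrite the sparsity count as a sum of Bernoulli indicators and apply linearity of expectation: $\|Q(\vec v, s)\|_0 = \sum_{i=1}^n \mathbf{1}[\xi_i(\vec v, s) \neq 0]$, so it suffices to determine the per-coordinate activation probabilities. Writing $a_i := |v_i|/\|\vec v\|_2$, I would unpack the definition of $\xi_i$ case by case. When $a_i < 1/s$ the level index $\ell$ equals $0$, so both rounding targets are $0$ and $1/s$, giving $\Pr[\xi_i \neq 0] = p(a_i,s) = s a_i$. When $a_i \geq 1/s$ we have $\ell \geq 1$, so both targets $\ell/s$ and $(\ell+1)/s$ are strictly positive and $\Pr[\xi_i \neq 0] = 1$. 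Combining the two cases,
\[
\E[\|Q(\vec v, s)\|_0] \;=\; \sum_{i=1}^n \min(1, s a_i), \qquad \text{with } \sum_{i=1}^n a_i^2 = 1.
\]

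The next step is to partition the indices into a ``large'' set $L = \{i : a_i \geq 1/s\}$ and a ``small'' set $S = [n] \setminus L$, and bound the contributions separately. For $L$, the constraint $a_i^2 \geq 1/s^2$ together with $\sum_i a_i^2 \leq 1$ immediately gives $|L| \leq s^2$, so the contribution of $L$ is at most $s^2$. For $S$, the contribution is $\sum_{i \in S} s a_i$, and the natural tool is Cauchy--Schwarz against the residual $\ell_2$ budget $\sum_{i \in S} a_i^2 \leq 1 - |L|/s^2$, together with $|S| \leq n$.

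The main obstacle is that a direct Cauchy--Schwarz application yields $\sum_{i\in S} s a_i \leq s\sqrt{|S|(1 - |L|/s^2)} \leq \sqrt{|S|(s^2 - |L|)}$, so the crude upper bound on the total is $|L| + \sqrt{(n - |L|)(s^2 - |L|)}$, which falls short of the stated $s^2 + \sqrt{n}$ when $n$ is much larger than $s^2$. To close this gap I would try to strengthen the small-coordinate bound using the \emph{pointwise} constraint $a_i < 1/s$ active on $S$; for instance the inequality $a_i^2 \leq a_i/s$ on $S$ converts the $\ell_2$ budget into an $\ell_1$ budget $\sum_{i \in S} a_i \leq s \cdot \bigl(1 - |L|/s^2\bigr)$, and I would combine this with the length constraint $|S| \leq n$ via interpolation (Hölder between the trivial $\min(1, sa_i) \leq 1$ bound and $\min(1, sa_i) = sa_i$ on $S$).

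Finally I would plug the resulting estimate back into $|L| + (\text{contribution from }S)$ and optimize over $|L| \in [0, s^2]$, checking the endpoints $|L|=0$ and $|L|=s^2$ as well as the stationary point. If the refined small-coordinate inequality delivers the claimed $\sqrt n$ (rather than $s\sqrt n$) residual, we obtain the bound; otherwise the analysis shows the tightest universal constant with this split is $s(s+\sqrt n)$, which already suffices for every subsequent use of the lemma in the encoding-length argument that follows.
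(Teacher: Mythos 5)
Your plan follows exactly the same route as the paper's own proof: normalize to $\vec{u}=\vec{v}/\|\vec{v}\|_2$, split coordinates into the large set $L=\{i: u_i\geq 1/s\}$, bound $|L|\leq s^2$ from the $\ell_2$ budget, and control the small coordinates via their activation probabilities together with $\|\vec{u}\|_1\leq\sqrt{n}$. The place where you stall, however, is not a defect of your argument but precisely where the paper's proof slips: the paper asserts that each small coordinate is nonzero ``with probability $u_i$,'' whereas by the definition of $\xi_i$ (here $\ell=0$ and $p(a,s)=as$) the correct probability is $s\,u_i$, exactly as you computed. With the correct probability the decomposition gives $\E[\|Q(\vec{v},s)\|_0]\leq |L|+s\sum_{i\notin L}u_i\leq s^2+s\sqrt{n}=s(s+\sqrt{n})$, and no refinement of the small-coordinate estimate can recover the stated $\sqrt{n}$ residual: for $\vec{u}=(1/\sqrt{n},\ldots,1/\sqrt{n})$ with $1\ll s\ll\sqrt{n}$ every coordinate is nonzero with probability $s/\sqrt{n}$, so $\E[\|Q(\vec{v},s)\|_0]=s\sqrt{n}$, which exceeds $s^2+\sqrt{n}$ (take $s=n^{1/4}$ and $n$ large). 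So the interpolation step you hoped for cannot exist, and your fallback bound $s(s+\sqrt{n})$ is the correct form of the lemma under the quantization as defined.

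It is worth noting that $s(s+\sqrt{n})$ is exactly the sparsity bound stated in Lemma~\ref{lem:quant2-facts}(iii) and used in Theorem~\ref{thm:quant2} in the main text, so your fallback is consistent with how the result is actually invoked there; only the appendix statement above (and the constants carried into the subsequent encoding-length bound, Lemma~\ref{thm:bits}) would need to be restated with $s(s+\sqrt{n})$ in place of $s^2+\sqrt{n}$. The dense-regime estimate of $2.8n+32$ bits is unaffected, since that encoding transmits every coordinate and never uses the sparsity lemma. In short: your case analysis of the activation probabilities is correct, your achievable bound matches the main text, and the discrepancy you identified is a genuine inconsistency in the stated appendix lemma rather than a gap you failed to close.
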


\begin{proof} Let $\vec{u} = \vec{v} / \| \vec{v} \|_2$. Let $I(\vec{u})$ denote the set of coordinates $i$ of $\vec{u}$ so that $u_i \leq 1 / s$.
Since 
\[
1 \geq \sum_{i \not\in I(\vec{u})} u_i^2 \geq (n - |I(\vec{u})|) / s^2 \; ,
\] 
we must have that $s^2 \geq n - |I(\vec{u})|$. Moreover, for each $i \in I(\vec{u})$, we have that $Q_i(\vec{v},s)$ is nonzero with probability $\vec{u}_i$, and zero otherwise.
Hence
\begin{align*}
\E [L (\vec{v})] &\leq n - |I(\vec{u})| + \sum_{i \in I(\vec{u})} u_i \leq s^2 + \| \vec{u} \|_1 \leq s^2 + \sqrt{n} \; .
\end{align*}
\end{proof}

\paragraph{Proof of Lemma~\ref{thm:bits}} Let $Q (\vec{v},s) = (\| \vec{v} \|_2, \vec{\sigma}, \vec{\zeta})$, and let $\vec{u} = \vec{v} / \| \vec{v} \|_2$.
Observe that we always have that
\begin{equation}
\label{eq:R-bound}
\|\vec{\zeta}\|_2^2 \leq \sum_{i = 1}^n \left(u_i + \frac{1}{s} \right)^2 \stackrel{(a)}{\leq} 2 \sum_{i = 1}^n u_i^2 + 2 \sum_{i = 1}^n \frac{1}{s^2} = 2 \left( 1 +  \frac{n}{s^2} \right) \; ,
\end{equation}
where (a) follows since $(a + b)^2 \leq 2(a^2 + b^2)$ for all $a, b \in \R$.

By Lemma \ref{lem:comp1-bitbound}, we now have that 
\begin{align*}
\E [| \Comp_s (Q (\vec{v},s)| ] \leq &  F + (1 + o(1)) \E \left[ \| \vec{\zeta} \|_0 \log \left( \frac{n}{\| \vec{\zeta} \|_0} \right) \right] \\
+ & \frac{1 + o(1)}{2} \E \left[ \| \vec{\zeta} \|_0 \log \left( \frac{s^2 R(\vec{\zeta})}{\| \vec{\zeta} \|_0} \right) \right] + 3 \E [\| \vec{\zeta} \|_0 ] \\
\leq & F +  (1 + o(1)) \E \left[ \| \vec{\zeta} \|_0 \log \left( \frac{n}{\| \vec{\zeta} \|_0} \right) \right] +  \\
& \frac{1 + o(1)}{2} \E \left[ \| \vec{\zeta} \|_0 \log \left( \frac{ 2 \left( s^2 +  n \right)}{ \|\vec{\zeta} \|_0} \right) \right] + 3 \left( s^2 + \sqrt{n} \right) \; ,
\end{align*}
by (\ref{eq:R-bound}) and Lemma \ref{lem:nnz-bound}.

It is a straightforward verification that the function $f(x) = x \log \left(  \frac{C}{x} \right)$ is concave for all $C > 0$. Moreover, it is increasing up until $x = C / 2$, and decreasing afterwards. Hence, by Jensen's inequality, Lemma \ref{lem:nnz-bound}, and the assumption that $s^2 + \sqrt{n} \leq n / 2$, we have that 
\begin{align*}
\E \left[ \| \vec{\zeta} \|_0 \log \left( \frac{n}{\| \vec{\zeta} \|_0} \right) \right] &\leq (s^2 + \sqrt{n}) \log \left( \frac{n}{s^2 + \sqrt{n}} \right) \; \mbox{, and}\\
\E \left[ \| \vec{\zeta} \|_0 \log \left( \frac{ 2 \left( s^2 +  n \right)}{\| \vec{\zeta} \|_0} \right)  \right] &\leq (s^2 + \sqrt{n}) \log \left( \frac{2(s^2 + n)}{s^2 + \sqrt{n}} \right) \; .
\end{align*}
Simplifying yields the expression in the Lemma.

\subsection{A Compression Scheme for $Q_s$ Matching Theorem \ref{thm:quant2-alt}}
\label{sec:quant2-alt}

For the case of the quantized SGD scheme that requires $\Theta(n)$ bits per iteration, we can improve the constant factor in the bit length bound in Theorem~\ref{thm:bits} by using a different encoding of $Q(\vec{v},s)$.
This corresponds to the regime where $s = \sqrt{n}$, i.e., where the quantized update is not expected to be sparse.
In this case, there is no advantage gained by transmitting the location of the next nonzero, since generally that will simply be the next coordinate of the vector. Therefore, we may as well simply transmit the value of each coordinate in sequence.

Motivated by the above remark, we define the following alternative compression function. Define $\Elias' (k) = \Elias (k + 1)$ to be a compression function on all nonnegative natural numbers.
It is easy to see that this is uniquely decodable.
Let $\Comp_s' $ be the compression function which, on input $(A, \vec{\sigma}, \vec{z})$, simply encodes every coordinate of $\vec{z}$ in the same way as before, even if it is zero, using $\Elias'$.
It is straightforward to show that this compression function is still uniquely decodable.
Then, just as before, our full quantization scheme is as follows.
For any arbitrary vector $\vec{v}$, we first compute $Q(\vec{v}, s)$, and then encode using $\Comp'_s$.
In our notation, this is expressed as $\vec{v} \to \Comp_s' (Q(\vec{v}, s))$.
For this compression scheme, we show:

\begin{lemma}
\label{thm:comp'}
For any $\vec{v}\in \R^n$, we have
\[
\E [|\Comp_s' (Q(\vec{v},s))|] \leq F + \left( \frac{1 + o(1)}{2} \left(  \log \left( 1 + \frac{s^2 + \min (n, s \sqrt{n})}{n} \right) + 1 \right) + 2 \right) n \; .
\]
In particular, if $s = \sqrt{n}$, then $\E [|\Comp_s' (Q(\vec{v},s))|] \leq F + 2.8 n$.
\label{thm:bits2}
\end{lemma}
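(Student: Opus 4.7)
My plan is to decompose the bit-length of $\Comp_s'(Q(\vec{v},s))$ into three additive parts: the $F$-bit encoding of $\|\vec{v}\|_2$ (deterministic), one sign bit per coordinate of $\vec{v}$ (deterministic, exactly $n$ bits since $\Comp_s'$ processes every coordinate regardless of whether it is zero), and the magnitude part $\sum_{i=1}^n |\Elias'(s\xi_i(\vec{v},s))| = \sum_{i=1}^n |\Elias(s\xi_i(\vec{v},s)+1)|$ (random). The first two contribute $F+n$ bits with certainty, so all the work lies in bounding the expectation of the third sum.

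I would then apply Lemma~\ref{lem:jensen} with $p=2$ to the vector $\vec{q}$ defined by $\vec{q}_i = s\xi_i(\vec{v},s)+1 \in \{1,\ldots,s+1\}$, which is a vector of positive integers as the lemma requires. This yields
\begin{equation*}
\sum_{i=1}^n |\Elias(\vec{q}_i)| \;\leq\; \left(\frac{1+o(1)}{2}\log\frac{\|\vec{q}\|_2^2}{n}+1\right) n,
\end{equation*}
and because the right-hand side is concave in $\|\vec{q}\|_2^2$, Jensen's inequality then lets me replace $\|\vec{q}\|_2^2$ by its expectation inside the logarithm. Expanding the square,
\begin{equation*}
\E\!\left[\|\vec{q}\|_2^2\right] \;=\; s^2\sum_{i=1}^n \E[\xi_i^2] \;+\; 2s\sum_{i=1}^n \E[\xi_i] \;+\; n.
\end{equation*}
Using exactly the two bounds $p(a,s)\leq 1$ and $p(a,s)\leq as$ employed in the proof of Lemma~\ref{lem:quant2-facts}, the first summand is at most $s^2 + \min(n,s\sqrt{n})$; the second equals $2s\|\vec{v}\|_1/\|\vec{v}\|_2 \leq 2s\sqrt{n}$ by Cauchy--Schwarz, since $\E[\xi_i] = |v_i|/\|\vec{v}\|_2$.

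Combining these yields $\E[\|\vec{q}\|_2^2] \leq n + s^2 + \min(n,s\sqrt{n}) + 2s\sqrt{n}$, so dividing by $n$ and isolating the constant ``$1$'' inside the logarithm recovers the form $\log\bigl(1+(s^2+\min(n,s\sqrt{n}))/n\bigr)$ up to the linear cross term $2s\sqrt{n}/n$. This cross term is the main obstacle: it is not pointwise dominated by $s^2+\min(n,s\sqrt{n})$, so I would absorb it either into the $o(1)$ slack of the Elias bound (using the sharper telescoping form $|\Elias(k)| \leq \log k + \log\log k + \cdots + 1$) or via the crude inequality $2s\sqrt{n}\leq s^2+n$, contributing only a lower-order constant. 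Adding the $F+n$ deterministic overhead plus the ``$+1$'' per coordinate contributed by Lemma~\ref{lem:jensen} accounts for the ``$+2$'' factor in the stated expression, completing the proof. The special case $s=\sqrt{n}$ gives $1+2=3$ inside the logarithm, and a careful numerical evaluation of the $o(1)$ factor via the telescoping Elias bound then sharpens the constant to yield the $\leq F+2.8n$ corollary.
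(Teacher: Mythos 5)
Your proposal is correct in substance and follows essentially the same route as the paper: split the code length into $F$ bits for the norm, one sign bit per coordinate, and the Elias lengths of $s\xi_i+1$; bound the latter via the concavity/Jensen argument; and plug in $\sum_i \E[\xi_i^2] \le 1+\min(n/s^2,\sqrt n/s)$ from the proof of Lemma~\ref{lem:quant2-facts}. The one imprecision is your treatment of the cross term $2s\sum_i\E[\xi_i]\le 2s\sqrt n$: it is not a lower-order contribution, and it cannot be absorbed into the $(1+o(1))$ factor of the Elias bound (for $s=\Theta(\sqrt n)$ it adds a constant inside the logarithm, i.e.\ $\Theta(n)$ bits). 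What closes the argument is exactly your second option: by AM--GM, $2s\sqrt n\le s^2+n$, hence $\E[\|\vec{q}\|_2^2]\le 2n\left(1+\frac{s^2+\min(n,s\sqrt n)}{n}\right)$, and the resulting $\log 2$ is precisely the ``$+1$'' inside the parenthesis of the stated bound---it is consumed slack, not negligible error. This is the same device the paper uses, only pointwise: Lemma~\ref{lem:comp2-bitbound} bounds $(s\vec{z}_i+1)^2\le 2(1+s^2\vec{z}_i^2)$ coordinate-wise before taking expectations, whereas you apply Cauchy--Schwarz and AM--GM after summing; the two are interchangeable. Your closing remark on the $s=\sqrt n$ case is as loose as the paper's own (the displayed bound with the $o(1)$ dropped evaluates to roughly $3.3n$, so the $2.8n$ figure requires a finer accounting of the Elias lengths in either treatment).
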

It is not hard to see that this is equivalent to the bound stated in Theorem \ref{thm:quant2-alt}.

We start by showing the following lemma.
\begin{lemma}
\label{lem:comp2-bitbound}
For any tuple $(A, \vec{\sigma}, \vec{z})\in {\mathcal B}_s$, the string $\Comp_s' (A, \vec{\sigma}, \vec{z})$ has length of at most this many bits: 
\[
F + \left( \frac{1 + o(1)}{2} \left( \log \left( 1 + \frac{s^2 \|\vec{z}\|_2^2}{n} \right) + 1 \right) + 2 \right) n.
\]
\end{lemma}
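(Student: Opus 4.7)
\textbf{Proof plan for Lemma~\ref{lem:comp2-bitbound}.} I would decompose the bit length additively and bound each piece separately. The scalar $A$ contributes $F$ bits. For each coordinate $i \in \{1, \ldots, n\}$, the encoding stores one sign bit plus the Elias codeword $\Elias'(s \vec{z}_i) = \Elias(s\vec{z}_i + 1)$, where $s\vec{z}_i \in \{0, 1, \ldots, s\}$. Using the stated length bound $|\Elias(k)| \leq (1+o(1))\log k + 1$ for positive integers $k$, we get
\[
|\Comp'_s(A, \vec{\sigma}, \vec{z})| \;\leq\; F + n + \sum_{i=1}^n \bigl((1+o(1))\log(s\vec{z}_i + 1) + 1\bigr) \;=\; F + 2n + (1+o(1))\sum_{i=1}^n \log(s\vec{z}_i + 1).
\]
So everything reduces to bounding $\sum_{i=1}^n \log(s\vec{z}_i + 1)$ in terms of $\|\vec{z}\|_2^2$.

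The key trick will be to write $\log(s\vec{z}_i + 1) = \tfrac{1}{2}\log((s\vec{z}_i + 1)^2)$ and then apply Jensen's inequality to the concave function $\log$:
\[
\sum_{i=1}^n \log(s\vec{z}_i + 1) \;\leq\; \frac{n}{2}\log\!\left(\frac{1}{n}\sum_{i=1}^n (s\vec{z}_i + 1)^2\right) \;=\; \frac{n}{2}\log\!\left(1 + \frac{s^2\|\vec{z}\|_2^2}{n} + \frac{2s\|\vec{z}\|_1}{n}\right).
\]
The main (small) obstacle is to remove the cross term $\tfrac{2s\|\vec{z}\|_1}{n}$ and collapse the logarithm into the form $\log(1 + s^2\|\vec{z}\|_2^2/n)$ claimed in the statement. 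I would handle it in two steps: first, Cauchy--Schwarz gives $\|\vec{z}\|_1 \leq \sqrt{n}\,\|\vec{z}\|_2$, so setting $u = s\|\vec{z}\|_2/\sqrt{n}$ yields
\[
1 + \frac{s^2\|\vec{z}\|_2^2}{n} + \frac{2s\|\vec{z}\|_1}{n} \;\leq\; 1 + u^2 + 2u \;=\; (1+u)^2.
\]
Second, for any $u \geq 0$ we have $(1+u)^2 \leq 2(1+u^2)$, since $2u \leq 1 + u^2$. Therefore
\[
\log\!\left(1 + \frac{s^2\|\vec{z}\|_2^2}{n} + \frac{2s\|\vec{z}\|_1}{n}\right) \;\leq\; 1 + \log\!\left(1 + \frac{s^2\|\vec{z}\|_2^2}{n}\right).
\]

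Plugging this back yields $\sum_i \log(s\vec{z}_i + 1) \leq \tfrac{n}{2}\bigl(1 + \log(1 + s^2\|\vec{z}\|_2^2/n)\bigr)$, and substituting into the displayed bound for $|\Comp'_s|$ gives exactly
\[
F + 2n + \frac{(1+o(1))\,n}{2}\left(\log\!\left(1 + \frac{s^2\|\vec{z}\|_2^2}{n}\right) + 1\right),
\]
which is the claimed inequality. The only non-routine step is the $(1+u)^2 \leq 2(1+u^2)$ manipulation used to absorb the Cauchy--Schwarz cross term into the constant $2n$ at the front; everything else is a direct application of the Elias length bound and Jensen's inequality.
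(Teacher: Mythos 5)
Your proposal is correct and essentially the same as the paper's proof: identical decomposition ($F$ bits for $A$, one sign bit plus $\Elias(s\vec{z}_i+1)$ per coordinate), the same Elias length bound, and the same Jensen/concavity step yielding the same constants. The only cosmetic difference is where the cross term is absorbed: the paper bounds $(1+s\vec{z}_i)^2 \leq 2(1+s^2\vec{z}_i^2)$ coordinate-wise and then applies Jensen to $x \mapsto \log(1+x)$, whereas you apply Jensen to $\log$ first and then remove the cross term globally via Cauchy--Schwarz and $(1+u)^2 \leq 2(1+u^2)$ --- both land on exactly the claimed bound.
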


\begin{proof} 
The proof of this lemma follows by similar arguments as that of Lemma~\ref{lem:comp1-bitbound}.
The main differences are that (1) we do not need to encode the position of the nonzeros, and (2) we always encode $\Elias (k + 1)$ instead of $\Elias (k)$.
Hence, for coordinate $i$, we require $1 + \Elias (s \vec{z}_i + 1)$ bits, since in addition to encoding $\vec{z}_i$ we must also encode the sign.
Thus the total number of bits may be bounded by
\begin{align*}
F + \sum_{i = 1}^n (\Elias (\vec{z}_i + 1) + 1) &= F + n + \sum_{i = 1}^n \Elias (s \vec{z}_i + 1) \\
&\leq F + n + \sum_{i = 1}^n \left[ (1 + o(1)) \log (s \vec{z}_i + 1) + 1 \right] \\
&\leq F + 2n + (1 + o(1)) \sum_{i = 1}^n \log (s \vec{z}_i + 1) \\
&\leq F + 2n + \frac{1 + o(1)}{2} \sum_{i = 1}^n \log ((s \vec{z}_i + 1)^2) \\
&\stackrel{(a)}{\leq} F + 2n + \frac{1 + o(1)}{2} \sum_{i = 1}^n (\log (1 + s^2 \vec{z}_i^2) + \log \left( 2 \right) ) \\
&\stackrel{(b)}{\leq} F + 2n + \frac{1 + o(1)}{2} n \left( \log \left(1 + \frac{1}{n} \sum_{i = 1}^n s^2 \vec{z}_i^2 \right) + 1 \right)
\end{align*}
where (a) follows from basic properties of logarithms and (b) follows from the concavity of the function $x \mapsto \log (1 + x)$ and Jensen's inequality.
Simplifying yields the desired statement.
\end{proof}

\paragraph{Proof of Lemma~\ref{thm:bits2}} As in the proof of Lemma~\ref{thm:bits}, let $Q(\vec{v},s) = (\| \vec{v} \|_2, \vec{\sigma}, \vec{\zeta})$, and let $\vec{u} = \vec{v} / \| \vec{v} \|_2$. By Lemma \ref{lem:comp2-bitbound}, we have
\begin{align*}
\E [|\Comp_s' (Q(\vec{v},s)|] &\leq 
F + \left( \frac{1 + o(1)}{2} \left( \E \left[ \log \left( 1 + \frac{s^2 R(\vec{\zeta})}{n} \right) \right] + 1 \right) + 2 \right) n \\
&\stackrel{(a)}{\leq} F + \left( \frac{1 + o(1)}{2} \left(  \log \left( 1 + \frac{ \E \left[ s^2 R(\vec{\zeta}) \right]}{n} \right) + 1 \right) + 2 \right) n \\
&\stackrel{(b)}{\leq} F + \left( \frac{1 + o(1)}{2} \left(  \log \left( 1 + \frac{s^2 (1 + \min (n / s^2, \sqrt{n} / s) }{n} \right) + 1 \right) + 2 \right) n \\
\end{align*}
where (a) follows from Jensen's inequality, and (b) follows from the proof of Lemma \ref{lem:quant2-facts}.

%
%


\section{Quantized SVRG}
\label{sec:app-svrg}

\paragraph{Variance Reduction for Sums of Smooth Functions.}
One common setting in which SGD sees application in machine learning is when $f$ can be naturally expressed as a sum of smooth functions.
Formally, we assume that $f (\vec{x}) = \frac{1}{m} \sum_{i = 1}^m f_i (\vec{x})$.
When $f$ can be expressed as a sum of smooth functions, this lends itself naturally to SGD.
This is because a natural stochastic gradient for $f$ in this setting is, on input $\vec{x}$, to sample a uniformly random index $i$, and output $\nabla f_i (\vec{x})$.
We will also impose somewhat stronger assumptions on $f$ and $f_1, f_2,\ldots, f_m$, namely, that $f$ is strongly convex, and that each $f_i$ is convex and smooth.

\begin{definition}[Strong Convexity]
Let $f: \R^n \to \R$ be a differentiable function.
We say that $f$ is $\ell$-strongly convex if for all $x, y \in \R^n$, we have
\[f(x) - f(y) \leq \nabla f(x)^T (x - y) - \frac{\ell}{2} \| x - y \|_2^2 \; .\]
\end{definition}
Observe that when $\ell = 0$ this is the standard definition of convexity.

Note that it is well-known that even if we impose these stronger assumptions on $f$ and $f_1,f_2,\ldots,f_m$, then by only applying SGD one still cannot achieve exponential convergence rates, i.e. error rates which improve as $\exp (-T)$ at iteration $T$.
(Such a rate is known in the optimization literature as \emph{linear} convergence.)
However, an epoch-based modification of SGD, known as stochastic variance reduced gradient descent (SVRG)~\cite{SVRG}, is able to give such rates in this specific setting.
We describe the method below, following the presentation of  Bubeck~\cite{Bubeck}.

\paragraph{Background on SVRG.} Let $\vec{y}^{(1)} \in \R^n$ be an arbitrary point.
For $p = 1, 2, \ldots, P$, we let $\vec{x}_1^{(p)} = \vec{y}^{(p)}$.
Each $p$ is called an \emph{epoch}.
Then, within epoch $p$, for $t = 1, \ldots, T$, we let $i^{(p)}_t$ be a uniformly random integer from $[m]$ completely independent from everything else, and we set:
\[
\vec{x}^{(p)}_{t + 1} = \vec{x}^{(p)}_t - \eta \left( \nabla f_{i^{(p)}_t} (\vec{x}^{(p)}_t) - \nabla f_{i^{(p)}_t} (\vec{y}^{(p)}) + \nabla f (\vec{y}^{(p)})  \right) \; .
\]
We then set
\[
y^{(p + 1)} = \frac{1}{k} \sum_{i = 1}^k \vec{x}^{(p)}_t \; .
\]
With this iterative scheme, we have the following guarantee:
\begin{theorem}[\cite{SVRG}]
\label{thm:svrg}
Let $f (\vec{x}) = \frac{1}{m} \sum_{i = 1}^m f_i (\vec{x})$, where $f$ is $\ell$-strongly convex, and $f_i$ are convex and $L$-smooth, for all $i$.
Let $\vec{x}^*$ be the unique minimizer of $f$ over $\R^n$.
Then, if $\eta = O(1 / L)$ and $T = O(L / \ell)$, we have
\begin{equation}
\label{eq:svrg}
\E \left[ f(\vec{y}^{(p  +1)}) \right] - f(\vec{x^*}) \leq 0.9^p \left( f(\vec{y}^{(1)}) - f(\vec{x}^*) \right) \; .
\end{equation}
\end{theorem}

\paragraph{Quantized SVRG.}
In parallel SVRG, we are given $K$ processors, each processor $i$ having access to $f_{i m / K}, \ldots, f_{(i + 1) m / K - 1}$.
The goal is the same as before: to approximately minimize $f = \frac{1}{m} \sum_{i = 1}^m f_i$.
For processor $i$, let $h_i  = \frac{1}{m} \sum_{j = i m / K}^{(i + 1) m / K - 1} f_i$ be the portion of $f$ that it knows, so that $f = \sum_{i = 1}^K h_i$.

A natural question is whether we can apply randomized quantization to reduce communication for parallel SVRG.
Whenever one applies our quantization functions to the gradient updates in SVRG, the resulting update is no longer an update of the form used in SVRG, and hence the analysis for SVRG does not immediately give any results in black-box fashion.
Instead, we prove that despite this technical issue, one can quantize SVRG updates using our techniques and still obtain the same convergence bounds.

Let $\tQ (\vec{v}) = Q (\vec{v}, \sqrt{n})$, where $Q(\vec{v}, s)$ is defined as in Section \ref{sec:quant2}.
Our quantized SVRG updates are as follows.
Given arbitrary starting point $\vec{x}_0$, we let $\vec{y}^{(1)} = \vec{x}_0$.
At the beginning of epoch $p$, each processor broadcasts 
\[
H_{p, i} = \tQ \left( \frac{1}{m} \sum_{j = i m / K}^{(i + 1) m / K - 1} \nabla f_i (y^{(p)}) \right) =   \tQ \left( \nabla h_i \right) \; ,
\]
from which the processors collectively form $H_p = \sum_{i = 1}^m H_{p, i}$ without additional communication.
Within each epoch, for each iteration $t = 1, \ldots, T$, and for each processor $i = 1, \ldots, K$, we let $j^{(p)}_{i, t}$ be a uniformly random integer from $[m]$ completely independent from everything else.
Then, in iteration $t$ in epoch $p$, processor $i$ broadcasts the update vector
\[
\vec{u}^{(p)}_{t, i} = \tQ \left( \nabla f_{j^{(p)}_{i, t}} (\vec{x}^{(p)}_t) - \nabla f_{j^{(p)}_{i, t}}(\vec{y}^{(p)}) + H_p \right). \;
\]

Each processor then computes the total update for that iteration $\vec{u}^{(p)}_t = \frac{1}{K} \sum_{i = 1}^K \vec{u}_{t, i}$, and sets $\vec{x}^{(p)}_{t  +1} = \vec{x}^{(p)}_t - \eta \vec{u}^{(p)}_t$.
At the end of epoch $p$, each processor sets $\vec{y}^{(p + 1)} = \frac{1}{T} \sum_{t = 1}^T \vec{x}^{(p)}_t$.

Our main theorem is that this algorithm still converges, and is communication efficient:
\begin{theorem}
\label{thm:qsvrg}
Let $f (\vec{x}) = \frac{1}{m} \sum_{i = 1}^m f_i (\vec{x})$, where $f$ is $\ell$-strongly convex, and $f_i$ are convex and $L$-smooth, for all $i$.
Let $\vec{x}^*$ be the unique minimizer of $f$ over $\R^n$.
Then, if $\eta = O(1 / L)$ and $T = O(L / \ell)$, then QSVRG with initial point $\vec{y}^{(1)}$ ensures
\begin{equation}
\label{eq:svrg}
\E \left[ f(\vec{y}^{(p  +1)}) \right] - f(\vec{x^*}) \leq 0.9^p \left( f(\vec{y}^{(1)}) - f(\vec{x}^*) \right) \; .
\end{equation}
%
%
Moreover, QSVRG with $P$ epochs and $T$ iterations per epoch requires $\leq P (F + 2.8 n) (T  +1)$ bits of communication per processor.
\end{theorem}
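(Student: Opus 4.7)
The plan is to mimic the classical SVRG convergence analysis underlying Theorem \ref{thm:svrg}, treating the randomness introduced by $\tQ$ as an additional, independent source of zero-mean noise whose variance is of the same order as that of the unquantized SVRG estimator, so the standard Lyapunov argument carries over after adjusting constants. The first step is to observe that, because the full gradient $\nabla f(\vec{y}^{(p)}) = \sum_{i=1}^K \nabla h_i(\vec{y}^{(p)})$ is transmitted exactly at the start of each epoch, the inner vector
\[
\vec{g}^{(p)}_{i,t} \;:=\; \nabla f_{j^{(p)}_{i,t}}(\vec{x}^{(p)}_t) - \nabla f_{j^{(p)}_{i,t}}(\vec{y}^{(p)}) + \nabla f(\vec{y}^{(p)})
\]
is exactly the standard SVRG estimator, and by Lemma~\ref{lem:quant2-facts}(i) its quantization $\vec{u}^{(p)}_{t,i} = \tQ(\vec{g}^{(p)}_{i,t})$ is conditionally unbiased for $\vec{g}^{(p)}_{i,t}$. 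Taking expectations, the processor-averaged update $\vec{u}^{(p)}_t$ is itself an unbiased estimator of $\nabla f(\vec{x}^{(p)}_t)$.

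The key technical step is the variance bound. By the tower law and independence of the $K$ quantizations,
\[
\E\bigl\|\vec{u}^{(p)}_t - \nabla f(\vec{x}^{(p)}_t)\bigr\|_2^2 \;=\; \E\bigl\|\vec{u}^{(p)}_t - \bar{\vec{g}}^{(p)}_t\bigr\|_2^2 + \E\bigl\|\bar{\vec{g}}^{(p)}_t - \nabla f(\vec{x}^{(p)}_t)\bigr\|_2^2,
\]
where $\bar{\vec{g}}^{(p)}_t = \tfrac1K\sum_i \vec{g}^{(p)}_{i,t}$. The second term is the standard SVRG variance, bounded by $O(L)\bigl(f(\vec{x}^{(p)}_t) - f^* + f(\vec{y}^{(p)}) - f^*\bigr)$ via the classical smoothness-and-convexity argument (see \cite{SVRG,Bubeck}). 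For the first term, since $s = \sqrt{n}$ makes $\min(n/s^2,\sqrt{n}/s) = 1$, Lemma~\ref{lem:quant2-facts}(ii) gives $\E\|\vec{u}^{(p)}_{t,i} - \vec{g}^{(p)}_{i,t}\|_2^2 \le \|\vec{g}^{(p)}_{i,t}\|_2^2$, and independence across $i$ together with the same smoothness identity yields $\E\|\vec{u}^{(p)}_t - \bar{\vec{g}}^{(p)}_t\|_2^2 \le \tfrac{1}{K}\cdot O(L)\bigl(f(\vec{x}^{(p)}_t) - f^* + f(\vec{y}^{(p)}) - f^*\bigr)$. Thus the total variance has the same functional form as in vanilla SVRG, only with a larger absolute constant.

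Plugging this augmented variance bound into the one-step descent inequality for $\|\vec{x}^{(p)}_{t+1} - \vec{x}^*\|_2^2$, summing over $t = 1,\dots,T$, using convexity of $f$ to bound $f(\vec{y}^{(p+1)}) - f^*$ by the average of function gaps along the trajectory, and applying $\ell$-strong convexity to relate distance and function value, yields a recursion of the form $\E[f(\vec{y}^{(p+1)})] - f^* \le \rho\bigl(f(\vec{y}^{(p)}) - f^*\bigr)$ with $\rho = \Theta\bigl(\tfrac{1}{\eta\ell T(1-c\eta L)}\bigr)$ for an absolute constant $c$; choosing $\eta = \Theta(1/L)$ and $T = \Theta(L/\ell)$ with sufficiently small/large constants drives $\rho \le 0.9$, establishing (\ref{eq:svrg}). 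The communication bound is then an immediate accounting: by Corollary~\ref{thm:quant2-alt}, each of the $T$ per-iteration quantized broadcasts per processor costs at most $F + 2.8n$ bits, each processor's epoch-start exact transmission of $\nabla h_i(\vec{y}^{(p)})$ costs $Fn$ bits, and an additional quantized aggregate costs another $F + 2.8n$ bits, giving the stated $(F + 2.8n)(T+1) + Fn$ total per epoch. The main obstacle is ensuring the quantization-induced variance does not dominate the SVRG variance, which is precisely why the $s = \sqrt{n}$ regime (variance blowup factor of $1$) is used rather than a sparser quantization.
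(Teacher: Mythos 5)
Your convergence argument is essentially the paper's: establish conditional unbiasedness of the quantized update from Lemma~\ref{lem:quant2-facts}(i), bound its variance/second moment in the $s=\sqrt{n}$ regime by $O(L)\left(f(\vec{x}_t)-f(\vec{x}^*)+f(\vec{y})-f(\vec{x}^*)\right)$ via Lemma~\ref{lem:quant2-facts}(ii) combined with the standard SVRG smoothness argument, and plug this into the usual epoch recursion with $\eta=\Theta(1/L)$ and $T=\Theta(L/\ell)$. The paper does the same after reducing the $K$-processor scheme to a single minibatched processor; your explicit orthogonal decomposition of the error across processors is an equivalent bookkeeping of the same bound.

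There is, however, a concrete mismatch with the statement you were asked to prove, on two linked points. The claimed communication bound $P(F+2.8n)(T+1)$ corresponds to the variant analyzed in the paper's proof, in which the epoch-start gradients are \emph{also quantized}: each processor broadcasts $H_{p,i}=\tQ\left(\nabla h_i(\vec{y}^{(p)})\right)$ at cost $F+2.8n$ bits, which is exactly where the ``$+1$'' comes from. Your accounting instead assumes an exact $Fn$-bit transmission of $\nabla h_i(\vec{y}^{(p)})$ plus an extra quantized aggregate, arriving at $(F+2.8n)(T+1)+Fn$ per epoch; that is the main-text version of the bound and it does not establish the stated $\leq P(F+2.8n)(T+1)$. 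Conversely, once the epoch gradient is quantized, your premise that the inner vector is ``exactly the standard SVRG estimator'' fails: the update uses $\tQ\left(\nabla f(\vec{y}^{(p)})\right)$ in place of $\nabla f(\vec{y}^{(p)})$, and the analysis must additionally control the term $\E\left[\|\nabla f(\vec{y})-\tQ(\nabla f(\vec{y}))\|_2^2\right]\leq\|\nabla f(\vec{y})\|_2^2\leq 2L\left(f(\vec{y})-f(\vec{x}^*)\right)$, which the paper folds into its chain of inequalities (this is why its constant grows to $C\leq 16$). Adding this step, and switching your bit count to the quantized epoch broadcast, closes the gap; note that the paper itself is inconsistent between its Section~\ref{sec:svrg-main} description and the appendix on precisely this point.
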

In particular, observe that when $L / \ell$ is a constant, this implies that for all epochs $p$, we may communicate $O(p n)$ bits and get an error rate of the form (\ref{eq:svrg}).
Up to constant factors, this matches the lower bound given in \cite{TL}.

\begin{proof}[Proof of Theorem \ref{thm:qsvrg}]
By Theorem \ref{thm:comp'}, each processor transmits at most $F + 2.8 n$ bits per iteration, and then an additional $F + 2.8 n$ bits per epoch to communicate the $H_{p, i}$.
Thus the claimed communication bound follows trivially.

We now turn our attention to correctness.
As with the case of quantized SGD, it is not hard to see that the parallel updates are equivalent to minibatched updates, and serve only to decrease the variance of the random gradient estimate.
Hence, as before, for simplicity of presentation, we will consider the effect of quantization on convergence rates on a single processor.
In this case, the updates can be written down somewhat more simply.
Namely, in iteration $t$ of epoch $p$, we have that
\[
\vec{x}^{(p)}_{t + 1} = \vec{x}^{(p)}_t - \eta \tQ^{(p)}_{t} \left( \nabla f_{j^{(p)}_{t}} (\vec{x}^{(p)}_t) - \nabla f_{j^{(p)}_{t}}(\vec{y}^{(p)}) + \tQ^{(p)} (\nabla f(\vec{y})) \right) \; ,
\]
where $j^{(p)}_t$ is a random index of $[m]$, and $\tQ^{(p)}_{t}$ and $\tQ^{(p)}$ are all different, independent instances of $\tQ$.

We follow the presentation in \cite{Bubeck}.
Fix an epoch $p \geq 1$, and let $\E$ denote the expectation taken with respect to the randomness within that epoch.

We will show that
\[
\E \left[ f\left( \vec{y}^{(p + 1)}\right) \right] - f(\vec{x}^*) = \E \left[ \frac{1}{T} \sum_{t = 1}^T \vec{x}^{(p)}_t \right] - f(\vec{x}^*) \leq 0.9^p \left( f( \vec{y}^{(1)}) - f(\vec{x}^*) \right) \; .
\]
This clearly suffices to show the theorem.
Because we only deal with a fixed epoch, for simplicity of notation, we shall proceed to drop the dependence on $p$ in the notation.
For $t = 1, \ldots, T$, let $\vec{v}_t = \tQ_{t} \left( \nabla f_{j_{t}} (\vec{x}_t) - \nabla f_{j_{t}}(\vec{y}) - \tQ^{(p)} (\nabla f(\vec{y})) \right)$ be the update in iteration $t$.
It suffices to show the following two equations:
\begin{align*}
\E_{j_t, \tQ_t, \tQ} \left[ \vec{v}_t \right] &= \nabla f (\vec{x}_t) \; \mbox{, and} \\
\E_{j_t, \tQ_t, \tQ} \left[ \| \vec{v}_t \|^2 \right] &\leq C \cdot L \left( f(\vec{x}_t) - f(\vec{x}^*) + f(\vec{y}) - f(\vec{x}^*) \right) \; ,
\end{align*}
where $C$ is some universal constant.
That the first equation is true follows from the unbiasedness of $\tQ$.
We now show the second.  We have: 
\begin{align*}
\E_{j_t, \tQ_t, \tQ} \left[ \| \vec{v}_t \|^2 \right] &= \E_{j_t, \tQ} \E_{\tQ_s} \left[ \| \vec{v}_t \|^2 \right] \\
&\stackrel{(a)}{\leq} 2 \E_{j_t, \tQ} \left[ \left\| \nabla f_{j_{t}} (\vec{x}_t) - \nabla f_{j_{t}}(\vec{y}) + \tQ^{(p)} (\nabla f(\vec{y})) \right\|^2 \right] \\
&\stackrel{(b)}{\leq} 4 \E_{j_t} \left[ \left\| \nabla f_{j_{t}} (\vec{x}_t) - \nabla f_{j_{t}}(\vec{x}^*) \right\|^2 \right] + 4 \E_{j_t, \tQ} \left[ \left\| \nabla f_{j_{t}} (\vec{x}^*) - \nabla f_{j_{t}}(\vec{y}) + \tQ^{(p)} (\nabla f(\vec{y})) \right\|^2 \right] \\
&\stackrel{(c)}{\leq}  4 \E_{j_t} \left[ \left\| \nabla f_{j_{t}} (\vec{x}_t) - \nabla f_{j_{t}}(\vec{x}^*) \right\|^2 \right] + 4 \E_{j_t, \tQ} \left[ \left\| \nabla f_{j_{t}} (\vec{x}^*) - \nabla f_{j_{t}}(\vec{y}) + \nabla f (\vec{y}) \right\|^2 \right]  \\
&~~~~~~~~+ 4 \E_{j_t, \tQ} \left[ \left\| \nabla f (\vec{y}) - \tQ^{(p)} (\nabla f(\vec{y})) \right\|^2 \right] \\
&\stackrel{(d)}{\leq}  4 \E_{j_t} \left[ \left\| \nabla f_{j_{t}} (\vec{x}_t) - \nabla f_{j_{t}}(\vec{x}^*) \right\|^2 \right] + 4 \E_{j_t, \tQ} \left[ \left\| \nabla f_{j_{t}} (\vec{x}^*) - \nabla f_{j_{t}}(\vec{y}) + \nabla f (\vec{y}) \right\|^2 \right]  \\
&~~~~~~~~+ 8 \left\| \nabla f (\vec{y}) \right\|^2 \\
&\stackrel{(e)}{\leq}  8 L \left(f (\vec{x}_t) - f(\vec{x}^*) \right) + 4 L \left( f(\vec{y}) - f(\vec{x}^*) \right) + 16 L (f(\vec{y}) - f(\vec{x}^*)) \\
&\leq C \cdot L \left( f(\vec{x}_t) - f(\vec{x}^*) + f(\vec{y}) - f(\vec{x}^*) \right),
\end{align*}
as claimed, for some positive constant $C \leq 16$. Here (a) follows from Lemma \ref{lem:quant2-facts}, (b) and (c) follow from the fact that $(a + b)^2 \leq 2a^2 + 2b^2$ for all scalars $a, b$, (d) follows from Lemma \ref{lem:quant2-facts} and independence, and (e) follows from Lemma 6.4 in \cite{Bubeck} and the standard fact that $\| \nabla f(\vec{y}) \|^2 \leq 2 L (f(\vec{y}) - f(\vec{x^*}))$ if $f$ is $\ell$-strongly convex.

Plugging these bounds into proof structure in \cite{Bubeck} yields the proof of \ref{thm:svrg}, as claimed.
\end{proof}

\paragraph{Why does naive quantization not achieve this rate?}
Our analysis shows that quantized SVRG achieves the communication efficient rate, using roughly 2.8 times as many bits per iteration, and roughly $C / 2 = 8$ times as many iterations.
This may beg the question why naive quantization schemes (say, quantizing down to 16 or 32 bits) fails.
At a high level, this is because any such quantization can inherently only achieve up to constant error, since the stochastic gradients are always biased by a (small) constant.
To circumvent this, one may quantize down to $O(\log 1 / \epsilon)$ bits, however, this only matches the upper bound given by \cite{TL}, and is off from the optimal rate (which we achieve) by a logarithmic factor.

\section{Quantization for Non-convex SGD}
\label{sec:non-convex}

As stated previously, our techniques are portable, and apply easily to a variety of settings where SGD is applied.
As a demonstration of this, we show here how we may use quantization on top of recent results which show that SGD converges to local minima when applied on smooth, non-convex functions.

Throughout this paper, our theory only considers the case when $f$ is a convex function.
In many interesting applications such as neural network training, however, the objective is non-convex, where much less is known.
However, there has been an interesting line of recent work which shows that SGD at least always provably converges to a local minima, when $f$ is smooth.
For instance, by applying Theorem 2.1 in \cite{GL13}, we immediately obtain the following convergence result for quantized SGD.
Let $Q_s$ be the quantization function defined in Section \ref{sec:quant2}.
Here we will only state the convergence bound; the communication complexity per iteration is the same as in \ref{sec:quant2}.

\begin{theorem}
Let $f: \R^n \to \R$ be a $L$-smooth (possibly nonconvex) function, and let $\vec{x}_1$ be an arbitrary initial point.
Let $ T > 0$ be fixed, and $s > 0$.
Then there is a random stopping time $R$ supported on $\{1, \ldots, N\}$ so that QSGD with quantization function $Q_s$, and constant stepsizes $\eta = O(1 / L)$ and access to stochastic gradients of $f$ with second moment bound $B$ satisfies
\[
\frac{1}{L} \E \left[  \| \nabla f (\vec{x}) \|_2^2 \right] \leq O \left( \frac{\sqrt{L (f(\vec{x}_1) - f^*)}}{N} + \frac{(1 + \min (n / s^2, \sqrt{n} / s))B}{L} \right) \; .
\]
\end{theorem}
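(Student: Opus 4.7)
The plan is to reduce the quantized non-convex case to the standard (un-quantized) non-convex SGD convergence theorem of Ghadimi--Lan~\cite{GL13}, exactly as the excerpt suggests. The key observation is that the per-iteration update in QSGD is itself an unbiased stochastic gradient of $f$, only with a larger second moment; once this is established, the claimed bound follows by a direct black-box invocation.

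First, I would verify that $Q_s(\tg(\vec{x}))$ is itself a valid stochastic gradient for $f$ at $\vec{x}$. Conditioning on $\tg(\vec{x})$ and using Lemma~\ref{lem:quant2-facts}(i) (unbiasedness of $Q_s$), the tower property gives $\E[Q_s(\tg(\vec{x}))] = \E[\tg(\vec{x})] = \nabla f(\vec{x})$. Second, I would bound the second moment. Conditionally on $\tg(\vec{x})$, Lemma~\ref{lem:quant2-facts}(ii) (variance bound) yields
\begin{equation*}
\E\bigl[\|Q_s(\tg(\vec{x}))\|_2^2 \mid \tg(\vec{x})\bigr] \leq \Bigl(1 + \min\bigl(n/s^2, \sqrt{n}/s\bigr)\Bigr)\,\|\tg(\vec{x})\|_2^2,
\end{equation*}
since $\E[\|Q_s(\vec{v})\|_2^2] = \|\vec{v}\|_2^2 + \E[\|Q_s(\vec{v}) - \vec{v}\|_2^2]$ by unbiasedness. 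Taking another expectation and using the second moment bound $B$ on $\tg$ gives a second moment bound $B' := (1 + \min(n/s^2, \sqrt{n}/s))\,B$ on $Q_s(\tg(\vec{x}))$.

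Next, I would quote Theorem~2.1 of Ghadimi--Lan~\cite{GL13}, which says: if $f$ is $L$-smooth and we run SGD with an unbiased stochastic gradient of second moment bound $\beta$, constant step size $\eta = O(1/L)$, for $N$ iterations starting at $\vec{x}_1$, then there exists a random stopping time $R\in\{1,\dots,N\}$ (in their construction, $R$ is sampled proportionally to $\eta_R$, which is uniform here) such that
\begin{equation*}
\E\bigl[\|\nabla f(\vec{x}_R)\|_2^2\bigr] \;\leq\; O\!\left(\frac{L(f(\vec{x}_1)-f^*)}{N} + \beta\right),
\end{equation*}
and a suitable choice of $\eta \asymp \min(1/L,\sqrt{(f(\vec{x}_1)-f^*)/(LN\beta)})$ produces the $\sqrt{L(f(\vec{x}_1)-f^*)/N}$ term appearing in our bound. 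Plugging $\beta = B'$ from the previous step and dividing by $L$ gives exactly the stated guarantee.

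The main potential obstacle is a bookkeeping one: making sure that the Ghadimi--Lan theorem applies verbatim to $Q_s(\tg(\vec{x}))$, i.e., that the two independent sources of randomness (the stochastic gradient $\tg$ and the quantization $Q_s$) can be folded into a single unbiased stochastic oracle. This is immediate once we condition on $\tg$ first, since the quantization randomness is independent of everything else across iterations, so the sequence $(Q_s(\tg_t(\vec{x}_t)))_{t\geq 1}$ is a martingale-difference-type unbiased oracle with the required second moment bound. The communication-cost statement is inherited directly from Theorem~\ref{thm:quant2}, with no modification needed for the non-convex setting.
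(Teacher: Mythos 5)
Your proposal is correct and follows essentially the same route as the paper: the paper likewise treats $Q_s(\tg(\vec{x}))$ as an unbiased stochastic gradient whose second moment is inflated by the factor $1+\min(n/s^2,\sqrt{n}/s)$ via Lemma~\ref{lem:quant2-facts}, and then invokes Theorem~2.1 of~\cite{GL13} as a black box (noting, as you do, that a second moment bound suffices in place of their variance bound). No further comparison is needed.
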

Observe that the only difference in the assumptions in \cite{GL13} from what we generally assume is that they assume a variance bound on the stochastic gradients, whereas we prefer a second moment bound.
Hence our result applies immediately to their setting.

Another recent result \cite{Rochester} demonstrates local convergence for SGD for smooth non-convex functions in asynchronous settings.
The formulas there are more complicated, so for simplicity we will not reproduce them here.
However, it is not hard to see that quantization affects the convergence bounds there in a manner which is parallel to Theorem \ref{thm:quant2}.

\section{Asynchronous QSGD}
\label{app:asynchronous}

We consider an asynchronous parameter-server model~\cite{ParameterServer}, modelled identically as in~\cite[Section 3]{Rochester}. 
In brief, the system consists of a star-shaped network, with a central parameter server, communicating with worker nodes, which exchange information with the master independently and simultaneously. 
Asynchrony consists of the fact that competing updates might be applied by the master to the shared parameter (but workers always get a consistent version of the parameter). 

In this context, the following follows from~\cite[Theorem 1]{Rochester}: 

\begin{theorem}
Let $f: \R^n \to \R$ be a $L$-smooth (possibly nonconvex) function, and let $\vec{x}_1$ be an arbitrary initial point. 
Assume unbiased stochastic gradients, with bounded variance $\sigma^2$, and Lipschitzian gradient with parameter $L$. Let $K$ be the number of iterations, and $M$ be the minibatch size. 
Further assume that all the locations of the gradient updates $\{ \xi_{k, m} \}_{k = [K], m = [M]}$ are independent random variables, 
and that the delay with which each update is applied is upper bounded by a parameter $T$. 
Finally, assume that the steplength sequence $\{ \gamma_k \}_{k = [K]}$ satisfies 
$$ LM \gamma_k + 2L^2 M^2 T \gamma_k \sum_{\kappa = 1}^T \gamma_{k + \kappa} \leq 1, \forall k = 1, 2, \ldots.$$ 

We then have the following ergodic convergence rate for the iteration of QSGD with quantization function $Q_s$. Let $\gamma = \sum_{k = 1}^K \gamma_k$, and $\sigma_s = (1 + \min (n / s^2, \sqrt{n} / s)) \sigma$. Then: 
$$ 
\sum_{k = 1}^{K} \frac{\gamma_k \E\left[ \| \nabla f( x_k ) \| \right]}{\gamma} \leq \frac{2 ( f(x_1) - f(x^*) ) + \sum_{k = 1}^K \left(  \gamma_k^2 ML + 2 L^2 M^2 \gamma_k \sum_{j = k - T}^{k - 1} \gamma_j^2 \right) \sigma_s^2 }{M \gamma}.
$$
%
%
%
%

\end{theorem}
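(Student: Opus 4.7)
The plan is to reduce the asynchronous QSGD convergence analysis to the existing non-quantized result \cite[Theorem 1]{Rochester} by treating the stochastic quantizer as an additional, independent zero-mean perturbation inserted between the stochastic gradient oracle and the parameter server. Let $\hat g(\vec x) := Q_s(\tg(\vec x))$ denote the quantized gradient each worker actually transmits, where every invocation of $Q_s$ uses fresh internal randomness. First I would verify that $\hat g$ meets the structural hypotheses required by \cite{Rochester}: unbiasedness follows from Lemma~\ref{lem:quant2-facts}(i) together with the tower rule, so $\E[\hat g(\vec x)] = \E[\tg(\vec x)] = \nabla f(\vec x)$; the $L$-smoothness of $f$ is untouched; and the independence of the sample locations $\{\xi_{k,m}\}$ is preserved because the quantization coins are drawn independently of the sampling process and of the asynchronous delay pattern.

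The second step is to propagate the variance bound through the composition. Conditioning on $\tg$ and applying Lemma~\ref{lem:quant2-facts}(ii) gives $\E[\|\hat g - \tg\|_2^2 \mid \tg] \leq \min(n/s^2,\sqrt{n}/s)\,\|\tg\|_2^2$. Since $\hat g - \tg$ has conditional mean zero given $\tg$, the cross term vanishes and
\begin{equation*}
\E\bigl[\|\hat g - \nabla f\|_2^2\bigr] = \E\bigl[\|\hat g - \tg\|_2^2\bigr] + \E\bigl[\|\tg - \nabla f\|_2^2\bigr] \leq \min\!\left(\tfrac{n}{s^2}, \tfrac{\sqrt n}{s}\right) \E[\|\tg\|_2^2] + \sigma^2.
\end{equation*}
Combining this with the standard identity $\E[\|\tg\|_2^2] = \|\nabla f\|_2^2 + \E[\|\tg - \nabla f\|_2^2]$ (or the equivalent second-moment hypothesis) yields an effective variance proportional to $\sigma_s^2 = (1 + \min(n/s^2,\sqrt n/s))\sigma^2$, matching the constant displayed in the theorem up to the usual second-moment-versus-variance conversion.

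The third step is then essentially mechanical: substitute $\sigma_s^2$ for $\sigma^2$ in the ergodic-rate bound of \cite[Theorem 1]{Rochester}. Since all of Rochester's hypotheses---smoothness, bounded variance, independence of sample locations, bounded delay $T$, and the steplength condition---are satisfied by the quantized iterates, the entire proof (including the handling of delayed-update cross terms) carries through verbatim after this substitution and reproduces the displayed inequality.

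The main obstacle, and the point that most merits careful justification, is the independence requirement on the noise in the asynchronous model. In Rochester's analysis the delayed-update cross terms are controlled by leveraging conditional independence of the new noise from the stale iterate $\vec x_{k-\tau}$. One must check that the extra quantization randomness introduced at each worker is independent not only of its own sample $\xi_{k,m}$ but also of the delay realization and of the other workers' updates. Because the quantizer draws fresh internal coins per call and does not read the global system state, this independence holds by construction, so no new cross terms appear and the remainder of the argument is a symbolic substitution of $\sigma_s$ for $\sigma$ in Rochester's bound.
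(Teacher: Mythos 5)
Your proposal follows essentially the same route as the paper, which gives no detailed argument beyond observing that the quantized gradient remains an unbiased stochastic gradient whose variance is inflated by the factor $1 + \min(n/s^2, \sqrt{n}/s)$, and then invoking Theorem~1 of \cite{Rochester} with $\sigma$ replaced by $\sigma_s$; your additional bookkeeping (conditioning on $\tg$ to kill the cross term, and checking independence of the quantization coins from the sampling and delay randomness) fills in exactly the steps the paper leaves implicit. The only caveat, which the paper shares, is that your variance bound retains a $\min(n/s^2,\sqrt{n}/s)\,\|\nabla f\|_2^2$ term unless one works with a second-moment bound rather than a pure variance bound, a conflation the paper explicitly permits.
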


\section{Experiments}
\label{app:experiments}

We now empirically validate our approach on  data-parallel GPU training of deep neural networks.


\begin{table}[t]
\centering
\caption{Description of networks.}
\label{tab:description2}
{\small
\begin{tabular}{| c | c | c | c | c | c |  c | }\hline 
 Network & Dataset  & Epochs & Parameters & Init. L. Rate & Minibatch size (2, 4, 8, 16 GPUs) \\ \hline
AlexNet &	 ImageNet & 112 &	62M  &	0.07  & Varies (256, 512, 1024, 1024) \\
BN-Inception &	ImageNet  & 300 &	11M &	3.6 & Varies (256, 256, 256, 1024) \\
ResNet152 & 	ImageNet  &	120 & 60M &	 1 & Varies (32, 64, 128, 256) \\
VGG19 & 	ImageNet & 	80 & 143M & 	0.1 & Varies (64, 128, 256) \\
ResNet110 &	 CIFAR-10 & 	160 & 1M & 	0.1 & 128\\
LSTM & AN4  &	20 & 13M &	0.5 & 256 \\ \hline
\end{tabular}
}
\end{table}

 \begin{figure*}[tb]
 \begin{center}
  \subfigure
  {\includegraphics[width=.4\textwidth]{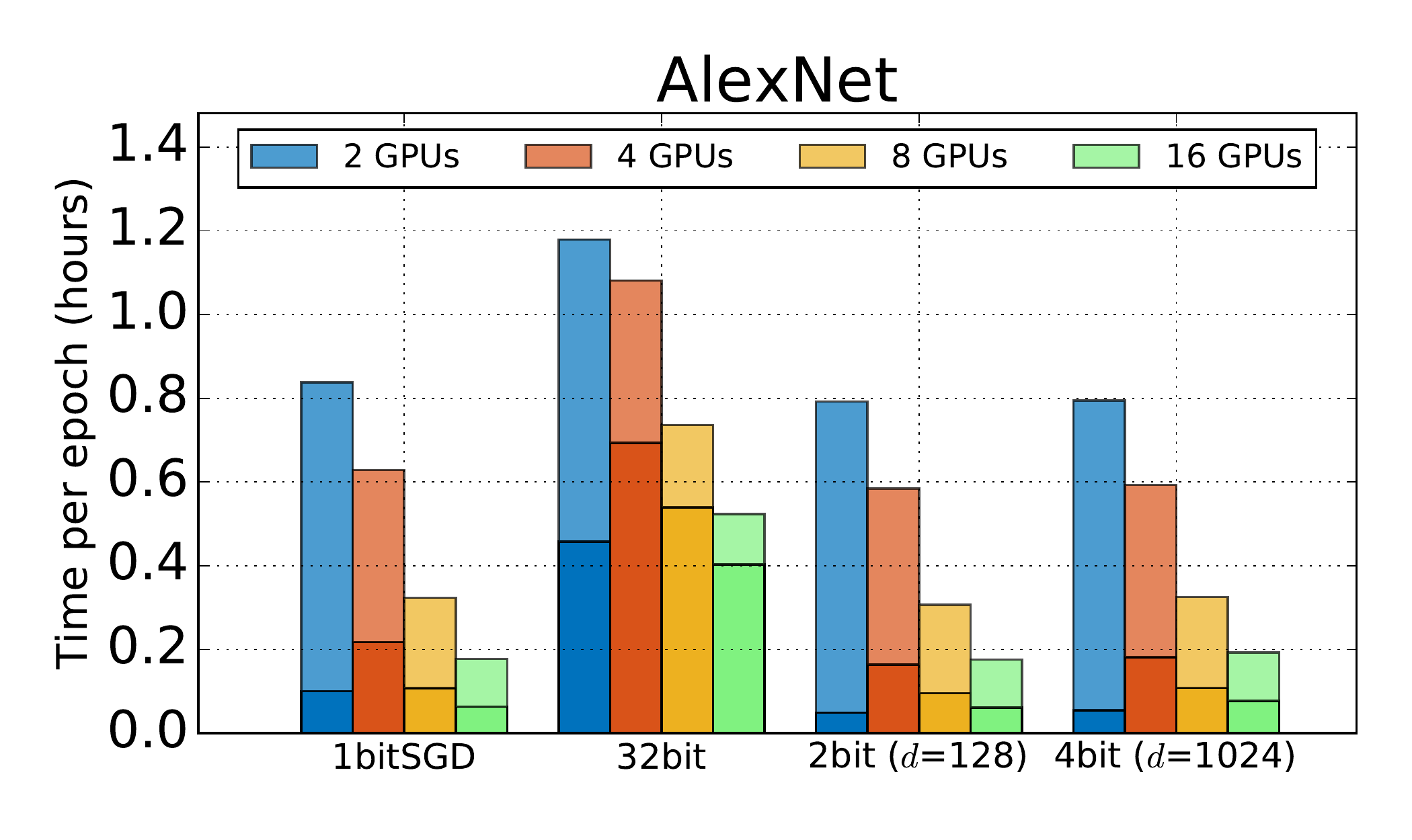}}~\subfigure
  {\includegraphics[width=.4\textwidth]{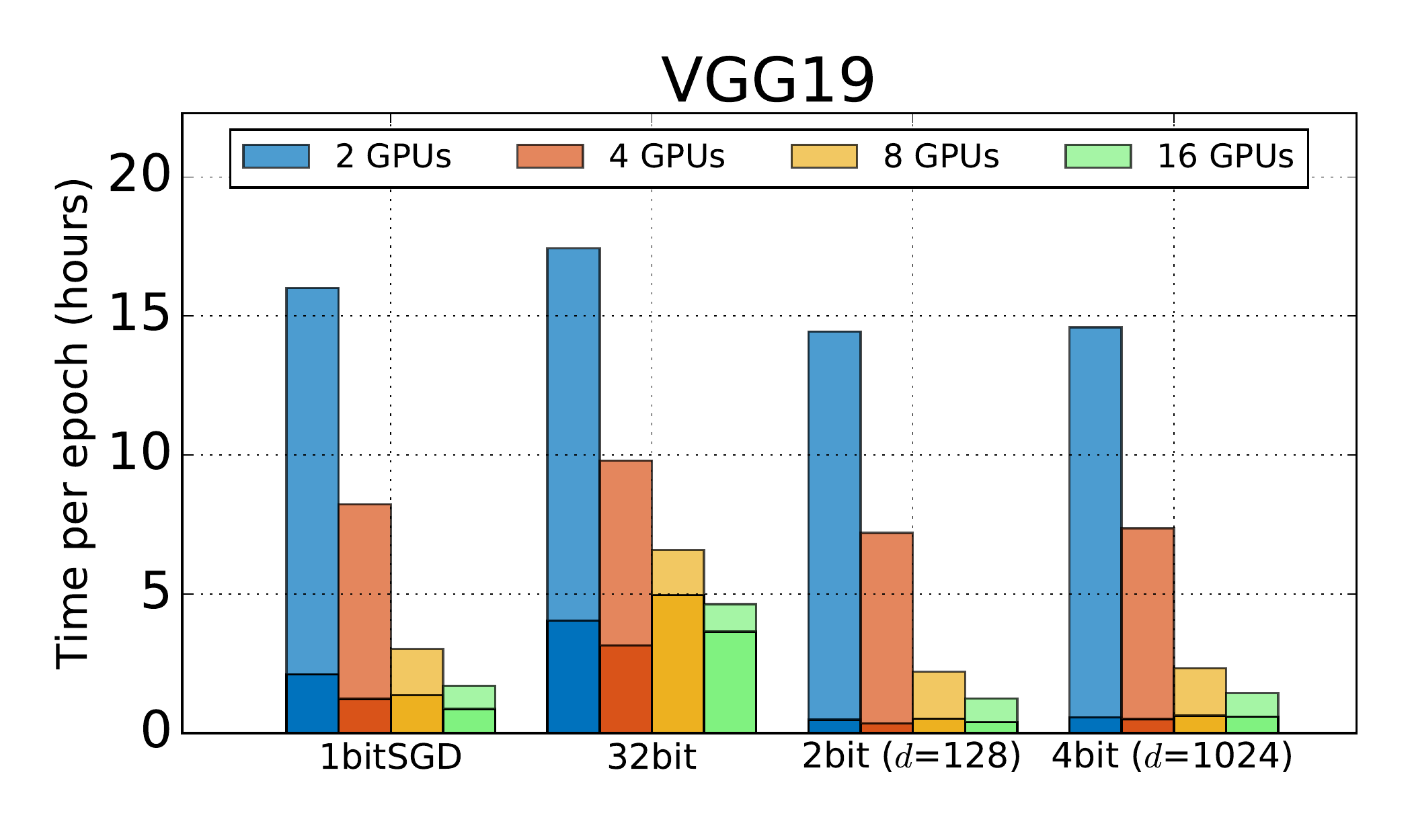}}\\
~\subfigure
  {\includegraphics[width=.4\textwidth]{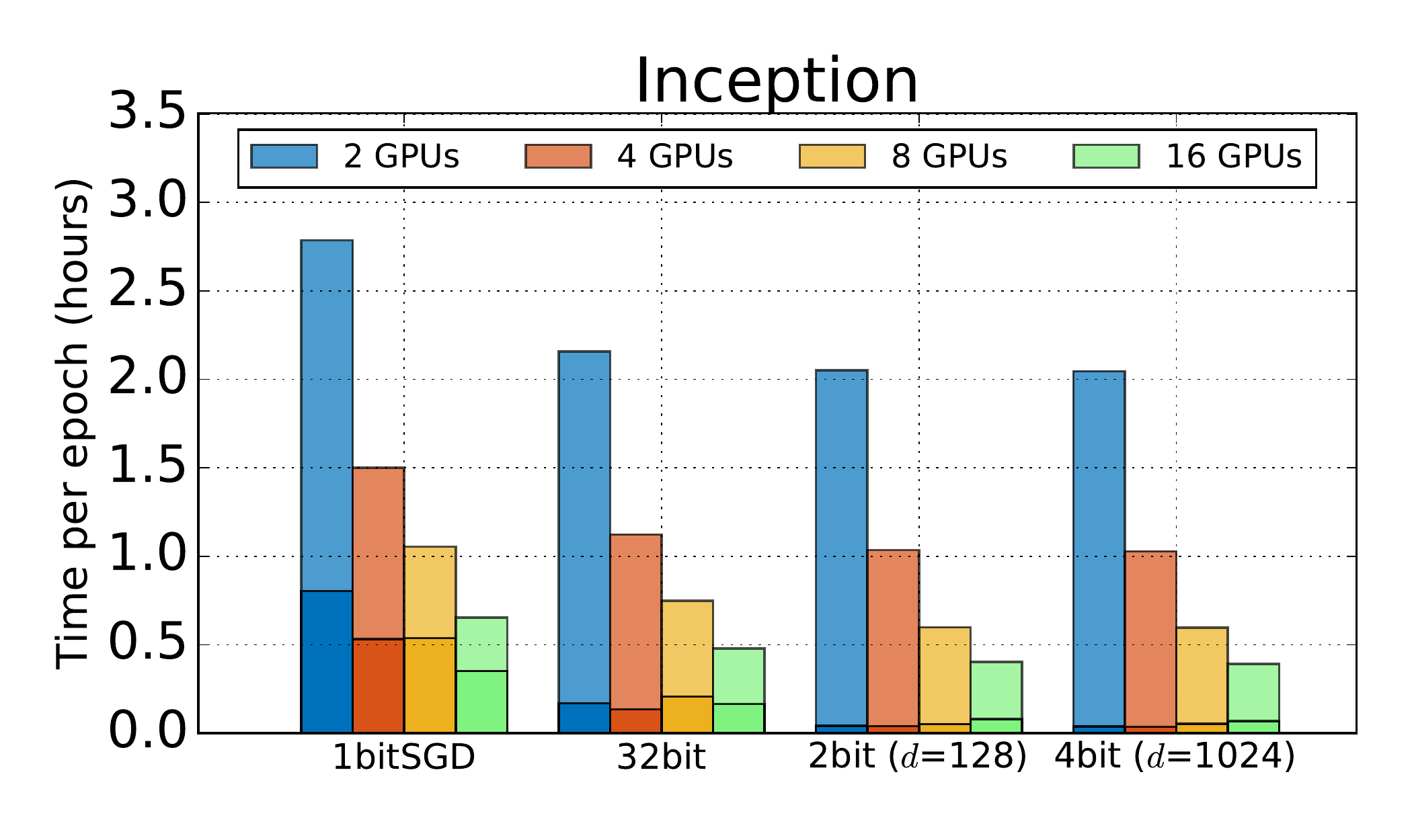}}~\subfigure
  {\includegraphics[width=.4\textwidth]{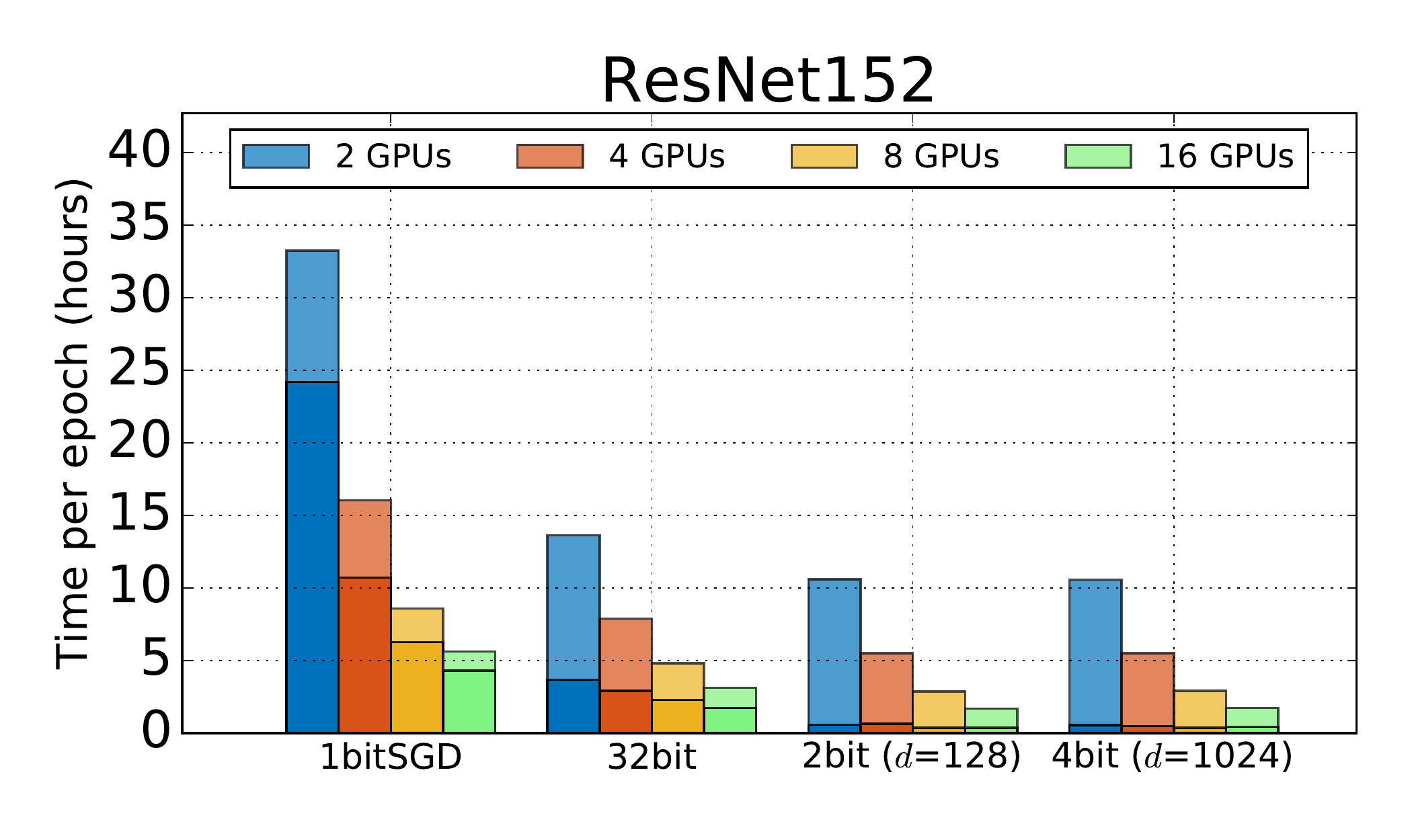}\label{fig:modelparallel}}\\
  ~\subfigure
  {\includegraphics[width=.4\textwidth]{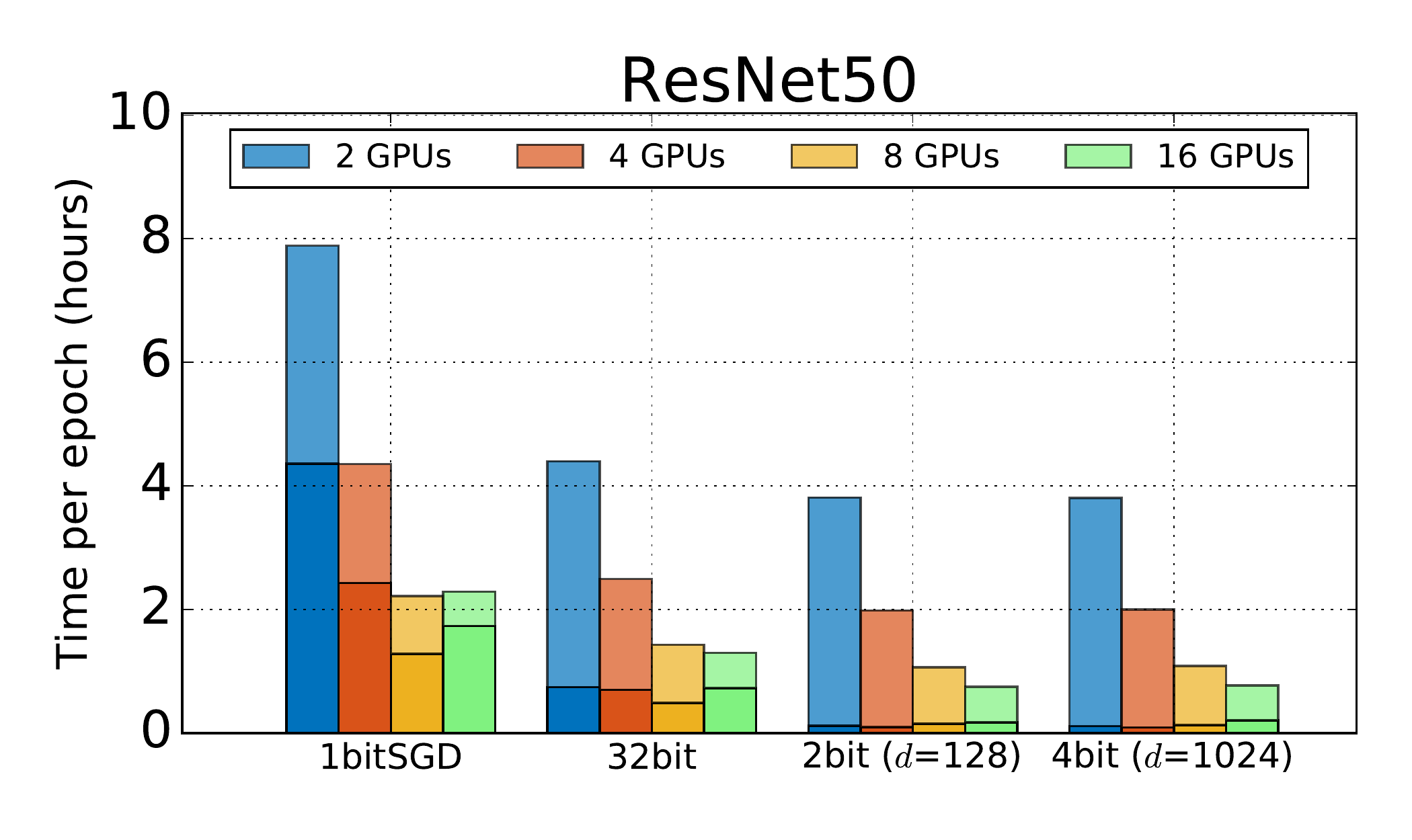}}
  \caption{Breakdown of communication versus computation for various neural networks, on $2, 4, 8, 16$ GPUs, for full 32-bit precision versus 1BitSGD versus QSGD 2-bit and 4-bit. 
  Each bar represents the total time for an epoch under standard parameters. Epoch time is broken down into \emph{communication} (bottom, solid color) and \emph{computation} (top, transparent color). Notice that, although epoch time usually diminishes as we increase parallelism, the proportion of communication cost increases.}
  \label{fig:scalability2}
 \end{center}
\end{figure*}

\paragraph{Setup.} 
We performed experiments on Amazon EC2 \text{p2.16xlarge} instances, using up to 16 NVIDIA K80 GPUs. 
Instances have GPUDirect peer-to-peer communication, but do not currently support NVIDIA NCCL extensions. We have implemented QSGD on GPUs using the Microsoft Cognitive Toolkit (CNTK)~\cite{CNTK}. This package provides efficient (MPI-based) GPU-to-GPU communication, and implements an optimized version of 1bit-SGD~\cite{OneBit}. Our code is released both as open-source and as a docker instance.   

We do not quantize small gradient matrices in QSGD, since the computational cost of quantizing small matrices significantly exceeds the reduction in communication from quantization. 
However, in all experiments, at least $99\%$ of all parameters are transmitted in quantized form. If required, we reshape matrices to fit bucket sizes. 

We execute two types of tasks: \emph{image classification} on the ILSVRC (ImageNet)~\cite{deng2009imagenet}, CIFAR-10~\cite{krizhevsky2009learning}, and MNIST~\cite{lecun1998mnist} datasets, and \emph{speech recognition} on the CMU AN4 dataset~\cite{AN4}.    
For vision, we experimented with AlexNet~\cite{krizhevsky2012imagenet}, VGG~\cite{simonyan2014very}, ResNet~\cite{he2016deep}, and Inception with Batch Normalization~\cite{ioffe2015batch} deep networks. 
For speech, we trained an LSTM network~\cite{hochreiter1997long}. See Table~\ref{tab:description}. 

We used standard sizes for the networks, with  hyper-parameters optimized for the 32bit precision variant.\footnote{Unless otherwise stated, we use the default networks and hyper-parameters available in the open-source  CNTK 2.0.}  
Full details for networks and experiments are given in the additional material. 
We increased batch size when necessary to balance communication and computation for larger GPU counts, and we employed \emph{double buffering}~\cite{OneBit} to perform communication and quantization concurrently with the computation.  Quantization usually benefits from lowering learning rates; yet, we always run the 32bit learning rate, and decrease bucket size to reduce variance if needed. 


\paragraph{Communication vs. Computation.} 
In the first set of experiments, we examine the ratio between computation and communication costs during training, for increased parallelism. 
The image classification networks are trained on ImageNet, while LSTM is trained on AN4. 
We examine the cost breakdown for these networks over a pass over the dataset (epoch). Figure~\ref{fig:scalability} gives image classification results. 
The variance of epoch times is practically negligible. 

The data leads to some interesting observations. 
First, based on the ratio of communication to computation, we can roughly split networks into \emph{communication-intensive} (AlexNet, VGG, LSTM), and \emph{computation-intensive} (Inception, ResNet). 
For both network types, the relative impact of communication \emph{increases significantly} as we increase the number of GPUs. 
Examining the breakdown for the 32-bit version, all networks could significantly benefit from reduced communication. 
For example, for AlexNet on 16 GPUs with batch size 1024, more than $80\%$ of training time is spent on communication, whereas for LSTM on 2 GPUs with batch size 256, the proportion is $71\%$ communication.\footnote{These ratios can be improved by increasing batch size. However, increasing batch size further hurts convergence and decreases accuracy, see also e.g.~\cite{FireCaffe}.}

Next, we examine the impact of QSGD  on communication and overall training time. 
(For QSGD, communication time includes time spent compressing and uncompressing gradients.)
We measured QSGD with 2-bit quantization and 64 bucket size, and 4-bit quantization and 8192 bucket size. 
The results for these two variants are similar, since the different bucket sizes mean that the 4bit version only sends $77\%$ more data than the 2-bit version (but $\sim 8\times$ less than 32-bit). These bucket sizes are chosen to ensure good convergence, but are not carefully tuned. 

On 16GPU AlexNet with batch size 1024, 4-bit QSGD reduces communication time by $4\times$, and overall epoch time by $2.5\times$. On LSTM, it reduces communication time by $6.8\times$, and overall epoch time by $2.7\times$. Runtime improvements are non-trivial for all architectures we considered.


\paragraph{Accuracy.} 
We now examine how QSGD influences accuracy and convergence rate. 
We ran AlexNet to full convergence on ImageNet, LSTM on AN4, ResNet110 on CIFAR-10, as well as 
a two-layer perceptron on MNIST. 
Results are given in Figure~\ref{fig:cc}. 

\begin{figure*}[tb]
 \begin{center}
  \subfigure[AlexNet Accuracy on ImageNet.]
  {\includegraphics[width=.4\textwidth]{images/results/AlexNet_Accuracy_vs_Time.pdf}}~\subfigure[ResNet110 Accuracy on CIFAR.]
  {\includegraphics[width=.4\textwidth]{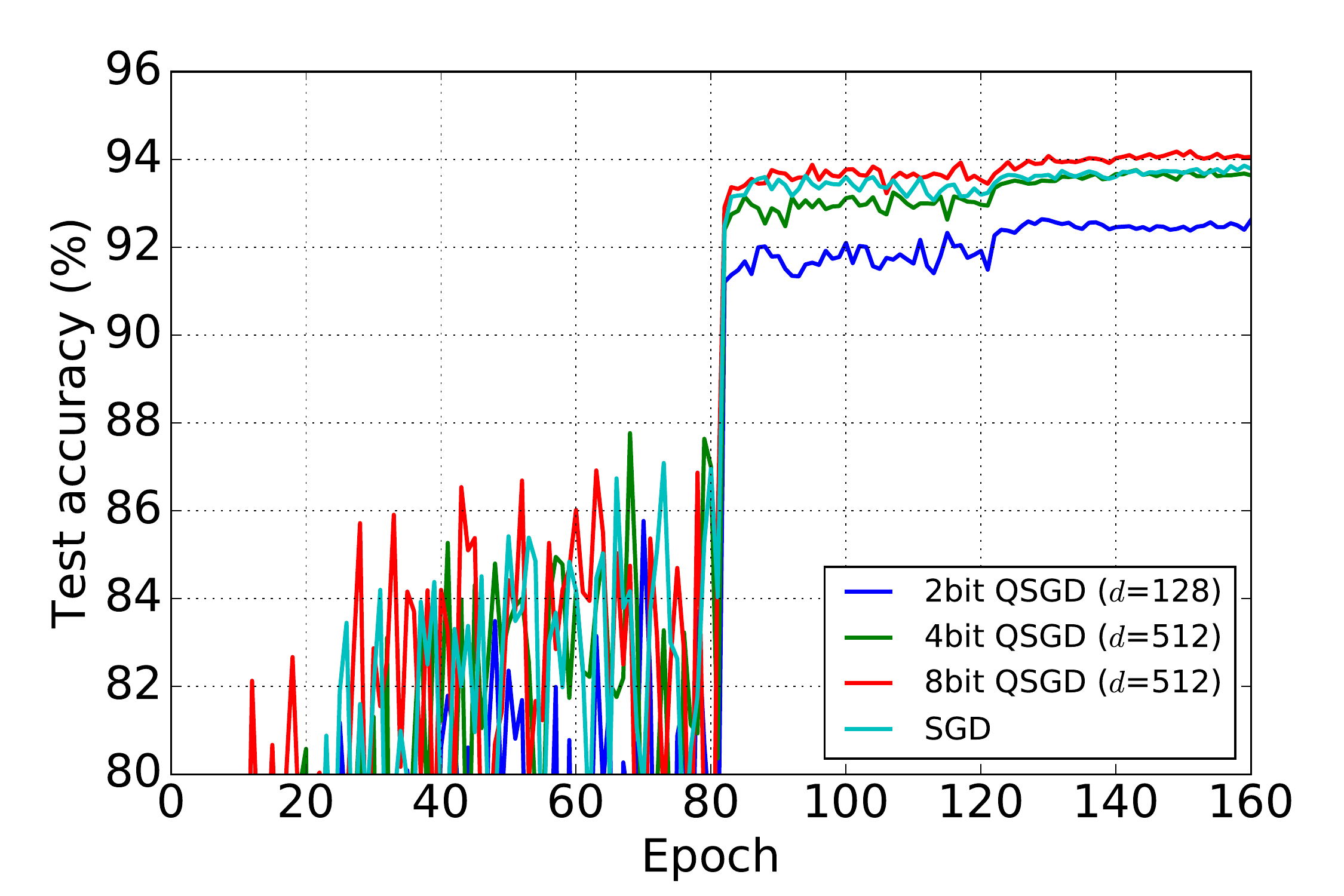}} \\ \subfigure[LSTM Accuracy versus Time.]
  {\includegraphics[width=.4\textwidth]{images/results/LSTM-time-per-loss.pdf}\label{fig:modelparallel}} 
  ~\subfigure[Two-Layer Perceptron Accuracy on MNIST.]{\includegraphics[width=.4\textwidth]{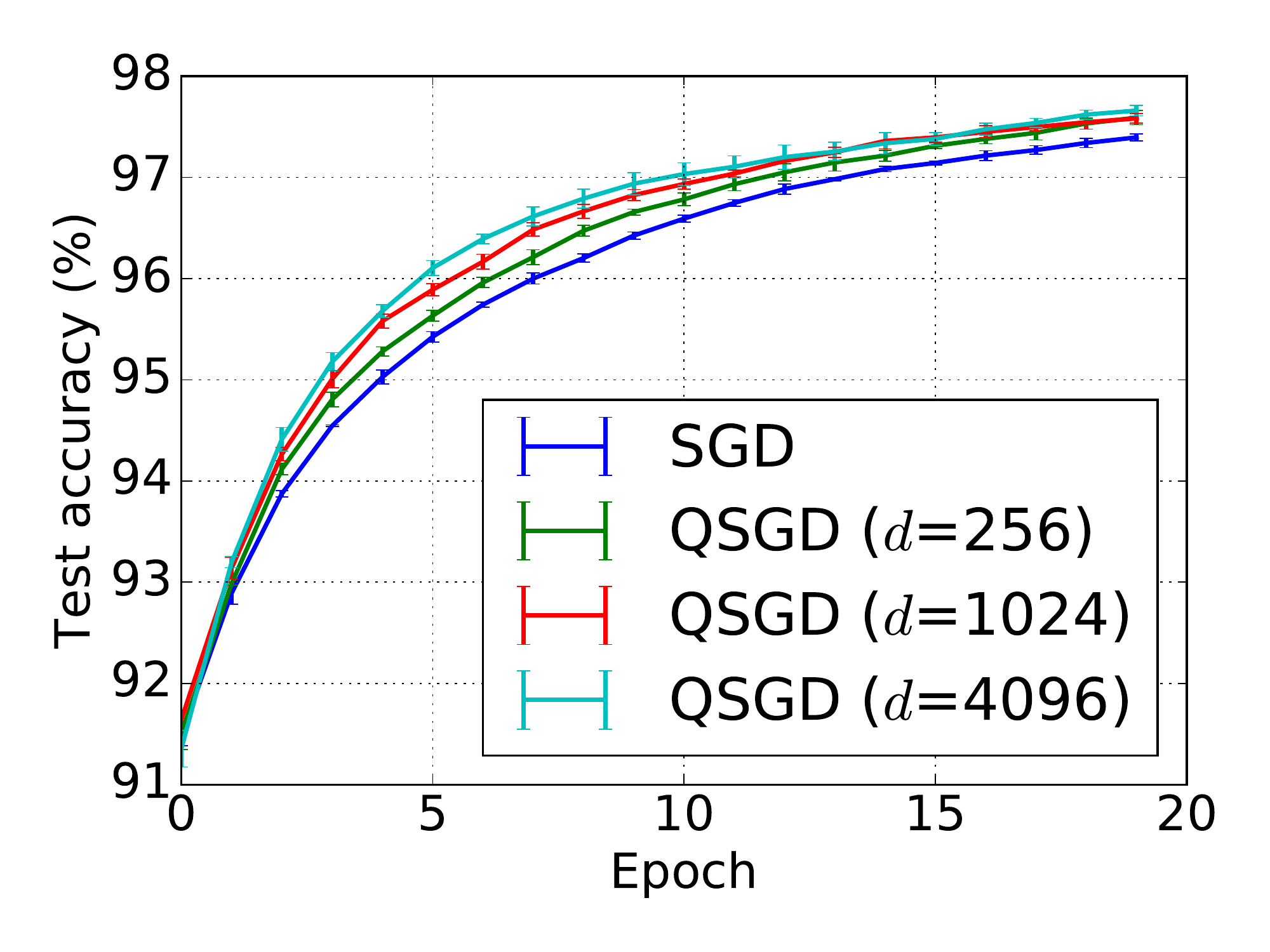}\label{fig:modelparallel}}
  \caption{Accuracy numbers for different networks. The red lines in (a) represent final 32-bit accuracy.}
  \label{fig:cc}
 \end{center}
\end{figure*}

On ImageNet using AlexNet, 4-bit QSGD with $8192$ bucket size converges to $59.22\%$ top-1 error, and $81.63\%$ top-5 error. 
The gap from the 32bit version is $0.57\%$ for top-5, and $0.68\%$ for top-1~\cite{CNTK-AlexNet}. 
QSGD with 2-bit and 64 bucket size has gap $1.73\%$ for top-1, and $1.18\%$ for top-1. 
We note that we did not tune bucket size, number of bits used, number of epochs or learning rate for this experiment. 

On AN4 using LSTMs, 2-bit QSGD has similar convergence rate and the same accuracy as 32bit. 
It is able to converge $3\times$ faster to the target accuracy with respect to full precision, thanks to reduced communication overheads. 
The 4-bit variant has the same convergence and accuracy, but is slightly slower than 2-bit (by less than $10\%$). 

On CIFAR-10, 2-bit QSGD applied to ResNet-110 drops about $1.22\%$ top-1 accuracy points. However, 4-bit QSGD converges to the same accuracy as the original, whereas 8-bit QSGD \emph{improves} accuracy by $0.33\%$. 
We observe a similar result on MNIST, where 2-bit QSGD with buckets equal to the size of hidden layers improves accuracy by $0.5\%$. 
These results are consistent
with recent work~\cite{neelakantan2015adding} noting benefits of added noise in training deep networks. 
Linear models on e.g. MNIST do not show such improvements. 

One issue we examined in more detail is which layers are more sensitive to quantization. 
It appears that quantizing \emph{convolutional layers} too aggressively (e.g., 2-bit precision) can lead to accuracy loss if not trained further. 
However, increasing precision to 4-bit or 8-bit recovers accuracy. 
This finding suggests that modern architectures for vision tasks, such as ResNet or Inception, which are almost entirely convolutional, may benefit less 
from quantization than recurrent deep networks such as LSTMs. 

\paragraph{Comparison with 1BitSGD.} 
We have also compared against the 1BitSGD algorithm of~\cite{OneBit}. Before discussing results, it is important to note some design choices made in the CNTK implementation of 1BitSGD. For objects without dynamic dimensions, the first tensor dimension is the ``row" while the rest are flattened onto ``columns." At the same time, 1BitSGD always quantizes \emph{per column}. 
In practice, this implies that  quantization is often applied to a column of very small dimension ($1$--$3$), especially in the case of networks with many convolutions. 
This has the advantage of having extremely low variance, but does not yield any communication benefits. In fact, it can hurt performance due to the cost of quantization. (By contrast, we reshape to quantize on large dimensions.)

Given this artefact, 1BitSGD is slower than even the 32bit version on heavily convolutional networks such as ResNet and Inception. 
However,  1BitSGD matches the performance of 2-bit and 4-bit QSGD on AlexNet, VGG, and LSTMs within $10\%$.  
In general, 1BitSGD attains very good accuracy (on par with 32bit), probably since the more delicate convolutional layers are not quantized.
QSGD has the advantage of being able to perform quantization \emph{on the fly}, without error accumulation: this saves memory, since we do not need to allocate an additional model copy. 


\section{Quantized Gradient Descent: Description and Analysis}
\label{sec:app-gd}

In this section, we consider the effect of lossy compression on standard (non-stochastic) gradient descent. 
Since this procedure is not data-parallel, we will first have to modify the blueprint for the iterative procedure, as described in Algorithm~\ref{algo:gd}. 
In particular, we assume that, instead of directly applying the gradient to the iterate $\vec{x}_{t + 1}$, the procedure first \emph{quantizes} the gradient, before applying it. 
This setting models a scenario where the model and the computation are performed by different machines, and we wish to reduce the communication cost of the gradient updates.

\CommentSty{\small\sf}
\DontPrintSemicolon
\setlength{\algomargin}{2em}

\begin{algorithm}[t]
{\small
 \KwData{ Parameter vector $\vec{x}$ }
 \textbf{procedure} $\mathsf{Gradient Descent}$ 
 
 \For{each iteration $t$}{
	 	 $\boxed{Q(\nabla f(\vec{x}))  \gets \lit{Quantize}(\nabla f(\vec{x})))}$ //quantize gradient \nllabel{line:encode}\;
	
	$\vec{x} \gets \vec{x} - \eta_t  Q(\nabla f(\vec{x}))$ //apply gradient
		}}
 \caption{The gradient descent algorithm with gradient encoding.}
 \label{algo:gd}
\end{algorithm}

We now give a quantization function tailored for gradient descent, prove convergence of gradient descent with quantization, and then finally bound the length of the encoding.

\paragraph{The Quantization Function.} 
We consider the following \emph{deterministic} quantization function, inspired by~\cite{OneBit}. 
For any vector $\vec{v} \in \R^n$, let $I(\vec{v})$ be the smallest set of indices of $\vec{v}$ such that 
$$\sum_{i \in I(\vec{v})} |\vec{v}_i| \geq \| \vec{v} \|.$$ 

Further,  define $Q(\vec{v})$ to be the vector
\[
Q(\vec{v})_i = \left\{ \begin{array}{ll}
         \| \vec{v} \| & \mbox{if $x \geq 0$ and $i \in I(\vec{v})$};\\
         - \| \vec{v} \| & \mbox{if $x < 0$ and $i \in I(\vec{v})$};\\
         0 & \mbox{otherwise}. \end{array} \right.
\]
Practically, we preserve the sign for each index in $I(\vec{v})$, the 2-norm of $\vec{v}$, and cancel out all remaining components of $\vec{v}$. 

\paragraph{Convergence Bound.}
We begin by proving some properties of our quantization function. 
We have the following:
\begin{lemma}
\label{lem:quant-facts}
For all $\vec{v} \in \R^n$, we have
\begin{enumerate}
\item $\vec{v}^T Q(\vec{v}) \geq \| \vec{v} \|^2$, 
\item $|I(v)| \leq \sqrt n$, and
\item $\| Q(\vec{v}) \|^2 \leq \sqrt{n} \| \vec{v} \|^2$.
\end{enumerate}
\end{lemma}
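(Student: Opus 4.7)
Plan: The three claims should be tackled in the order 1, 3, 2, with Claim 2 being the only piece of substance; Claim 3 will fall out of Claim 2 in a single line.

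For Claim 1, by construction the only nonzero entries of $Q(\vec{v})$ lie in $I(\vec{v})$ and equal $\mathrm{sgn}(v_i)\|\vec{v}\|$, so
\[
\vec{v}^{\top} Q(\vec{v}) \;=\; \|\vec{v}\| \sum_{i \in I(\vec{v})} |v_i| \;\geq\; \|\vec{v}\| \cdot \|\vec{v}\| \;=\; \|\vec{v}\|^{2},
\]
by the defining property $\sum_{i \in I(\vec{v})} |v_i| \geq \|\vec{v}\|$. For Claim 3, every nonzero coordinate of $Q(\vec{v})$ has magnitude $\|\vec{v}\|$, so $\|Q(\vec{v})\|^{2} = |I(\vec{v})| \cdot \|\vec{v}\|^{2}$, and Claim 2 bounds the prefactor by $\sqrt{n}$.

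The plan for Claim 2 is this. Relabel so $|v_1| \geq |v_2| \geq \cdots \geq |v_n|$; then $I(\vec{v}) = \{1, \ldots, k\}$ for the least $k$ making the partial sum of $|v_i|$ reach $\|\vec{v}\|$. It therefore suffices to show that $k^{\star} := \lceil \sqrt{n} \rceil$ always works, i.e.
\[
\Bigl(\textstyle\sum_{j=1}^{k^{\star}} |v_j|\Bigr)^{2} \;\geq\; \|\vec{v}\|^{2} \;=\; \sum_{j=1}^{k^{\star}} v_j^{2} \;+\; \sum_{j>k^{\star}} v_j^{2}.
\]
Expanding the square on the left cancels the first sum on the right and reduces the claim to
\[
2 \sum_{1 \leq i < j \leq k^{\star}} |v_i v_j| \;\geq\; \sum_{j > k^{\star}} v_j^{2}.
\]
The sorted order now supplies two matched bounds pinned to the threshold value $v_{k^{\star}}^{2}$: for $i < j \leq k^{\star}$ one has $|v_i v_j| \geq v_{k^{\star}}^{2}$, so the left-hand side is at least $k^{\star}(k^{\star}-1) v_{k^{\star}}^{2}$; for $j > k^{\star}$ one has $v_j^{2} \leq v_{k^{\star}}^{2}$, so the right-hand side is at most $(n - k^{\star}) v_{k^{\star}}^{2}$. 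What remains is $k^{\star}(k^{\star}-1) \geq n - k^{\star}$, which is equivalent to $(k^{\star})^{2} \geq n$ and holds by the choice $k^{\star} = \lceil \sqrt{n} \rceil$.

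The main obstacle I expect is Claim 2: the first instinct is to use Cauchy--Schwarz to compare $\|\vec{v}\|_{1}$ with $\|\vec{v}\|_{2}$, which only yields the weak bound $|I(\vec{v})| \leq n$, or to apply Cauchy--Schwarz to the top-$k^{\star}$ sub-vector, which goes in the wrong direction entirely. The trick is instead to use the sorted order to produce a uniform lower bound on the cross-terms $|v_i v_j|$ and a matched uniform upper bound on the tail squares, both in terms of $v_{k^{\star}}^{2}$, so that $v_{k^{\star}}^{2}$ cancels and only the cardinality counting $k^{\star}(k^{\star}-1)$ versus $n - k^{\star}$ survives. One minor bookkeeping point: the argument actually yields $|I(\vec{v})| \leq \lceil \sqrt{n} \rceil$, which is what the statement $|I(\vec{v})| \leq \sqrt{n}$ should be understood to mean for general $n$.
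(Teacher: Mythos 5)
Your proposal is correct and follows essentially the same route as the paper: expand the square of the top-$k$ partial sum, lower-bound the cross-terms and upper-bound the tail squares by a common pivot value from the sorted order, and reduce to the cardinality inequality $k^{2}\geq n$ (the paper pivots on $v_{D+1}^{2}$ rather than your $v_{k^{\star}}^{2}$, an immaterial variation). Your remark that the bound is really $|I(\vec{v})|\leq\lceil\sqrt{n}\rceil$ is a fair tidying of a point the paper glosses over by setting $D=\sqrt{n}$.
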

\begin{proof}
For the first claim, observe that $\vec{v}^T Q(\vec{v}) = \| \vec{v} \| \sum_{i \in I(\vec{v})} |\vec{v}_i| \geq \| \vec{v} \|^2$.

We now prove the second claim.
Let $\vec{v} = (v_1, \ldots, v_n)$, and without loss of generality, assume that $|v_i| \geq |v_{i + 1}|$ for all $i = 1, \ldots, n - 1$, so that the coordinates are in decreasing order.
Then $I(\vec{v}) = \{1, \ldots, D\}$ for some $D$. 

We show that if $D \geq \sqrt{n}$ then $\sum_{i = 1}^D |v_i| \geq \| \vec{v} \|$, which shows that $|I(\vec{v})| \leq \sqrt{n}$.
Indeed, we have that 
\begin{align*}
\left( \sum_{i = 1}^D |v_i| \right)^2 &= \sum_{i = 1}^D v_i^2 + \sum_{\substack{i \neq j \\ i, j \leq D}} |v_i| |v_j| \\
&\geq \sum_{i = 1}^D v_i^2 + (D^2 - D) v_{D + 1}^2 \; .
\end{align*}
On the other hand, we have
\begin{align*}
\| \vec{v} \|^2 &= \sum_{i = 1}^D v_i^2 + \sum_{D + 1}^n v_i^2 \\
&\leq \sum_{i = 1}^D v_i^2 + (n - D) v_{D + 1}^2 \
\end{align*}

and so we see that if $D = \sqrt{n}$, we must have $\left( \sum_{i = 1}^D |v_i| \right)^2 \geq \| \vec{v} \|^2$, as claimed.

For the third claim, observe that $\| Q(\vec{v}) \|^2 = \| \vec{v} \|^2 \cdot |I (\vec{v})|$; thus the claim follows from the previous upper bound on the cardinality of $I (\vec{v})$.
\end{proof}
To establish convergence of the quantized method, we prove the following theorem. 
\begin{theorem}
Let $f: \R^n \to \R$ be a $\ell$-strongly convex, $L$-smooth function, with global minimizer $\vec{x}^\ast$, and condition number $\kappa = L / \ell$.
Then, for all step sizes $\eta$ satisfying $\eta \leq O \left( \frac{\ell}{L^2 \sqrt{n}}\right),$ for all $T \geq 1$, and all initial points $\vec{x}_0$, we have
\[
f\left( \vec{x}_T \right) - f \left( \vec{x}^\ast \right) \leq \exp \left( - \Omega \left( \frac{1}{\kappa^2 \sqrt{n}} \right) T \right) \left( f\left( \vec{x}_0 \right) - f \left( \vec{x}^\ast \right) \right) \; . 
\]
\end{theorem}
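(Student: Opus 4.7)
The plan is to combine the per-step descent estimate from $L$-smoothness with the two key quantization properties from Lemma \ref{lem:quant-facts}, and then close the recursion using strong convexity in the standard way (Polyak--\L{}ojasiewicz style).

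First I would fix an iteration $t$ and write the update as $\vec{x}_{t+1} = \vec{x}_t - \eta\, Q(\nabla f(\vec{x}_t))$. Applying $L$-smoothness of $f$ gives
\begin{equation*}
f(\vec{x}_{t+1}) \leq f(\vec{x}_t) - \eta\, \nabla f(\vec{x}_t)^T Q(\nabla f(\vec{x}_t)) + \tfrac{L\eta^2}{2}\,\|Q(\nabla f(\vec{x}_t))\|^2.
\end{equation*}
Now I invoke the two crucial quantization bounds. Property (1) of Lemma \ref{lem:quant-facts} yields $\nabla f(\vec{x}_t)^T Q(\nabla f(\vec{x}_t)) \geq \|\nabla f(\vec{x}_t)\|^2$, so the linear term is at least $\eta \|\nabla f(\vec{x}_t)\|^2$. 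Property (3) gives $\|Q(\nabla f(\vec{x}_t))\|^2 \leq \sqrt{n}\,\|\nabla f(\vec{x}_t)\|^2$, so the quadratic error term is at most $\tfrac{L\eta^2\sqrt{n}}{2}\|\nabla f(\vec{x}_t)\|^2$. Combining,
\begin{equation*}
f(\vec{x}_{t+1}) - f(\vec{x}_t) \leq -\eta\Bigl(1 - \tfrac{L\eta\sqrt{n}}{2}\Bigr)\|\nabla f(\vec{x}_t)\|^2.
\end{equation*}

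The second step is to pick $\eta$ small enough that the parenthesis is a constant, and simultaneously to extract a contraction factor. Choosing $\eta = c\,\ell/(L^2\sqrt{n})$ for a sufficiently small absolute constant $c$ ensures $L\eta\sqrt{n}/2 \leq 1/2$ (since $\ell \leq L$), so
\begin{equation*}
f(\vec{x}_{t+1}) - f(\vec{x}_t) \leq -\tfrac{\eta}{2}\,\|\nabla f(\vec{x}_t)\|^2.
\end{equation*}
Now I apply the standard PL inequality from $\ell$-strong convexity, $\|\nabla f(\vec{x}_t)\|^2 \geq 2\ell(f(\vec{x}_t) - f(\vec{x}^{\ast}))$, to obtain
\begin{equation*}
f(\vec{x}_{t+1}) - f(\vec{x}^{\ast}) \leq (1 - \eta\ell)\bigl(f(\vec{x}_t) - f(\vec{x}^{\ast})\bigr).
\end{equation*}
With the chosen $\eta$ this contraction rate is $1 - \Omega(\ell^2/(L^2\sqrt{n})) = 1 - \Omega(1/(\kappa^2\sqrt{n}))$. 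Iterating the recursion $T$ times and using $1-x \leq e^{-x}$ gives exactly the claimed exponential rate.

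The only real subtlety is making sure the two quantization bounds compose correctly, in particular the fact that property (3) costs us a factor of $\sqrt{n}$ in the quadratic term of smoothness, which is why the admissible step size must shrink by a $\sqrt{n}$ factor relative to vanilla gradient descent and why the contraction rate picks up the extra $1/\sqrt{n}$; strong convexity then takes care of the $1/\kappa^2$ part exactly as in the unquantized case. No randomness or expectation arguments are required since $Q$ is deterministic.
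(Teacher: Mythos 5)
Your proposal is correct, and it rests on the same two quantization facts the paper uses (Lemma \ref{lem:quant-facts}, parts (1) and (3)) and closes the same recursion $\delta_{t+1}\leq\bigl(1-\Omega(1/(\kappa^{2}\sqrt{n}))\bigr)\delta_t$, but the per-step descent estimate is obtained by a genuinely different and cleaner route. The paper does not invoke the quadratic descent lemma: it starts from convexity evaluated at $\vec{x}_{t+1}$, namely $f(\vec{x}_{t+1})-f(\vec{x}_t)\leq\nabla f(\vec{x}_{t+1})^{T}(\vec{x}_{t+1}-\vec{x}_t)$, splits off the cross term $(\nabla f(\vec{x}_{t+1})-\nabla f(\vec{x}_t))^{T}(\vec{x}_{t+1}-\vec{x}_t)$, and controls it with Cauchy--Schwarz plus the Lipschitz property of the gradient; it also routes the linear term through an intermediate bound (its Lemma \ref{lem:basics}) of the form $\nabla f(\vec{x})^{T}Q(\nabla f(\vec{x}))\gtrsim\ell\,(f(\vec{x})-f(\vec{x}^{\ast}))$, derived via convexity and Cauchy--Schwarz, and separately uses $\tfrac{1}{2L}\|\nabla f\|^{2}\leq f-f^{\ast}$ to convert the quadratic term. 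You instead apply the standard smoothness upper bound $f(\vec{x}_{t+1})\leq f(\vec{x}_t)-\eta\,\nabla f(\vec{x}_t)^{T}Q(\nabla f(\vec{x}_t))+\tfrac{L\eta^{2}}{2}\|Q(\nabla f(\vec{x}_t))\|^{2}$ and only at the end convert $\|\nabla f(\vec{x}_t)\|^{2}$ to the function gap via the PL inequality $\|\nabla f\|^{2}\geq 2\ell(f-f^{\ast})$. Your version is shorter, gives slightly sharper constants (the paper's intermediate lemma loses a factor of two and its cross-term bound an extra factor of $L$ relative to the tight descent-lemma constant), and makes the role of the step-size restriction transparent; the paper's version shows the same bound can be reached using only first-order convexity inequalities plus Lipschitz continuity of the gradient, at the price of a longer chain. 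One shared caveat, equally present in the paper: the stated rate $\exp(-\Omega(1/(\kappa^{2}\sqrt{n}))\,T)$ really requires $\eta=\Theta(\ell/(L^{2}\sqrt{n}))$ rather than merely $\eta\leq O(\ell/(L^{2}\sqrt{n}))$, and your explicit choice $\eta=c\,\ell/(L^{2}\sqrt{n})$ handles this in the intended way.
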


\begin{proof}
We first establish the following two properties:
\begin{lemma}
\label{lem:basics}
Let $f$ be $\ell$-strongly convex and $L$-smooth.
Then, 
\begin{enumerate}
\item 
for all $\vec{x} \in \R^n$,
\[
\frac{\ell}{2} \| \vec{x} - \vec{x}^\ast \|^2 \leq f(\vec{x}) - f(\vec{x}^\ast ) \leq \frac{L}{2} \| \vec{x} - \vec{x}^\ast \|^2 \; .
\]
\item
for all $\vec{x} \in \R^n$,
\[
\nabla f(\vec{x})^T Q(\nabla f(\vec{x})) \geq \ell (f(\vec{x}) - f(\vec{x}^\ast)) \; .
\]
\end{enumerate}
\end{lemma}
\begin{proof}
The first property follows directly from the definitions of strong convexity and smoothness.
We now show the second property.
If $\vec{x} = \vec{x}^\ast$ the property trivially holds so assume that this does not happen.
By Lemma \ref{lem:quant-facts}, we have
\begin{align*}
\nabla f(\vec{x})^T Q(\nabla f(\vec{x})) \geq  \| \nabla f(\vec{x}) \|^2 \; .
\end{align*}

We then have
\begin{align*}
f(\vec{x}) - f(\vec{x}^\ast) &\leq \nabla f(\vec{x})^T (\vec{x} - \vec{x}^\ast) \leq \left\| \nabla f(\vec{x}) \right\| \| \vec{x} - \vec{x}^\ast \| \; ,
\end{align*}
where the first inequality follows from convexity, and the second from Cauchy-Schwartz.
From strong convexity we then have that $\frac{\ell}{2} \| \vec{x} - \vec{x}^\ast \|^2 \leq \nabla f^T(\vec{x}) (\vec{x} - \vec{x}^\ast)$ from which we get that
\begin{align*}
\frac{\ell}{2} \| \vec{x} - \vec{x}^\ast \| &\leq  \nabla f(\vec{x})^T \frac{\vec{x} - \vec{x}^\ast}{\| \vec{x} - \vec{x}^\ast \|} \leq \left\|\nabla f(\vec{x})  \right\| \; ,
\end{align*}
where the last line follows since from self-duality of the 2-norm, we know that for all vectors $\vec{v} \in \R^n$, we have $\| \vec{v} \|_2 = \sup_{\| \vec{u} \| = 1} \vec{v}^T \vec{u}$.
Putting these two things together yields that
\begin{align*}
f(\vec{x}) - f(\vec{x}^\ast) &\leq \frac{2}{\ell} \left\| \nabla f(\vec{x}) \right\|^2 \leq \frac{2}{\ell}  \nabla f(\vec{x})^T Q(\nabla f(\vec{x})) \; ,
\end{align*}
as claimed.
\end{proof}
With all this in place, we can now complete the proof of the theorem.
Fix $t \geq 0$.
By applying the lemma, we have:
\begin{align*}
\nabla f(\vec{x}_t)^T \left( \vec{x}_{t + 1} - \vec{x}_{t} \right) &= - \eta \nabla f(\vec{x}_t)^T Q(\nabla f(\vec{x}_t)) \leq - \eta \frac{\ell}{2} (f(\vec{x}) - f(\vec{x}^\ast)) \; . \numberthis \label{eq:delxt-to-x}
\end{align*}
Moreover, observe that, from standard properties of smooth functions~\cite{Bubeck}, we have
\begin{equation}
\label{eq:nabla-to-f}
\frac{1}{2L} \| \nabla f(\vec{x}) \|^2 \leq f(\vec{x}) - f(\vec{x}^\ast) \; .
\end{equation}
Thus, we obtain the following chain of inequalities: 
\begin{align*}
f(\vec{x}_{t + 1}) - f(\vec{x}_t) &\stackrel{(a)}{\leq} \nabla f(\vec{x}_{t + 1})^T (\vec{x}_{t + 1} - \vec{x}_t) \\
&= \nabla f(\vec{x}_t)^T (\vec{x}_{t + 1} - \vec{x}_t) + \left( \nabla f(\vec{x}_{t + 1}) - \nabla f(\vec{x}_t) \right)^T (\vec{x}_{t + 1} - \vec{x}_t) \\
&\stackrel{(b)}{\leq} - \eta \frac{\ell}{2} (f(\vec{x}_t) - f(\vec{x}^\ast))  + \left\|  \nabla f(\vec{x}_{t + 1}) - \nabla f(\vec{x}_t) \right\| \left\| \vec{x}_{t + 1} - \vec{x}_t \right\| \\ 
&\stackrel{(c)}{\leq}- \eta \frac{\ell}{2} (f(\vec{x}_t) - f(\vec{x}^\ast)) + L  \| \vec{x}_{t + 1} - \vec{x}_t \|^2 \\
&= - \eta \frac{\ell}{2} (f(\vec{x}_t) - f(\vec{x}^\ast)) + \eta^2 L  \| Q(\nabla f(x_t)) \|^2 \\
&\stackrel{(d)}{\leq} - \eta \frac{\ell}{2} (f(\vec{x}_t) - f(\vec{x}^\ast)) + \eta^2 L \sqrt{n} \| \nabla f(\vec{x}_t) \|_2^2 \\
&\stackrel{(e)}{\leq} - \eta \frac{\ell}{2} (f(\vec{x}_t) - f(\vec{x}^\ast)) + \eta^2 2 L^2 \sqrt{n} (f(\vec{x_t}) - f(\vec{x}^\ast))\\
&= \left(  - \eta \frac{\ell}{2} + 2 \eta^2 L^2 \sqrt{n} \right) (f(\vec{x_t}) - f(\vec{x}^\ast)) \; , 
\end{align*}
where (a) follows from the convexity of $f$, (b) follows from Equation \ref{eq:delxt-to-x} and the Cauchy-Schwarz inequality, (c) follows from the $L$-smoothness of $f$, (d) follows from Lemma \ref{lem:quant-facts}, and (e) follows from Equation \ref{eq:nabla-to-f}.
By our choice of $\eta$, we know that the RHS of Equation \ref{eq:recurrence1} is negative.
Hence, by Lemma \ref{lem:basics} and the definition of $\eta$, we have
\begin{equation}
\label{eq:recurrence1}
f(\vec{x}_{t + 1}) - f(\vec{x}_t) \leq - \Omega \left( \frac{1}{\kappa^2 \sqrt{n}} \right) \left( f\left( \vec{x}_T \right) - f \left( \vec{x}^\ast \right) \right) \; .
\end{equation}

Letting $\delta_t = f(\vec{x}_t) - f(\vec{x}^\ast)$, and observing that $f(\vec{x}_{t + 1}) - f(\vec{x}_t) = \delta_{t + 1} - \delta_t$, we see that Equation \ref{eq:recurrence1} is equivalent to the statement that
\[
\delta_{t + 1} \leq \left( 1 - \Omega \left( \frac{1}{\kappa^2 \sqrt{n}} \right) \right) \delta_t \; .
\]
Thus altogether we have
\begin{align*}
\delta_T &\leq  \left( 1 - \Omega \left( \frac{1}{\kappa^2 \sqrt{n}} \right) \right)^T \delta_0 \\
&\leq \exp \left( - \Omega \left( \frac{1}{\kappa^2 \sqrt{n}} \right) T \right) \delta_0 \; , 
\end{align*}
as claimed.
\end{proof}

\paragraph{Encoding Length.} We obtain the following:

\begin{theorem}
\label{thm:encoding-gd}
	Let $\vec{v} \in \R^n$. Then 
	$$
		|\Comp(Q(\vec{v}))| \leq \sqrt{n}(\log(n) + 1+\log(e)) + F.
	$$
\end{theorem}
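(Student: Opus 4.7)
My plan is to observe that the deterministic quantization $Q(\vec{v})$ is completely described by three pieces: the scalar $\|\vec{v}\|$, the support set $I(\vec{v})\subseteq[n]$, and the signs $\sgn{v_i}$ for $i\in I(\vec{v})$. So it suffices to bound the cost of encoding each of these parts. The norm uses exactly $F$ bits by assumption, the signs use one bit per element of $I(\vec{v})$, and the main work is an efficient encoding of the subset $I(\vec{v})$.

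The key structural input is the cardinality bound from Lemma~\ref{lem:quant-facts}, namely $k:=|I(\vec{v})|\le\sqrt{n}$. I would encode $I(\vec{v})$ by first writing the integer $k$ using a self-delimiting prefix (any scheme using $O(\log k)$ bits, e.g.\ Elias $\omega$, suffices and gets absorbed), and then encoding the particular $k$-subset of $[n]$ using $\lceil\log\binom{n}{k}\rceil$ bits via a bijection with $\{1,\dots,\binom{n}{k}\}$. The arithmetic step is the standard estimate
\[
\binom{n}{k}\;\le\;\frac{n^k}{k!}\;\le\;\Bigl(\frac{ne}{k}\Bigr)^k,
\]
which follows from Stirling's inequality $k!\ge(k/e)^k$. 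Taking logs yields $\log\binom{n}{k}\le k\log(ne/k)=k(\log n+\log e-\log k)\le k(\log n+\log e)$, where the last step uses $\log k\ge0$ for $k\ge 1$.

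Putting the pieces together, the total code length is at most
\[
F\;+\;\underbrace{k}_{\text{signs}}\;+\;\underbrace{k(\log n+\log e)}_{\text{subset}}\;+\;O(\log k)\;=\;F\;+\;k(\log n+1+\log e)\;+\;O(\log k),
\]
and monotonicity in $k$ together with $k\le\sqrt{n}$ upgrades this to $F+\sqrt{n}(\log n+1+\log e)+O(\log n)$. The $O(\log n)$ overhead from writing down $k$ itself is the only annoyance; I would absorb it by replacing the naive size-prefix with a slightly sharper device (e.g.\ implicitly pad $I(\vec{v})$ out to a fixed size $\lceil\sqrt{n}\rceil$ using a designated ``null'' position symbol, or choose a subset encoding that is self-delimiting without a separate prefix), which removes that additive term and yields the clean bound $\sqrt{n}(\log n+1+\log e)+F$.

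The only real obstacle is handling the prefix for $k$ cleanly so that the inequality is stated as an equality up to the displayed constants rather than ``plus lower order terms.'' Everything else is either a definition-chase or the one-line Stirling estimate above; no randomness or expectations enter because $Q$ here is deterministic and $k$ is deterministically bounded by $\sqrt{n}$.
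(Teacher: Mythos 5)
Your proposal is correct and takes essentially the intended route: the paper states this theorem without spelling out a proof, but the argument it relies on is exactly your decomposition into the $F$-bit norm, the support $I(\vec{v})$, and one sign bit per support element, combined with the cardinality bound $|I(\vec{v})|\le\sqrt{n}$ from Lemma~\ref{lem:quant-facts} and the estimate $\binom{n}{k}\le(en/k)^k$, giving $F+k(\log n+\log e)+k\le F+\sqrt{n}(\log n+1+\log e)$. Your worry about the self-delimiting prefix for $k$ is handled correctly as well, since the $-k\log k$ slack you discard (or a padded fixed-size support encoding) comfortably absorbs that $O(\log k)$ overhead.
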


\end{document}